\RequirePackage[l2tabu, orthodox]{nag}
\documentclass[12pt]{article}
\usepackage{setspace}
\usepackage{makecell}
\newcolumntype{M}[1]{>{\centering\arraybackslash}m{#1}}
\usepackage[T1]{fontenc}


\usepackage{fouriernc}

\usepackage{amsmath}
\usepackage{amsthm,lipsum}
\makeatletter
\def\thm@space@setup{%
  \thm@preskip=10pt plus 4pt minus 6pt
  \thm@postskip=\thm@preskip 
}
\makeatother

\usepackage[scale=0.92]{tgschola}
\usepackage[scaled=0.88]{PTSans}
\usepackage{scalefnt,letltxmacro}
\LetLtxMacro{\oldtextsc}{\textsc}
\renewcommand{\textsc}[1]{\oldtextsc{\scalefont{1.25}#1}}
\usepackage{inconsolata}


\usepackage[ttdefault=true]{AnonymousPro}





\usepackage{amssymb}
\usepackage{mathbbol}
\usepackage{PTSans}
\usepackage{inconsolata}

\usepackage[usenames,dvipsnames]{xcolor}
\definecolor{shadecolor}{gray}{0.9}

\usepackage[english]{babel}
\usepackage[parfill]{parskip}
\usepackage{afterpage}
\usepackage{framed}
\usepackage{nicefrac}

%
{\endMakeFramed}

\DeclareRobustCommand{\parhead}[1]{\textbf{#1}~}

\usepackage{lineno}

\usepackage{ragged2e}

\usepackage{marginnote}


\newcounter{parcount}



\usepackage{graphicx}
\usepackage[labelfont=bf]{caption}
\usepackage[format=hang]{subcaption}
\usepackage{wrapfig}

\usepackage{booktabs}
\usepackage{multirow}
\usepackage{longtable}

\usepackage{natbib}
\usepackage{bibunits}

\usepackage[algoruled]{algorithm2e}
\usepackage{listings}
\usepackage{fancyvrb}
\fvset{fontsize=\normalsize}

\SetKwComment{Comment}{$\triangleright$\ }{}


\usepackage[colorlinks,linktoc=all]{hyperref}
\usepackage[all]{hypcap}
\hypersetup{citecolor=Violet}
\hypersetup{linkcolor=black}
\hypersetup{urlcolor=MidnightBlue}

\usepackage[nameinlink]{cleveref}

\usepackage[acronym,smallcaps,nowarn]{glossaries}



\usepackage{listings}
\usepackage{lstbayes}
\lstset{language=C++,
  keywordstyle=\color{MidnightBlue}\bfseries,
  keywordstyle=[2]\color{BrickRed}\bfseries,
  keywordstyle=[3]\color{Violet}
}
\lstdefinestyle{mystyle}{
    commentstyle=\color{OliveGreen},
    numberstyle=\tiny\color{black!60},
    stringstyle=\color{BrickRed},
    basicstyle=\ttfamily\scriptsize,
    breakatwhitespace=false,
    breaklines=true,
    captionpos=b,
    keepspaces=true,
    numbers=none,
    numbersep=5pt,
    showspaces=false,
    showstringspaces=false,
    showtabs=false,
    tabsize=2
}
\lstset{style=mystyle}

\usepackage{enumitem}

\usepackage{centernot}

\usepackage{pifont}
\newcommand{\cmark}{\ding{51}}%
\newcommand{\xmark}{\ding{55}}%

\DeclareRobustCommand{\mb}[1]{\ensuremath{\mathbf{\boldsymbol{#1}}}}

\DeclareMathOperator*{\argmin}{arg\,min}

\crefname{lemma}{lemma}{lemmas}
\Crefname{lemma}{Lemma}{Lemmas}
\crefname{thm}{theorem}{theorems}
\Crefname{thm}{Theorem}{Theorems}
\crefname{prop}{proposition}{propositions}
\Crefname{prop}{Proposition}{Propositions}

\newtheorem{thm}{Theorem} 
\newtheorem{defn}{Definition} 
\newtheorem{prop}[thm]{Proposition}
\newtheorem{lemma}[thm]{Lemma}

\newtheorem{corollary}[thm]{Corollary}
\newcommand\independent{\protect\mathpalette{\protect\independenT}{\perp}}
\def\independenT#1#2{\mathrel{\rlap{$#1#2$}\mkern2mu{#1#2}}}

\renewcommand{\mid}{~\vert~}

\newcommand{\mbA}{\mb{A}}
\newcommand{\mba}{\mb{a}}

\newcommand\dif{\mathop{}\!\mathrm{d}}

\newcommand{\bbR}{\mathbb{R}}

\newcommand{\cD}{\mathcal{D}}

\newcommand{\cN}{\mathcal{N}}

\newcommand{\g}{\, | \,}

\newcommand{\indpt}{\protect\mathpalette{\protect\independenT}{\perp}}
\newcommand{\E}[2]{\mathbb{E}_{#1}\left[#2\right]}

\usepackage{booktabs,arydshln}
\makeatletter
\def\adl@drawiv#1#2#3{%
        \hskip.5\tabcolsep
        \xleaders#3{#2.5\@tempdimb #1{1}#2.5\@tempdimb}%
                #2\z@ plus1fil minus1fil\relax
        \hskip.5\tabcolsep}
\newcommand{\cdashlinelr}[1]{%
  \noalign{\vskip\aboverulesep
           \global\let\@dashdrawstore\adl@draw
           \global\let\adl@draw\adl@drawiv}
  \cdashline{#1}
  \noalign{\global\let\adl@draw\@dashdrawstore
           \vskip\belowrulesep}}
\makeatother

\newenvironment{proofsk}{%
  \proof}{\endproof}

\newacronym{KL}{kl}{Kullback-Leibler}
\newacronym{ELBO}{elbo}{\emph{evidence lower bound}}
\newacronym{POPELBO}{pop-elbo}{\emph{population evidence lower bound}}
\newacronym{PROELBO}{pro-elbo}{\emph{profile evidence lower bound}}

\newacronym{SVI}{svi}{stochastic variational inference}

\newacronym{ADVI}{advi}{automatic differentiation variational inference}

\newacronym{GMM}{gmm}{Gaussian mixture model}
\newacronym{LDA}{lda}{latent Dirichlet allocation}
\newacronym{PF}{pf}{Poisson factorization}
\newacronym{DEF}{def}{deep exponential family}
\newacronym{RMSE}{rmse}{root mean squred error}
\newacronym{SEM}{sem}{structural equation model}

\newacronym{SUTVA}{sutva}{stable unit treatment value assumption}

\newacronym{PPC}{ppc}{posterior predictive check}
\newacronym{GWAS}{gwas}{genome-wide association studies}
\newacronym{PPCA}{ppca}{probabilistic principal component analysis}
\newacronym{PCA}{pca}{principal component analysis}
\newacronym{KDE}{kde}{kernel density estimate}

\newacronym{LMM}{lmm}{linear mixed model}
\newacronym{LFA}{lfa}{logistic factor analysis}

\newacronym{SNP}{snp}{single-nucleotide polymorphism}
\newacronym{GLM}{glm}{generalized linear model}
\newacronym{CSI}{csi}{conditioning set inference}

\newacronym{TPR}{tpr}{true positive rate}
\newacronym{FPR}{fpr}{false positive rate}
\newacronym{FDR}{fdr}{false discovery rate}
\newacronym{BBVI}{bbvi}{black box variational inference}

\newacronym{SNR}{snr}{signal-to-noise ratio}
\newacronym{NUTS}{nuts}{No-U-Turn sampler}

\usepackage{tikz}
\usetikzlibrary{bayesnet}
\usepackage{pgfplots}
\pgfplotsset{compat=newest}
\pgfplotsset{plot coordinates/math parser=false}
\usepgfplotslibrary{statistics}

\pgfdeclarelayer{edgelayer}
\pgfdeclarelayer{nodelayer}
\pgfsetlayers{edgelayer,nodelayer,main}

\definecolor{hexcolor0xbfbfbf}{rgb}{0.749,0.749,0.749}

\tikzset{>=latex}
\tikzstyle{none}   = [inner sep=0pt]
\tikzstyle{line}   = [ thick, -, shorten <=1pt, shorten >=1pt ]
\tikzstyle{arrow}  = [ thick,  ->, shorten <=1pt, shorten >=1pt ]
\tikzstyle{ardash} = [ thick dotted, ->, shorten <=1pt, shorten >=1pt ]

\tikzstyle{empty}=[circle,opacity=0.0,text opacity=1.0,minimum width=4pt,minimum height=4pt]
\tikzstyle{box}=[rectangle,fill=White,draw=Black]
\tikzstyle{filled}=[circle,fill=hexcolor0xbfbfbf,draw=Black]
\tikzstyle{hollow}=[circle,fill=White,draw=Black]
\tikzstyle{param}=[rectangle,fill=Black,draw=Black,inner sep=0pt,minimum width=4pt,minimum height=4pt]
\tikzstyle{paramhollow}=[rectangle,fill=White,draw=Black,inner sep=0pt,minimum
width=4pt,minimum height=4pt]

\usetikzlibrary{calc,arrows.meta,positioning,decorations.pathreplacing,shapes, quotes}
\usetikzlibrary{bayesnet}

\newcommand*\patchAmsMathEnvironmentForLineno[1]{%
  \expandafter\let\csname old#1\expandafter\endcsname\csname
    #1\endcsname
  \expandafter\let\csname oldend#1\expandafter\endcsname\csname
    end#1\endcsname
  \renewenvironment{#1}%
  {\linenomath\csname old#1\endcsname}%
  {\csname oldend#1\endcsname\endlinenomath}%
}
\newcommand*\patchBothAmsMathEnvironmentsForLineno[1]{%
  \patchAmsMathEnvironmentForLineno{#1}%
  \patchAmsMathEnvironmentForLineno{#1*}%
}
\patchBothAmsMathEnvironmentsForLineno{equation}
\patchBothAmsMathEnvironmentsForLineno{align}
\patchBothAmsMathEnvironmentsForLineno{flalign}
\patchBothAmsMathEnvironmentsForLineno{alignat}
\patchBothAmsMathEnvironmentsForLineno{gather}
\patchBothAmsMathEnvironmentsForLineno{multline}

\usepackage[margin=1in]{geometry}
\usepackage{tcolorbox}

\pdfminorversion=4


\allowdisplaybreaks

\usepackage[defaultlines=4,all]{nowidow}

\title{\textbf{The Blessings of Multiple Causes}}

\usepackage{times}

\author{
  Yixin Wang\\
  Department of Statistics\\
  Columbia University\\
  \texttt{yixin.wang@columbia.edu} \\
\\
  David M.~Blei\\
  Department of  Statistics\\
  Department of Computer Science\\
  Columbia University\\
  \texttt{david.blei@columbia.edu} \\
}
\date{\today}

\begin{document}
\maketitle

\begin{abstract}
  Causal inference from observational data often assumes
  ``ignorability,'' that all confounders are observed. This assumption
  is standard yet untestable.  However, many scientific studies
  involve multiple causes, different variables whose effects are
  simultaneously of interest.  We propose the deconfounder, an
  algorithm that combines unsupervised machine learning and predictive
  model checking to perform causal inference in multiple-cause
  settings.  The deconfounder infers a latent variable as a substitute
  for unobserved confounders and then uses that substitute to perform
  causal inference.  We develop theory for the deconfounder, and show
  that it requires weaker assumptions than classical causal inference.
  We analyze its performance in three types of studies: semi-simulated
  data around smoking and lung cancer, semi-simulated data around
  genome-wide association studies, and a real dataset about actors and
  movie revenue.  The deconfounder provides a checkable approach to
  estimating closer-to-truth causal effects.
\end{abstract}


Keywords: Causal inference, strong ignorability, probabilistic models

\clearpage

\begin{bibunit}[apa]

\section{Introduction}
\label{sec:introduction}

Here is a frivolous, but perhaps lucrative, causal inference problem.
\Cref{tab:toprevmov} contains data about movies. For each movie, the
table shows its cast of actors and how much money the movie made.
Consider a movie producer interested in the causal effect of each
actor; for example, how much does revenue increase (or decrease) if
Oprah Winfrey is in the movie?

The producer wants to solve this problem with the potential outcomes
approach to causality~\citep{Imbens:2015, rubin1974estimating,
rubin2005causal}. Following the methodology, she associates each movie
to a \textit{potential outcome function}, $y_i(\mba)$.  This function
maps each possible cast $\mba$ to its revenue if the movie $i$ had
that cast.  (The cast $\mba$ is a binary vector with one element per
actor; each element encodes whether the actor is in the movie.)  The
potential outcome function encodes, for example, how much money
\textit{Star Wars} would have made if Robert Redford replaced Harrison
Ford as Han Solo.  When doing causal inference, the producer's goal is
to estimate something about the population distribution of
$Y_i(\mba)$.  For example, she might consider a particular cast $\mba$
and estimate the expected revenue of a movie with that cast,
$\E{}{Y_i(\mba)}$.

Classical causal inference from observational data (like
\Cref{tab:toprevmov}) is a difficult enterprise and requires strong
assumptions.  The challenge is that the dataset is limited; it
contains the revenue of each movie, but only at its assigned cast.
However, what this paper is about is that the producer's problem is
not a classical causal inference. While causal inference usually
considers a single possible cause, such as whether a subject receives
a drug or a control, our producer is considering a \textit{multiple
causal inference}, where each actor is a possible cause.  This paper
shows how multiple causal inference can be easier than classical
causal inference.  Thanks to the multiplicity of causes, the producer
can make valid causal inferences under weaker assumptions than the
classical approach requires.

Let's discuss the producer's inference in more detail: how can she
calculate $\E{}{Y_i(\mba)}$?  Naively, she subsets the data in
\Cref{tab:toprevmov} to those with cast equal to $\mba$, and then
computes a Monte Carlo estimate of the revenue.  This procedure is
unbiased when $\E{}{Y_i(\mba)} = \E{}{Y_i(\mba) \g \mbA_i=\mba}$.

But there is a problem. The data in \Cref{tab:toprevmov} hide
\textit{confounders}, variables that affect both the causes and the
effect. For example, every movie has a genre, such as comedy, action, 
or romance. This genre has an effect on both who is in the cast and
the revenue.  (E.g., action movies cast a certain set of actors and
tend to make more money than comedies.) When left unobserved, the
genre of the movie produces a statistical dependence between whether
an actor is in it and its revenue; this dependence biases the causal
estimates, $\E{}{Y_i(\mba) \g \mbA_i=\mba} \neq \E{}{Y_i(\mba)}$.

Thus the main activities of classical causal inference are to
identify, measure, and control for confounders. Suppose the producer
measures confounders for each movie $w_i$. Then inference is simple:
use the data (now with confounders) to take Monte Carlo estimates of
$\E{}{\E{}{Y_i(\mba) \g W_i, \mbA_i=\mba}}$; this iterated expectation
``controls'' for the confounders.  But the problem is that whether the
estimate is equal to $\E{}{Y_i(\mba)}$ rests on a big and uncheckable
assumption: there are no other confounders. For many applied causal
inference problems, this assumption is a leap of faith.


\begin{table*}[t]
\scriptsize
  \begin{center}
    \begin{tabular}{llr} 
     \toprule
        Title & Cast & Revenue\\
        \midrule
        \emph{Avatar}& \{Sam Worthington, Zoe Saldana, Sigourney Weaver, Stephen Lang, $\ldots$ \}&\$2788M\\
        \emph{Titanic}&\{Kate Winslet, Leonardo DiCaprio, Frances Fisher, Billy Zane, $\ldots$ \}&\$1845M\\
        \emph{The Avengers}&\{Robert Downey Jr., Chris Evans, Mark Ruffalo, Chris Hemsworth, $\ldots$ \}&\$1520M\\
        \emph{Jurassic World}&\{Chris Pratt, Bryce Dallas Howard, Irrfan Khan, Vincent D'Onofrio, $\ldots$ \}&\$1514M\\
        \emph{Furious 7}&\{Vin Diesel, Paul Walker, Dwayne Johnson, Michelle Rodriguez, $\ldots$ \}&\$1506M\\
      $\quad \quad \vdots$ & $\quad \vdots \quad$ & $\quad \vdots \quad$ \\
      \bottomrule
    \end{tabular}
    \caption{Top earning movies in the TMDB
      dataset \label{tab:toprevmov}}
  \end{center}
\end{table*}


We develop \textit{the deconfounder}, an alternative method for the
producer who worries about missing a confounder.  First the producer
finds and fits a good latent-variable model to capture the dependence
among actors.  It should be a factor model, one that contains a
per-movie latent variable that renders the assigned cast conditionally
independent.  (Probabilistic principal component
analysis~\citep{tipping1999probabilistic} is a simple example, but
there are many others.)  Given the model, she then estimates the
per-movie variable for each cast in the dataset; this estimated
variable is a substitute for unobserved confounders. Finally, she
controls for the substitute confounder and obtains valid causal
inferences.

The deconfounder capitalizes on the dependency structure of the
observed casts, using patterns of how actors tend to appear together
in movies as indirect evidence for confounders in the data. Thus the
producer replaces an uncheckable search for possible confounders with
the checkable goal of building a good factor model of observed casts.

All methods for causal inference using observational data are based on
assumptions. Here we make two. First, we assume that the fitted
latent-variable model is a good model of the assigned causes. Happily, 
this assumption is testable; we will use predictive checks to assess
how well the fitted model captures the data.  Second, we assume that
there are no unobserved single-cause confounders, variables that
affect one cause (e.g., actor) and the potential outcome function
(e.g., revenue). While this assumption is not testable, it is weaker
than the usual assumption of ignorability, i.e., no unobserved
confounders.

Beyond making movies, many causal inference problems, especially from
observational data, also classify as multiple causal inference.  Such
problems arise in many fields.
\begin{itemize}[leftmargin=*]
\item \textbf{Genome-wide association studies (GWAS).} In GWAS, 
biologists want to know how genes causally connect to traits
\citep{stephens2009bayesian, visscher201710}.  The assigned causes are
alleles on the genome, often encoded as either being common
(``major'') or uncommon (``minor''), and the effect is the trait under
study.  Confounders, such as shared ancestry among the population, 
bias naive estimates of the effect of genes.  We study GWAS problems
in \Cref{subsec:gwasstudy}.

\item \textbf{Computational neuroscience.} Neuroscientists want to
know how specific neurons or brain measurements affect behavior and
thoughts \citep{churchland2012neural}.  The possible causes are
multiple measurements about the brain's activity, e.g., one per
neuron, and the effect is a measured behavior. Confounders, 
particularly through dependencies among neural activity, bias the
estimated connections between brain activity and behavior.

\item \textbf{Social science.} Sociologists and policy-makers want to
know how social programs affect social outcomes, such as poverty
levels and upward mobility~\citep{Morgan:2015}.  However, individuals
may enroll in several such programs, blurring information about their
possible effects.  In social science, controlled experiments are
difficult to engineer; using observational data for causal inference
is typically the only option.

\item \textbf{Medicine.} Doctors want to know how medical treatments
affect the progression of disease. The multiple causes are medications
and procedures; the outcome is a measurement of a disease (e.g., a lab
test).  There are many confounders---such as when and where a patient
is treated or the treatment preferences of the attending doctor---and
these variables bias the estimates of effects.  While gold-standard
data from clinical trials are expensive to obtain, the abundance of
electronic health records could inform medical practices.

\end{itemize}

Causal inference in each of these fields can use the deconfounder.
Fit a good factor model of the assigned causes, infer substitute
confounders, and use the substitutes in causal inference.

\parhead{Related work.} The deconfounder relates to several threads of
research in causal inference.

\textit{Probabilistic modeling for causal inference.}
\citet{stegle2010probabilistic} use Gaussian processes to depict
causal mechanisms; \citet{zhang2009identifiability} study
post-nonlinear causal models and their identifiability;
\citet{mckeigue2010sparse} builds on sparse methods to infer causal
structures; \citet{moghaddass2016factorized} generalize the
self-controlled case series method to multiple causes and multiple
outcomes using factor models. More recently, \citet{louizos2017causal}
use variational autoencoders to infer unobserved confounders,
\citet{shah2018rsvp} develop projection-based techniques for
high-dimensional covariance estimation under latent confounding, and
\citet{kaltenpoth2019we} leverages information theory principles to
differentiate causal and confounded connections.

With a related goal, \citet{tran2017implicit} build implicit causal
models. Like the GWAS example in this paper (\Cref{subsec:gwasstudy}),
they take an explicit causal view of \gls{GWAS}, treating the
\glspl{SNP} as the multiple causes. They connect implicit probabilistic
models and nonparametric structural equation models for causal
inference~\citep{Pearl:2009a}, and develop inference algorithms for
capturing shared confounding.  \citet{heckerman2018accounting} studies
the same scenario with multiple linear regression, where observing
many causes makes it possible to account for shared confounders.
Multiple causal inference and latent confounding was also formalized
by \citet{ranganath2018multiple}, who take an information-theoretic
approach.

Our work complements all of these works. These works rest on Pearl's
causal framework~\citep{Pearl:2009a}; they hypothesize a causal graph
with confounders, causes, and outcomes. We develop the deconfounder in
the context of the potential outcomes framework \citep{Imbens:2015,
rubin1974estimating, rubin2005causal}.

\textit{Analyzing \gls{GWAS}.}  In \gls{GWAS}, latent population
structure is an important unobserved confounder.
\citet{pritchard2000association} propose a probabilistic admixture
model for unsupervised ancestry inference.  \citet{price2006principal}
and \citet{astle2009population} estimate the unobserved population
structure using the principal components of the genotype matrix.
\citet{yu2006unified} and \citet{kang2010variance} estimate the
population structure via the ``kinship matrix'' on the genotypes.
\citet{song2015testing} and \citet{hao2015probabilistic} rely on
factor analysis and admixture models to estimate the population
structure. \citet{GTEx2017} adopt a similar idea to study the effect
of genetic variations on gene expression levels. These methods can be
seen as variants of the deconfounder (see \Cref{subsec:connections}).
The deconfounder gives them a rigorous causal justification, provides
principled ways to compare them, and suggests an array of new
approaches. We study \gls{GWAS} data in \Cref{subsec:gwasstudy}.

\textit{Assessing the ignorability assumption.}
\citet{rosenbaum1983central} demonstrates that ignorability and a good
propensity score model are sufficient to perform causal inference with
observational data. Many subsequent efforts assess the plausibility of
ignorability. For example, \citet{robins2000sensitivity, 
gilbert2003sensitivity, imai2004causal} develop sensitivity analysis
in various contexts, though focusing on data with a single cause.  In
contrast, this work uses predictive model checks to assess
unconfoundedness with multiple causes.  More recently, 
\citet{sharma2016split} leveraged auxillary outcome data to test for
confounding in time series data; \citet{janzing2018detecting, 
janzing2018detecting2, liu2018confounder} developed tests for
non-confounding in multivariate linear regression.  Here we work
without auxiliary data, focus on causal estimation, as opposed to
testing, and move beyond linear models.

\textit{The (generalized) propensity score.}
\citet{schneeweiss2009high, mccaffrey2004propensity, lee2010improving}
and many others develop and evaluate different models for assigned
causes. In particular, \citet{chernozhukov2017double} introduce a
semiparametric assignment model; they propose a principled way of
correcting for the bias that arises when regularizing or overfitting
the assignment model.  This work introduces latent variables into the
model.  The multiplicity of causes enables us to infer these latent
variables and then use them as substitutes for unobserved confounders.

\textit{Classical causal inference with multiple treatments.}
\citet{lopez2017estimation, mccaffrey2013tutorial, zanutto2005using, 
  rassen2011simultaneously, lechner2001identification, 
  feng2012generalized} extend classical matching, subclassification, 
and weighting to multiple treatments, always assuming
ignorability. This work relaxes that assumption.

\parhead{This paper.}  \Cref{sec:deconfounder} reviews classical
causal inference, sets up multiple causal inference, presents the
deconfounder, and describes its identification strategy and
assumptions.  \Cref{sec:study} presents three empirical studies, two
semi-synthetic and one real. \Cref{sec:theory} further develops theory
around the deconfounder and establishes causal identification.
Finally, \Cref{sec:discussion} concludes the paper with a discussion.



\section{Multiple causal inference with the deconfounder}
\label{sec:deconfounder}

\subsection{A classical approach to multiple causal inference}
\label{subsec:classical}

We first describe multiple causal inference. There are $m$
\textit{possible causes}, encoded in a vector $\mba = (a_{1}, \ldots,
a_{m})$.  We can consider a variety of types: real-valued causes,
binary causes, integer causes, and so on.  In the actor example, the
causes are binary: $a_{j}$ encodes whether actor $j$ is in the movie.

For each individual $i$ (movie) there is a \textit{potential outcome
function} that maps configurations of causes to the outcome (revenue).
We focus on real-valued outcomes.  For the $i$th movie, the potential
outcome function maps each possible cast to the log of its revenue,
$y_i(\mba) : \{0,1\}^m \rightarrow \bbR$.  Note $y_i(\mba)$ is a
function.  It maps every possible cast of actors to the movie's
revenue for that cast.

The goal of causal inference is to characterize the sampling
distribution of the potential outcomes $Y_i(\mba)$ for each
configuration of the causes $\mba$. This distribution provides causal
inferences, such as the expected outcome for a particular array of
causes (a particular cast of actors) $\mu(\mba)=\E{}{Y_i(\mba)}$ or
the average effect of individual causes (how much a particular actor
contributes to revenue).

To help make causal inferences, we draw data from the sampling
distribution of assigned causes $\mba_i$ (the cast of movie $i$) and
realized outcomes $y_i(\mba_i)$ (its revenue).\footnote{We use the
term \textit{assigned causes} for the vector of what some might call
the ``assigned treatments.''  Because some variables may not exhibit a
causal effect, a more precise term would be ``assigned potential
causes'' (but it is too cumbersome).}  The data is $\cD = \{(\mba_i,
y_i(\mba_i)\} \,\,\, i = 1, \ldots, n$. Note we only observe the
outcome for the assigned causes $y_i(\mba_i)$, which is just one of
the values of the potential outcome function.  But we want to use such
data to characterize the full distribution of $Y_i(\mba)$ for any
$\mba$; this is the ``fundamental problem of causal
inference''~\citep{Holland:1986}.

To estimate $\mu(\mba)$, consider using the data to calculate
conditional Monte Carlo approximations of $\E{}{Y_i(\mba) \g
\mbA_i=\mba}$.  These estimates are simply averages of the outcomes
for each configuration of the causes. But this approach may not be
accurate.  There might be \textit{unobserved confounders}---hidden
variables that affect both the assigned causes $\mbA_i$ and the
potential outcome function $Y_i(\mba)$. When there are unobserved
confounders, the assigned causes are correlated with the observed
outcome. Consequently, Monte Carlo estimates of $\mu(\mba)$ are
biased,
\begin{align}
\label{eq:biased-estimate}
\E{}{Y_i(\mba) \g \mbA_i = \mba} \neq \E{}{Y_i(\mba)}.
\end{align} We can estimate $\E{}{Y_i(\mba) \g \mbA_i = \mba}$ with
the dataset; but the goal is to estimate $\E{}{Y_i(\mba)}$.
\footnote{Here is the notation.  Capital letters denote a random
variable. For example, the random variable $\mbA_i$ is a randomly
chosen vector of assigned causes from the population.  The random
variable $Y_i(\mbA_i)$ is a a randomly chosen potential outcome from
the population, evaluated at its assigned causes. A lowercase letter
is a realization.  For example, $\mba_i$ is in the dataset---it is the
vector of assigned causes of individual $i$.  The left side of
\Cref{eq:biased-estimate} is an expectation with respect to the random
variables; it conditions on the random vector of assigned causes to be
equal to a certain realization $\mbA_i = \mba$.  The right side is an
expectation over the same population of the potential outcome
functions, but always evaluated at the realization $\mba$.}

Suppose we measure covariates $x_i$ and append to each data point,
$\cD =
\{(\mba_{i}, x_i, y_i(\mba_i)\} \,\,\, i = 1, \ldots, n$.  If these
covariates contain all confounders then
\begin{align}
\label{eq:condition-on-confounders}
\E{}{\E{}{Y_i(\mba) \g X_i, \mbA_i=\mba}} = \E{}{Y_i(\mba)}.
\end{align} 
Using the augmented dataset, we can estimate the left side with Monte
Carlo; thus we can estimate $\E{}{Y_i(\mba)}$.

\Cref{eq:condition-on-confounders} is true when $X$ capture all
confounders. More precisely, it is true under the assumption of
\textit{(weak) ignorability\footnote{Here we describe the weak version
of the ignorability assumption, which requires individual potential
outcomes $Y_i(\mba)$ be marginally independent of the causes $\mbA_i$,
i.e.  $\mbA_i \indpt Y_i(\mba) \g X_i$ for all $\mba$.
\citet{imbens2000role} and \citet{hirano2004propensity} call this
assumption \emph{weak unconfoundedness}. In contrast, strong
ignorability says $\mbA_i \indpt (Y_i(\mba))_{\mba\in\mathcal{A}} \g
X_i$, which requires all possible potential outcomes
$(Y_i(\mba))_{\mba\in\mathcal{A}}$ be jointly independent of the
causes $\mbA_i$.}}~\citep{rosenbaum1983central,imai2004causal}:
conditional on observed $X$, the assigned causes are independent of
the potential outcomes,
\begin{align}
\label{eq:strong-ignorability}
\mbA_i \indpt Y_i(\mba) \g X_i \qquad \forall \mba.
\end{align}
The nuance is that \Cref{eq:strong-ignorability} needs to hold for all
possible $\mba$'s, not only for the value of $Y_i(\mba)$ at the
assigned causes. Ignorability implies no unobserved
confounders.\footnote{We also assume \textit{stable unit treatment
value assumption (SUTVA)} \citep{rubin1980randomization,
rubin1990comment} and \emph{overlap} \citep{imai2004causal}, roughly
that any vector of assigned causes has positive probability. These
three assumptions together identify the potential outcome function
\citep{imbens2000role, hirano2004propensity, imai2004causal}. }

\Cref{eq:condition-on-confounders} underlies the practice of causal
inference: find and measure the confounders, estimate conditional
expectations, and average.  In the introduction, for example, we
pointed out that the genre of the movie is a confounder to causal
inference of movie revenues.  The genre affects both which cast is
selected and the potential earnings of the film.  But the assumption
that there are no unobserved confounders is significant. One of the
central challenges around causal inference from observational data is
that ignorability is untestable---it fundamentally depends on the
entire potential outcome function, of which we only observe one value
\citep{Holland:1986}.

\subsection{The deconfounder: Multiple causal inference without
ignorability}
\label{subsec:deconfounder}

We now develop the \textit{deconfounder}, an algorithm that exploits
the multiplicity of causes to sidestep the search for confounders.
There are three steps. First, find a good latent variable model of the
assignment mechanism $p(z, a_1, \ldots, a_m)$, where $z$ is a local
factor. Second, use the model to infer the latent variable for each
individual $p(z_i \g a_{i1}, \ldots, a_{im})$. Finally, use the
inferred variable as a substitute for unobserved confounders and form
causal inferences.  The deconfounder replaces an uncheckable search
for possible confounders with the checkable goal of building a good
model of assigned causes.

We first explain the method in more detail.  Then we explain why and
when it provides unbiased causal inferences.

In the first step of the deconfounder, define and fit a
\textit{probabilistic factor model} to capture the joint distribution
of causes $p(a_1, \ldots, a_m)$. A factor model posits per-individual
latent variables $Z_i$, which we call local factors, and uses them to
model the assigned causes. The model is
\begin{align}
 \begin{split}
   Z_i &\sim p(\cdot \g \alpha) \quad i = 1, \ldots, n, \\
   A_{ij} \g Z_i&\sim p(\cdot \g z_i, \theta_j) \quad j = 1, \ldots, m,
 \end{split}
 \label{eq:factor-model}
\end{align}
where $\alpha$ parameterizes the distribution of $Z_i$ and $\theta_j$
parameterizes the per-cause distribution of $A_{ij}$.  Notice that
$Z_i$ can be multi-dimensional.  Factor models encompass many methods
from Bayesian statistics and probabilistic machine learning. Examples
include matrix factorization \citep{tipping1999probabilistic}, mixture
models~\citep{mclachlan1988mixture}, mixed-membership
models~\citep{pritchard2000association, blei2003latent,
airoldi2008mixed,erosheva2003bayesian}, and deep generative
models~\citep{neal1990learning, ranganath2015deep,
ranganath2016hierarchical, tran2017deep, rezende2015variational,
mohamed2016learning, kingma2013auto}. One can fit using any
appropriate method, such as maximum likelihood estimation or Bayesian
inference.  And exact fitting is not required; one can use approximate
methods like the EM algorithm, Markov chain Monte Carlo, and
variational inference.  What the deconfounder requires is that the
fitted factor model provides an accurate approximation of the
population distribution of $p(\mbA)$.

In the next step, use the fitted factor model to calculate the
conditional expectation of each individual's local factor weights
$\hat{z}_i = \E{M}{Z_i \g \mbA_i = \mba_i}$.  We emphasize that this
expectation is from the fitted model $M$ (not the population
distribution). Again, one can use approximate expectations.

In the final step, condition on $\hat{z}_i$ as a substitute confounder
and proceed with causal inference.  For example, we can estimate
$\E{}{\E{}{Y_i(\mba) \g \hat{Z}_i, \mbA_i = \mba}}$.  The main idea is
this: if the factor model captures the distribution of assigned
causes---a testable proposition---then we can safely use $\hat{z}_i$
as a variable that contains the confounders.

Why is this strategy sensible? Assume the fitted factor model captures
the (unconditional) distribution of assigned causes $p(a_{i1}, \ldots,
a_{im})$. This means that all causes are conditionally independent
given the local latent factors,
\begin{align}
\label{eq:cond-ind}
p(a_{i1}, \ldots, a_{im}\g z_i) =\prod_{j=1}^m p(a_{ij}\g
z_i).
\end{align}
Now make an additional assumption: there are no \textit{single-cause
confounders}, a variable that affects just one of the assigned causes
and on the potential outcome function.  (More precisely, we need to
have observed all the single-cause confounders.) With this assumption,
the independence statement of \Cref{eq:cond-ind} implies ignorability,
$\mbA_i \indpt Y_i(\mba) \g Z_i.$ Ignorability justifies causal
inference.

\begin{figure}
\begin{center}
  \includegraphics[scale=0.5]{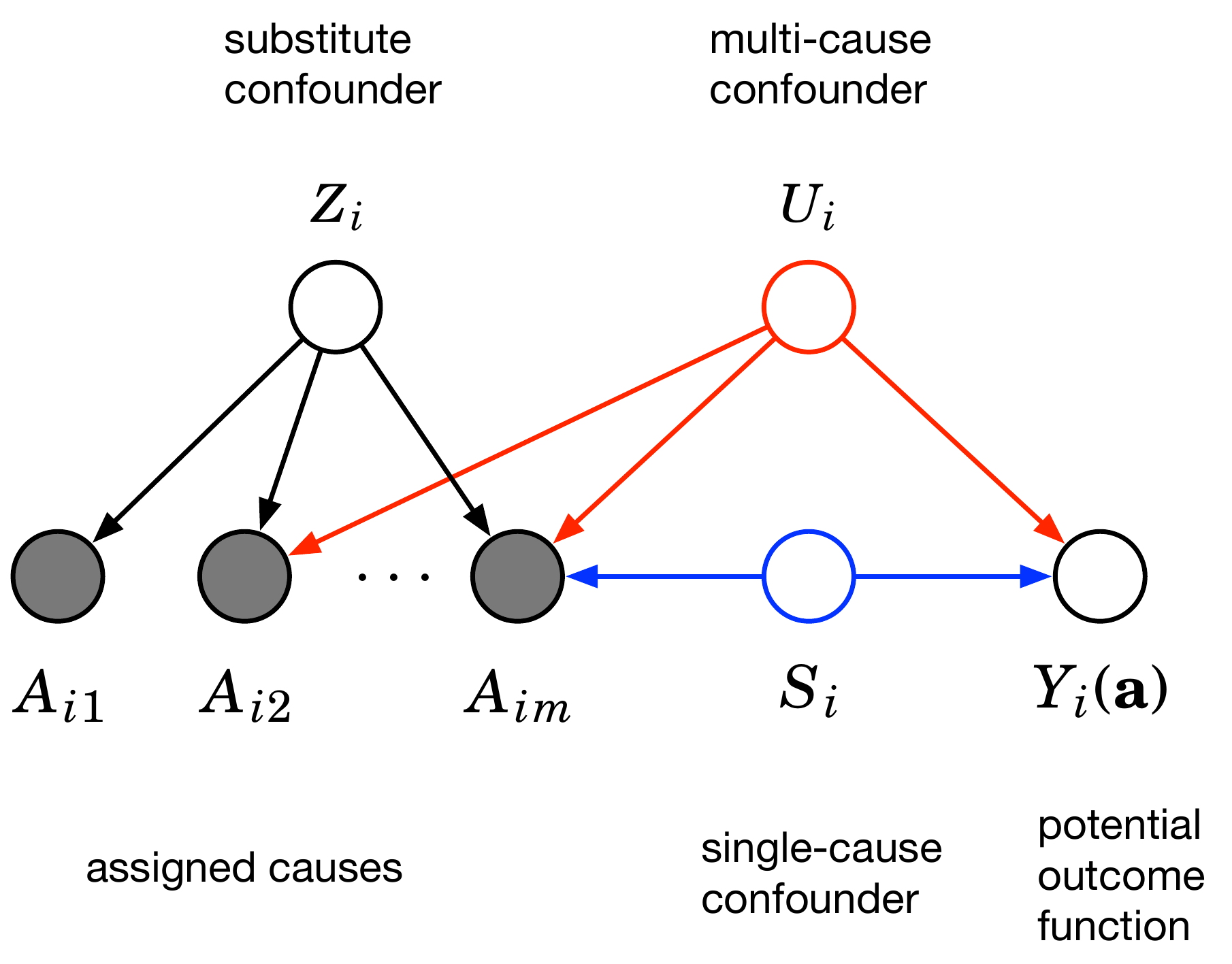}
\end{center}
\caption{A graphical model argument for the deconfounder.  The
  punchline is that if $Z_i$ renders the $A_{ij}$'s conditionally
  independent then there cannot be a multi-cause confounder.  The
  proof is by contradiction.  Assume conditional independence holds,
  $p(a_{i1}, \ldots, a_{im}\g z_i) = \prod_j p(a_{ij}\g z_i)$; if there
  exists a multi-cause confounder $U_i$ (red) then, by $d$-separation,
  conditional independence cannot hold \citep{pearl1988probabilistic}.
  Note we cannot rule out the single-cause confounder $S_i$ (blue).}
\label{fig:graphical-arg}
\end{figure}

The graphical model in \Cref{fig:graphical-arg} justifies the
deconfounder and reveals its
assumptions.\footnote{\Cref{fig:graphical-arg} uses a graphical model
to represent and reason about conditional dependencies in the
population distribution.  It is not a causal graphical model or a
structural equation model.}  Suppose we observe a $Z_i$ such that the
conditional independence in \Cref{eq:cond-ind} holds.  Further suppose
there exists an unobserved multi-cause confounder $U_i$ (illustrated
in red), which connects to multiple assigned causes and the outcome.
If such a $U_i$ exists then the causes would be dependent, even
conditional on $Z_i$.  (This fact comes from $d$-separation.)  But
such dependence leads to a contradiction, that \Cref{eq:cond-ind} does
not hold. Thus $U_i$ cannot exist.

There is a nuance.  The conditional independence in \Cref{eq:cond-ind}
cannot rule out the existence of an unobserved single-cause
confounder, denoted $S_i$ in \Cref{fig:graphical-arg}.  Even if such a
confounder exists, the conditional independence still holds.

Here is the punchline.  If we find a factor model that captures the
population distribution of assigned causes then we have essentially
discovered a variable that captures all multiple-cause confounders.
The reason is that multiple-cause confounders induce dependence among
the assigned causes, regardless of how they connect to the potential
outcome function. Modeling their dependence, for which we have
observations, provides a way to estimate variables that capture those
confounders. This is the blessing of multiple causes.

\subsection{The identification strategy of the deconfounder}

\label{subsec:identificationstrat}

How does the deconfounder identify potential outcomes? The classical
strategy for causal identification is that ignorability, together with
\gls{SUTVA}~and overlap, identifies the potential
outcomes~\citep{imbens2000role, hirano2004propensity, imai2004causal}.
The deconfounder continues to assume \gls{SUTVA} and overlap, but it
weakens the ignorability assumption.

Roughly, ignorability requires that there are no unobserved
confounders. To weaken this assumption, the deconfounder constructs a
substitute confounder that captures all multiple-cause confounders.
(The proof is in \Cref{sec:theory}.)  Uncovering multi-cause
confounders from data weakens the ignorability assumption to one of no
unobserved \textit{single-cause} confounders.

Thus the deconfounder relies on three assumptions: (1) \gls{SUTVA}
\citep{rubin1980randomization, rubin1990comment}; (2) no unobserved
single-cause confounders; (3) overlap \citep{imai2004causal}.

\glsreset{SUTVA} \parhead{\Gls{SUTVA}.} The \gls{SUTVA} requires that
the potential outcomes of one individual are independent of the
assigned causes of another individual. It assumes that there is no
interference between individuals and there is only a single version of
each assigned cause. See \citet{rubin1980randomization,
rubin1990comment} and \citet{Imbens:2015} for discussion.

\parhead{No unobserved single-cause confounders.} Denote $X_i$ as the
observed covariates. (Observed covariates are not necessarily
confounders.) ``No unobserved single-cause confounders'' requires
\begin{align}
A_{ij}\independent Y_i(\mba) \g X_i, \qquad j= 1, \ldots, m.
\label{eq:singleignore}
\end{align}
We call this assumption ``single ignorability.'' Single ignorability
differs from classical ignorability by only requiring marginal
independence between individual causes $A_{ij}$ and the potential
outcome $Y_i(\mba)$.  In contrast, classical ignorability requires
$(A_{i1}, \ldots, A_{im})\independent Y_i(\mba) \g X_i$, i.e., the
joint independence between the causes $(A_{i1}, \ldots, A_{im})$ and
the potential outcome function $Y_i(\mba)$.

Roughly, single ignorability implies that we observe any confounders
that affect only one of the causes; see \Cref{fig:graphical-arg}. This
assumption is weaker than classical ignorability; we no longer need to
observe all confounders. That said, whether the assumption is
plausible depends on the particulars of the problem. Note that single
ignorability reduces to the classical ignorability assumption when
there is only one cause; both requires $\mbA_{i}\independent Y_i(\mba)
\g X_i$, where $\mbA_{i}$ and $\mba$ are one-dimensional.

When might single ignorability be plausible?  Consider the movie-actor
example. One possible confounder is the reputation of the director.
Famous directors have access to a circle of capable actors; they also
tend to make good movies with large revenues. If the dataset contains
many actors, it is likely that several are in the circle of capable
actors; the director's reputation is a multi-cause confounder. (If
only one actor in the dataset is capable then the director's
reputation is a single-cause confounder.)

Or consider the \gls{GWAS} problem. If a confounder affects SNPs---and
we observe 100,000 SNPs per individual---then the confounder may be
unlikely to have an effect on only one.  The same reasoning can apply
to other settings---medications in medical informatics data, neurons
in neuroscience recordings, and vocabulary terms in text data.

By the same token, single ignorability may not be satisfied when there
are very few assigned causes. Consider the neuroscience problem of
inferring the relationship between brain activity and animal behavior,
but where the scientist only records the activity of a small number of
neurons.  While unlikely that a confounder affects only one neuron in
the brain, it may be more possible that a confounder affects only one
of the observed neurons.

In domains where single ignorability is likely not satisfied, we
suggest performing sensitivity analysis \citep{robins2000sensitivity,
gilbert2003sensitivity, imai2004causal} on the deconfounder estimates.
It assesses the robustness of the estimate against unobserved
single-cause confounding. In the context of \gls{GWAS},
\Cref{subsec:gwasstudy} will illustrate the effect of violating single
ignorability.

\parhead{Overlap. } The final assumption of the deconfounder is that
the substitute confounder $Z_i$ satisfies the overlap
condition\footnote{We also require the observed covariates $X_i$
satisfy the overlap condition if they are single-cause confounders,
i.e. $p(A_{ij}\in \mathcal{A} \g X_i) > 0 \text{ for all sets
}\mathcal{A} \text{ with positive measure, i.e. } p(\mathcal{A}) >
0$.}
\begin{align}
p(A_{ij}\in \mathcal{A} \g Z_i) > 0
\text{ for all sets }\mathcal{A} \text{ with positive measure,
i.e. } p(\mathcal{A}) > 0.
\label{eq:overlapmain}
\end{align}
Overlap asserts that, given the substitute confounder, the conditional
probability of any vector of assigned causes is positive.  This
assumption is sometimes stated as the second half of ignorability
\citep{imai2004causal}.

The potential outcome $Y_i(\mba)$ is not identifiable if the
substitute confounder does not satisfy overlap. When the overlap is
limited, i.e. $p(A_{ij}\in\mathcal{A}\g Z_i)$ is small for all values of
$Z_i$, then the deconfounder estimates of the potential outcome
$Y_i(\mba)$ will have high variance.

For many probabilistic factor models, the overlap condition is
satisfied. For example, probabilistic PCA assumes $A_{ij}\g Z_i\sim
\cN(Z_i^\top\theta_j, \sigma^2)$. The normal distribution has support
over the real line, which ensures $P(A_{ij}\in\mathcal{A}\g Z_i) > 0$ for
all $\mathcal{A}$ with positive measure. That said, as the
dimensionality of $Z_i$ increases, overlap often becomes increasingly
limited \citep{d2017overlap}. For example, probabilistic PCA returns
increasingly small $\sigma^2$, which signals $P(A_{ij}\in\mathcal{A}\g
Z_i)$ is small.

We can enforce overlap by constraining the allowable family of factor
models.  With continuous causes, we restrict to models with continuous
densities on $\mathbb{R}$. (We assume the causes are full-rank, i.e.,
that no two causes are measurable with each other; if such a pair
exists, merge them into a single cause.) With discrete causes, we
restrict to factor models with support on the whole $\mathcal{A}$ and
a $Z_i$ lower-dimensional than the causes.

Alternatively, we can merge highly correlated causes as a
preprocessing step. For example, consider two causes---paracetamol and
ibuprofen--that are always assigned the same value. We can merge them
into one cause: we only estimate the potential outcome of either
taking both drugs or taking neither. This merging step prevents the
deconfounder from extrapolating for the assigned causes which the data
carries little evidence. We can also resort to classical strategies of
causal inference under limited overlap, for example subsampling the
population \citep{crump2009dealing}.

How can we assess the overlap with respect to the substitute
confounder? With a fitted factor model, we can analyze the conditional
distribution of the assigned causes given the substitute confounder
$P(A_{ij} \g Z_i)$ for all individual $i$'s. A conditional with low
variance or low entropy signals limited overlap and the possibility of
high-variance causal estimates.

We have described the main assumptions of the deconfounder.  With
\gls{SUTVA}, overlap, and single ignorability, the deconfounder
estimate is unbiased.

\textbf{The deconfounder (informal version of
\Cref{thm:deconfounderfactor}).} \emph{Assume \gls{SUTVA}, single
ignorability (\Cref{eq:singleignore}), and overlap
(\Cref{eq:overlapmain}).  Then the deconfounder provides an unbiased
estimate of the average causal effect:
\begin{align}
&\E{Y}{Y_i(a_1,
\ldots, a_m)} - \E{Y}{Y_i(a'_1,
\ldots, a'_m)}\\
=&\E{X, Z}{\E{Y}{Y_i\g A_{i1}=a_1,\ldots, A_{im}=a_m, X_i, Z_i}} \nonumber\\
&- \E{X, Z}{\E{Y}{Y_i\g A_{i1}=a'_1,\ldots, A_{im}=a'_m, X_i, Z_i}},
  \label{eq:unbiasdcf}
\end{align}}
where $Z_i$ denotes the substitute confounder constructed from the
factor model.

The theorem relies on two properties of the substitute confounder: (1)
it captures all multi-cause confounders; (2) it does not capture
mediators. By its construction from probabilistic factor models, the
substitute confounder captures all multi-cause confounders; again, see
the graphical model argument in \Cref{fig:graphical-arg}. Moreover,
the substitute confounder is constructed with only the observed
causes; no outcome information is used and so it cannot pick up any
mediators. Thus, along with single ignorability and overlap, the
substitute confounder provides full ignorability.  With ignorability
in hand, treat the substitute confounder as if it were observed
covariates and \Cref{eq:unbiasdcf} follows from a classical
conditional independence argument \citep{rosenbaum1983central}.
\Cref{sec:theory} discusses and proves this theorem
(\Cref{thm:deconfounderfactor}).

\subsection{Practical details of the deconfounder}

\sloppy 
We next attend to some of the practical details of the deconfounder.
The ingredients of the deconfounder are (1) a factor model of assigned
causes, (2) a way to check that the factor model captures their
population distribution, and (3) a way to estimate the conditional
expectation~$\E{}{Y_i(\mba) \g \hat{Z}_i, \mbA_i = \mba}$~for
performing causal inference.  We discuss each ingredient below
(\Cref{subsec:assignment-model} and \Cref{subsec:outcome-model}) and
then describe the full deconfounder algorithm
(\Cref{subsec:fullalgo}).  We connect the deconfounder to existing
methods in the research literature (\Cref{subsec:connections}) and
answer questions that may come up for the reader (\Cref{subsec:faq}).

\subsubsection{Using the assignment model to infer a substitute
confounder}
\label{subsec:assignment-model}

The first ingredient is a factor model of the assigned causes, as
defined in \Cref{eq:factor-model}, which we call the assignment model.
Many models fall into this category, such as mixture models,
mixed-membership models, and deep generative models.  Each of these
models can be written as \Cref{eq:factor-model}; they each involve a
per-datapoint latent variable $Z_i$ and a per-cause parameter
$\theta_j$. Fitting the factor model gives an estimate of the
parameters $\theta_j, j=1,\ldots, m$. When the fitted factor model
captures the population distribution of the assigned causes then
inferences about $Z_i$ can be used as substitute confounders in a
downstream causal inference.

\parhead{Example factor models.} The deconfounder requires that the
investigator find an adequate factor model of the assigned causes and
then use the factor model to estimate the posterior $p(z_i \g
\mba_i)$.  In the simulations and studies of \Cref{sec:study}, we will
explore several classes of factor models; we describe some of them
here.

One of the most common factor models is \gls{PCA}.  \gls{PCA} is
appropriate when the assigned causes are real-valued.  In its
probabilistic form~\citep{tipping1999probabilistic}, both $z_i$ and
the per-cause parameters $\theta_{j}$ are real-valued $K$-vectors. The
model is
\begin{align}
\begin{split}
  Z_{ik} &\sim \cN(0, \lambda^2), \quad k = 1, \ldots, K,\\
  A_{ij} \g Z_i &\sim \cN\left(z_i^\top \theta_{j}, \sigma^2\right),
  \quad j = 1, \ldots, m.
  \label{eq:prob-PCA}
\end{split}
\end{align}
We can fit probabilistic \gls{PCA} with maximum likelihood (or
Bayesian methods) and use standard conditional probability to
calculate $p(z_i \g \mba_i)$.  Exponential family extensions of
\gls{PCA} are also factor models~\citep{collins2002generalization,
mohamed2009bayesian} as are some deep generative
models~\citep{tran2017deep}, which can be interpreted as a nonlinear
probabilistic PCA.

When the assigned causes are counts then \gls{PF} is an appropriate
factor model \citep{schmidt2009bayesian, cemgil2009bayesian,
gopalan2015scalable}.  \gls{PF} is a probabilistic form of nonnegative
matrix factorization~\citep{lee1999learning, lee2001algorithms}, where
$z_i$ and $\theta_j$ are positive $K$-vectors.  The model is
\begin{align}
\begin{split}
  \label{eq:poisson-factorization}
  Z_{ik} &\sim \textrm{Gamma}(\alpha_0, \alpha_1), \quad k = 1, \ldots, K,\\
  A_{ij} \g Z_i &\sim \textrm{Poisson}(z_i^\top \theta_j),
  \quad j = 1, \ldots, m.
\end{split}
\end{align}
\gls{PF} can be fit to large datasets with efficient variational
methods~\citep{gopalan2015scalable}.  In general, the deconfounder can
use variational methods, or other forms of approximate inference, to
estimate~$p(z_i \g \mba_i)$.

A final example of a factor model is the
\gls{DEF}~\citep{ranganath2015deep}. A \gls{DEF} is a probabilistic
deep neural network.  It uses exponential families to generalize
classical models like the sigmoid belief network
\citep{neal1990learning} and deep Gaussian models
\citep{rezende2014stochastic}. For example, a two-layer \gls{DEF}
models each observation as
\begin{align}
\begin{split}
  \label{eq:deep-exponential-family}
  Z_{2,il} &\sim \textrm{Exp-Fam}_2(\alpha), \quad l = 1, \ldots, L,\\
  Z_{1,ik} \g Z_{2,i} &\sim \textrm{Exp-Fam}_1(g_1(z_{2,i}^\top \theta_{1,k})), \quad k = 1, \ldots, K,\\
  A_{ij} \g Z_{1,i} &\sim \textrm{Exp-Fam}_0(g_0(z_{1,i}^\top \theta_{0,j} )),
  \quad j = 1, \ldots, m.
\end{split}
\end{align}
Here $\textrm{Exp-Fam}$ is an exponential family distribution,
$\theta_*$ are parameters, and $g_*(\cdot)$ are link functions. Each
layer of the \gls{DEF} is a generalized linear model
\citep{mccullagh2018generalized, mccullagh1989generalized}.  The
\gls{DEF} inherits the flexibility of deep neural networks, but uses
exponential families to capture different types of layered
representations and data. For example, if the assigned causes are
counts then $\textrm{Expfam}_0$ can be Poisson; if they are reals then
it can be Gaussian. Approximate inference in \gls{DEF} can be
performed with black box variational
methods~\citep{ranganath2014black}.

\parhead{Predictive checks for the assignment model.}  The
deconfounder requires that its factor model captures the population
distribution of the assigned causes.  To assess the fidelity of the
chosen model, we use predictive checks.  A predictive check compares
the observed assignments with the assignments that would have been
observed under the model.

\newcommand{\heldout}{\textrm{held}}
\newcommand{\rep}{\textrm{rep}}
\newcommand{\obs}{\textrm{obs}}

\glsreset{GWAS}

First hold out a subset of assigned causes for each individual
$a_{i\ell}$, where $\ell$ indexes some held-out causes.  The heldout
assignments are written $\mba_{i,\heldout}$ and note we hold out
randomly selected causes for each individual.  The observed
assignments are written $\mba_{i,\obs}$.

Next fit the factor model to the remaining assignment data $\cD =
\{\mba_{i,\obs}\}_{i=1}^n$. This results in a fitted assignment model
$p(z, \theta \g \mba)$.  For each individual $i$, calculate the local
posterior distribution of $p(z_i \g \mba_{i,\obs})$.

Here is the predictive check.  First sample values for the held-out
causes from their predictive distribution,
\begin{align}
\label{eq:predictive}
p(\mba_{i,\heldout}^{\rep} \g \mba_{i,\obs}) =
\int p(\mba_{i, \heldout} \g z_i) p(z_i \g \mba_{i, \obs}) \dif z_i.
\end{align}
This distribution integrates out the local posterior $p(z_i \g
\mba_{i,\obs})$.  (An approximate posterior also suffices; we discuss
why in \Cref{subsubsec:correct-model}.)

Then compare the replicated data to the held-out data. To compare,
calculate the expected log probability
\begin{align}
t(\mba_{i,\heldout}) = \E{Z}{\log p(\mba_{i,\heldout} \g Z) \g
\mba_{i,\obs}},
\end{align}
which relates to their marginal log likelihood.  In the nomenclature
of posterior predictive checks, this is the ``discrepancy function''
that we use; one can use others.

Finally calculate the predictive score,
\begin{align}
\text{predictive score} = p\left(t(\mba_{i,\heldout}^{\rep}) <
t(\mba_{i, \heldout})\right).
\end{align}
Here the randomness stems from $\mba_{i,\heldout}^{\rep}$ coming from
the predictive distribution in \Cref{eq:predictive}, and we
approximate the predictive score with Monte Carlo.

How to interpret the predictive score?  A good model will produce
values of the held-out causes that give similar log likelihoods to
their real values---the predictive score will not be extreme. A
mismatched model will produce an extremely small predictive score,
often where the replicated data has much higher log likelihood than
the real data. An ideal predictive score is around 0.5. We consider
predictive scores with predictive scores larger than 0.1 to be
satisfactory; we do not have enough evidence to conclude significant
mismatch of the assignment model. Note that the threshold of 0.1 is a
subjective design choice. We find such assignment models that pass
this threshold often yield satisfactory causal estimates in practice.
\Cref{fig:ppc} illustrates a predictive check of a good assignment
model. \Cref{sec:study} shows predictive checks in action.

\begin{figure}[t]
\begin{center}
  \includegraphics[width=0.3\textwidth]{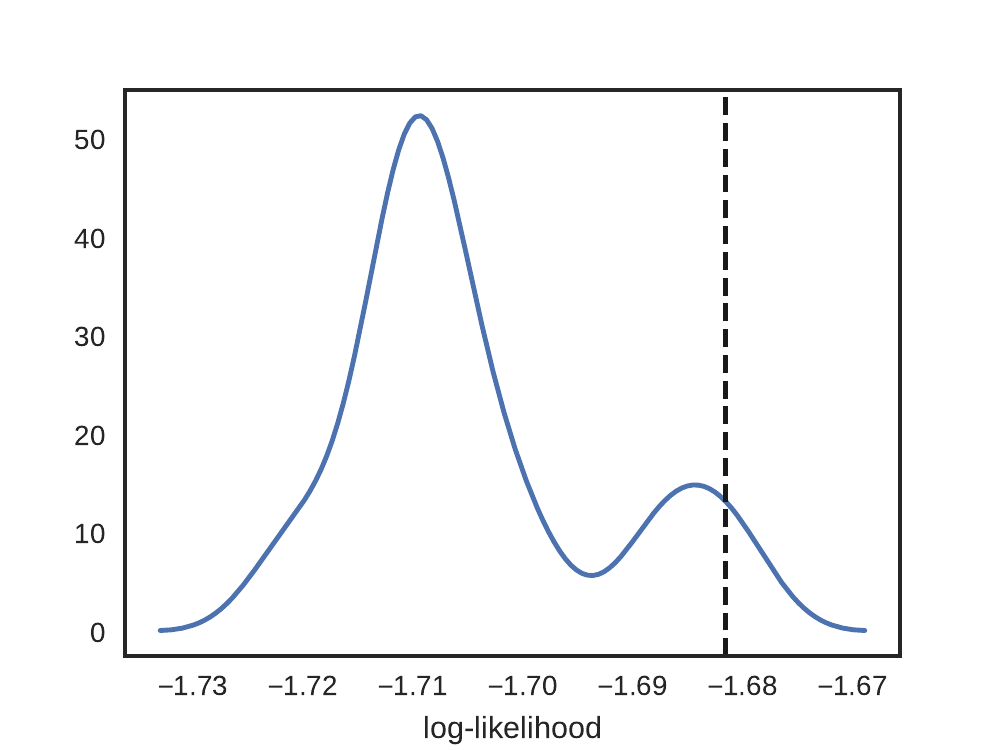}
\end{center}
\caption{Predictive checks for the assignment model. The vertical
dashed line shows $t(\mba_{i, \heldout})$. The blue curve shows the
\gls{KDE} of $t(\mba_{i,\heldout}^{\rep})$. The predictive score is
the area under the blue curve to the left of the vertical dashed line.
The predictive score of this assignment model is larger than 0.1; we
consider it satisfactory.}
\label{fig:ppc}
\end{figure}

Predictive checks blend a circle of related ideas around
\glspl{PPC}~\citep{Rubin:1984}, \glspl{PPC} with realized
discrepancies~\citep{Gelman:1996}, \glspl{PPC} with held-out
data~\citep{Gelfand:1992}, and stage-wise checking of hierarchical
models~\citep{Dey:1998, Bayarri:2007}. They also relate to Bayesian
causal model criticism \citep{tran2016model} and \glspl{PPC} in
\gls{GWAS} \citep{mimno2015posterior}. Finding, fitting, and checking
the factor model also relates to the Box's loop in Bayesian data
analysis \citep{blei2014build, gelman2013bayesian}.

\subsubsection{The outcome model}
\label{subsec:outcome-model}

We described how to fit and check a factor model of multiple assigned
causes.  We now discuss how to fold in the observed outcomes and to
use the fitted factor model to correct for unobserved confounders.

Suppose $p(z_i \g \mba_i, \cD)$ concentrates around a point
$\hat{z}_i$.  Then we can use $\hat{z}_i$ as a confounder.  Follow
\Cref{subsec:classical} to calculate the iterated expectation on the
left side of \Cref{eq:condition-on-confounders}.  However, replace the
observed confounders with the substitute confounder; the goal is to
calculate $\E{}{\E{}{Y_i(\mba) \g \mbA_i = \mba, Z_i}}$.  First,
approximate the outside expectation with Monte Carlo,
\begin{align}
\label{eq:causal-inference}
\E{}{\E{}{Y_i(\mba) \g \mbA_i = \mba, Z_i}} \approx
\frac{1}{n} \sum_{i=1}^{n} \E{Y}{Y_i(\mbA_i) \g \mbA_i = \mba, Z_i =
\hat{z}_i}.
\end{align}
This approximation uses the substitute confounder $\hat{z}_i$,
integrating over its population distribution. It uses the model to
infer the substitute confounder from each data point and then
integrates the distribution of that inferred variable induced by the
population distribution of data.

Turn now to the inner expectation of \Cref{eq:causal-inference}.  We
fit a function to estimate this quantity,
\begin{align}
\label{eq:outcome-model}
\E{}{Y_i(\mbA_i) \g \mbA_i = \mba, Z_i = z} = f(\mba, z).
\end{align}
The function $f(\mba, z)$ is called the \textit{outcome model} and can
be fit from the augmented observed data $\{\mba_i, \hat{z}_i,
y_i(\mba_i)\}$.  For example, we can minimize their discrepancy via
some loss function $\ell$:
\[\hat{f} = \argmin_{f} \sum^n_{i=1}\ell(y_i(\mba_i) - f(\mba_i,
\hat{z}_i)).\] Like the factor model, we can check the outcome
model---it is fit to observed data and should be predictive of
held-out observed data \citep{tran2016model}.

One outcome model we consider is a simple linear function,
\begin{align}
\label{eq:condition-on-z-out}
f(\mba, z) = \beta^\top \mba + \gamma^\top z.
\end{align}
Another outcome model we consider is where $f(\cdot)$ is linear in the
assigned causes $\mba$ and the ``reconstructed assigned causes''
$\hat{\mba}(z) = \E{M}{\mbA \g z}$, an expectation from the fitted
factor model.  This class of functions is
\begin{align}
\label{eq:condition-on-a-hat}
f(\mba, z) = \beta^\top \mba + \gamma^\top \hat{\mba}(z).
\end{align}
This outcome model relates to the generalized propensity score
\citep{imbens2000role, hirano2004propensity}.
\Cref{eq:condition-on-a-hat} can be seen as using $\hat{a}(z)$ as a
proxy for the propensity score, a substitution that is used in
Bayesian statistics \citep{laird1982approximate, tierney1986accurate,
geisser1990validity}; this substitution is justified when higher
moments of the assignment are similar across individuals. In both
models, the coefficient $\beta$ represents the average causal effect
of raising each cause by one individual.

But we are not restricted to linear models. Other outcome models like
random forests \citep{wager2017estimation} and Bayesian additive
regression trees \citep{hill2011bayesian} all apply here.

Note that devising an outcome model is just one approach to
approximating the inner expectation of \Cref{eq:causal-inference}.
Another approach is again to use Monte Carlo.  There are several
possibilities.  In one, group the confounder $\hat{z}_i$ into bins and
approximate the expectation within each bin.  In another, bin by the
propensity score $p(a_i \g \hat{z}_i)$ and approximate the inner
expectation within each propensity-score bin
\citep{rosenbaum1983central, lunceford2004stratification}.  A third
possibility---if the assigned causes are discrete and the number of
causes is small---is to use the propensity score with inverse
propensity weighting \citep{horvitz1952generalization,
rosenbaum1983central, heckman1998characterizing,
dehejia2002propensity}.

\subsubsection{The full algorithm, and an example}

\label{subsec:fullalgo}

We described each component of the deconfounder.
\Cref{alg:deconfounder} gives the full algorithm, a procedure for
estimating \Cref{eq:causal-inference}.  The steps are: (1) find, fit,
and check a factor model to the dataset of assigned causes; (2)
estimate $\hat{z}_i$ for each datapoint; (3) find and fit a outcome
model; (4) use the outcome model and estimated $\hat{z}_i$ to do
causal inference.

\begin{algorithm}[t]
\setstretch{1.10}
\DontPrintSemicolon
\;
\KwIn{a dataset of assigned causes and outcomes $\{(\mba_i, y_i)\}
  ,\,\, i=1, \ldots, n$}
\KwOut{the average potential outcome $\E{}{Y(\mba)}$ for any causes $\mba$}
\Repeat{the assignment check is satisfactory}{
  choose an assignment model from the class in
  \Cref{eq:factor-model}\;
  fit the model to the assigned causes $\{\mba_i\}, \,\, i=1,
  \ldots, n$\;
  check the fitted model $\hat{M}$\;
}

\ForEach{datapoint $i$}{
  calculate $\hat{z}_i = \E{\hat{M}}{Z_i \g \mba_i}$. \;}
\Repeat{the outcome check is satisfactory}{
  choose an outcome model from \Cref{eq:outcome-model}\;
  fit the outcome model to the augmented dataset $\{(\mba_i, y_i,
  \hat{z}_i)\} ,\,\, i=1, \ldots, n$\;
  check the fitted outcome model\;
}

estimate the average causal effect $\E{}{Y(\mba)}-\E{}{Y(\mba')}$ by
\Cref{eq:causal-inference}\;
\caption{The Deconfounder}
\label{alg:deconfounder}
\end{algorithm}


\glsreset{GWAS}

\glsreset{SNP}

\parhead{Example.} Consider a causal inference problem in \gls{GWAS}
\citep{stephens2009bayesian,visscher201710}: how do human genes
causally affect height? Here we give a brief account of how to use the
deconfounder, omitting many of the details.  We analyze \gls{GWAS}
problems extensively in \Cref{subsec:gwasstudy}.

Consider a dataset of $n = 5,000$ individuals; for each individual, we
measure height and genotype, specifically the alleles at $m = 100,000$
locations, called the \glspl{SNP}.  Each \gls{SNP} is represented by a
count of 0, 1, or 2; it encodes how many of the individual's two
nucleotides differ from the most common pair of nucleotides at the
location.  \Cref{tab:tab2_snp} illustrates a snippet of the data (10
individuals).


\begin{table*}[ht]
\scriptsize
  \begin{center}
    \begin{tabular}{lp{9mm}p{9mm}p{9mm}p{9mm}p{9mm}p{9mm}p{9mm}p{9mm}p{9mm}p{3mm}p{12mm}p{15mm}} 
     \toprule
ID $(i)$ & SNP\_1 $(a_{i,1})$ & SNP\_2 $(a_{i,2})$& SNP\_3 $(a_{i,3})$&  SNP\_4 $(a_{i,4})$& SNP\_5 $(a_{i,5})$&SNP\_6 $(a_{i,6})$& SNP\_7 $(a_{i,7})$& SNP\_8 $(a_{i,8})$& SNP\_9 $(a_{i,9})$&$\cdots$&  SNP\_{100K} $(a_{i,100K})$ &Height (feet) $(y_i)$\\
          \midrule
1 & 1 & 0 & 0 & 1 & 0 & 0 & 1 & 2 & 0 & $\cdots$ & 0 & 5.73\\
2 & 1 & 2 & 2 & 1 & 2 & 1 & 1 & 0 & 1 & $\cdots$ & 2 & 5.26\\
3 & 2 & 0 & 1 & 1 & 0 & 1 & 0 & 1 & 1 & $\cdots$ & 2 & 6.24\\
4 & 0 & 0 & 0 & 1 & 1 & 0 & 1 & 2 & 0 & $\cdots$ & 0 & 5.78\\
5 & 1 & 2 & 1 & 1 & 1 & 0 & 1 & 0 & 0 & $\cdots$ & 1 & 5.09 \\
$\vdots$ & & & & & & $\vdots$ & & & & & & $\vdots$ \\
      \bottomrule
    \end{tabular}
    \caption{How do SNPs causally affect height?  This table shows a
      portion of a dataset: simulated SNPs as the multiple causes
      and height as the outcome.\label{tab:tab2_snp}}
  \end{center}
\end{table*}


We simulate such a dataset of genotypes and height. We generate each
individual's genotypes by simulating heterogeneous mixing of
populations \citep{pritchard2000association}. We then generate the
height from a linear model of the \gls{SNP}s (i.e. the assigned
causes) and some simulated confounders.  (The confounders are only
used to simulate data; when running the deconfounder, the confounders
are unobserved.) In this simulated data, the coefficients of the SNPs
are the true causal effects; we denote them $\beta^* = (\beta^*_1,
\ldots, \beta^*_m)$.  See \Cref{subsec:gwasstudy} for more details of
the simulation.

The goal is to infer how the \gls{SNP}s causally affect human height,
even in the presence of unobserved confounders.  The $m$-dimensional
\gls{SNP} vector $\mba_i = (a_{i1}, a_{i2}, ..., a_{im})$ is the
vector of assigned causes for individual $i$; the height $y_i$ is the
outcome. We want to estimate the potential outcome: what would the
(average) height be if we set a person's \gls{SNP} to be $\mba =
(a_{1}, a_{2}, ..., a_{m})$?  Mathematically, this is the average
potential outcome function: $\mathbb{E}[Y_i(\mba)]$, where the vector
of assigned causes $\mba$ takes values in $\{0, 1, 2\}^{m}$.

We apply the deconfounder: model the assigned causes, infer a
substitute confounder, and perform causal inference.  To infer a
substitute confounder, we fit a factor model of the assigned causes.
Here we fit a $50$-factor \gls{PF} model, as in
\Cref{eq:poisson-factorization}.  This fit results in estimates of
non-negative factors $\hat{\theta}_{j}$ for each assigned cause (a
$K$-vector) and non-negative weights $\hat{z}_{i}$ for each individual
(also a $K$-vector).

If the predictive check greenlights this fit, then we take the
posterior predictive mean of the assigned causes as the reconstructed
assignments, $\hat{a}_{j}(z_i) = \hat{z}_i^\top \hat{\theta}_j$. For
brevity, we do not report the predictive check here.  (The model
passes.)  We demonstrate predictive checks for \gls{GWAS} in the
empirical studies of \Cref{subsec:gwasstudy}.

Using the reconstructed assigned causes, we estimate the average
potential outcome function. Here we fit a linear outcome model to the
height $y_i$ against both of the assigned causes $\mba_i$ and
reconstructed assignment $\hat{\mba}(z_i)$,
\begin{align}
  y_i \sim \cN\left(\beta_0 + \beta^\top \mba_i +
  \gamma^\top\hat{\mba}(z_i), \sigma^2\right).
\end{align}
This regression is high dimensional $(m >n)$; for regularization, we
use an $L_2$-penalty on $\beta$ and $\gamma$ (equivalently, normal
priors).  Fitting the outcome model gives an estimate of regression
coefficients $\{\hat{\beta}_0, \hat{\beta}, \hat{\gamma}\}$.  Because
we use a linear outcome model, the regression coefficients
$\hat{\beta}$ estimate the true causal effect $\beta^*$.

\Cref{tab:example} evaluates the causal estimates obtained with and
without the deconfounder. We focus on the \gls{RMSE} of $\hat{\beta}$
to $\beta^*$. (``Causal estimation without the deconfounder'' means
fitting a linear model of the height $y_i$ against the assigned causes
$\mba_i$.) The deconfounder produces closer-to-truth causal estimates.

\glsreset{RMSE}

\begin{table}
  \begin{center}
    \begin{tabular}{lcc}
      \toprule
      & w/o deconfounder & w/ deconfounder \\
      \midrule
      RMSE$\times 10^{-2}$ &49.6&41.2\\ 
     \bottomrule
    \end{tabular}
    \caption{\Gls{RMSE} of the causal coefficients $\hat{\beta}$ with
      and without the deconfounder in a GWAS simulation study. We
      treat this \gls{RMSE} as a metric of how close the estimated
      potential outcome function is to the truth. In this toy problem,
      the deconfounder produces closer-to-truth causal estimates.
      \label{tab:example}}
  \end{center}
\end{table}


\subsection{Connections to genome-wide association studies}
\label{subsec:connections}

Many methods from the research literature, especially around
genome-wide association studies, can be reinterpreted as instances of
the deconfounder algorithm. Each can be seen as positing a factor
model of assigned causes (\Cref{subsec:assignment-model}) and a
conditional outcome model (\Cref{subsec:outcome-model}).

The deconfounder justifies each of these methods as forms of multiple
causal inference and, though predictive checks, points to how a
researcher can usefully compare and assess them.  Most of these
methods were motivated by imagining true unobserved confounding
structure.  However, the theory around the deconfounder shows that a
well-fitted factor model will capture confounders independent of a
researcher imagining what they may be; see the question in
\Cref{subsubsec:correct-model}.

Below we describe many methods from the \gls{GWAS} literature and show
how they can be viewed as deconfounder algorithms.  The \gls{GWAS}
problem is described in \Cref{subsec:fullalgo}.

\parhead{Linear mixed models.} The \gls{LMM} is one the most popular
classes of methods for analyzing GWAS~\citep{yu2006unified,
kang2008efficient, yang2014advantages, lippert2011fast,
loh2015efficient, JMLR:v18:15-143}.  Seen through the lens of the
deconfounder, an
\gls{LMM} posits a linear outcome model that depends on both the SNPs
and a scalar latent factor $Z_i$.

In the \gls{LMM} literature, $Z_i$ is not explicitly drawn from a
factor model; rather, $Z_{1:n}$ are from a multivariate Gaussian whose
covariance matrix, called the ``kinship matrix,'' is calculated from
the observed SNPs $\mba_{1:n}$.  However, this is mathematically
equivalent to posterior latent factors from a one-dimensional
\gls{PCA} model.  Subject to its capturing the distribution of SNPs,
the \gls{LMM} is performing multiple causal inference with a
deconfounder.

\parhead{Principal component analysis.}  A related approach is to
first perform (multi-dimensional) \gls{PCA} on the SNP matrix and then
to estimate an outcome model from the corresponding
residuals~\citep{price2006principal}.  This too is an instance of the
deconfounder.  As a factor model, \gls{PCA} is described in
\Cref{eq:prob-PCA}.  Fitting an outcome model to its residuals is
equivalent to conditioning on the reconstructed assignments,
\Cref{eq:condition-on-a-hat}.

\parhead{Logistic factor analysis.} Closely related to \gls{PCA} is
\gls{LFA}~\citep{song2015testing, hao2015probabilistic}.  \gls{LFA}
can be seen as the following factor model,
\begin{align*}
  \begin{split}
    Z_i &\sim \cN(0, I) \\
    \pi_{ij}\g Z_i &\sim \cN(z_i^\top \theta_j, \sigma^2), \quad j=1,\ldots,m, \\
    A_{ij}\g \pi_{ij} &\sim \textrm{Binomial}(2, \textrm{logit}^{-1}(\pi_{ij})), \quad j=1,\ldots,m.
  \end{split}
\end{align*}
If it captures the SNP matrix well, then $Z_i$ can be viewed as a
substitute confounder.

With \gls{LFA} in hand, \citet{song2015testing} use inverse regression
to perform association tests.  Their approach is equivalent to
assuming an outcome model conditional on the reconstructed assignments
$a(\hat{z}_i)$, again \Cref{eq:condition-on-a-hat}, and subsequently
testing for non-zero coefficients.

In a variant of \gls{LFA}, \citet{tran2017implicit} use a
neural-network based model of the unobserved confounder, connecting
this model to a causal inference with a nonparametric structural
equation model~\citep{Pearl:2009a}. They take an explicitly causal
view of the testing problem.

\parhead{Mixed-membership models.} Finally, many statistical
geneticists use mixed-membership models~\citep{Airoldi:2014} to
capture the latent population structure of SNPs, and then condition on
that structure in downstream analyses~\citep{pritchard2000inference,
pritchard2000association, falush2003inference, falush2007inference}.
In genetics, a mixed-membership model is a factor model that captures
latent ancestral populations.  The latent variable $Z_i$ is on the
$K-1$ simplex; it represents how much individual $i$ reflects each
ancestral population. The observed SNP $A_{ij}$ comes from a mixture
of Binomials, where $Z_i$ determines its mixture proportions.

Using these models, researchers use a linear outcome model conditional
on $z_i$ and devise tests for significant associations
\citep{pritchard2000association, song2015testing, tran2017implicit}.
The deconfounder justifies this practice from a causal perspective,
and underlines the importance of finding a model of population
structure that captures the per-individual distribution of SNPs.


\subsection{A conversation with the reader}
\label{subsec:faq}

In this section, we answer some questions a reader might have.

\subsubsection{Why do I need multiple causes?}

The deconfounder uses latent variables to capture dependence among the
assigned causes. The theory in \Cref{sec:theory} says that a latent
variable which captures this dependence will contain all valid
multi-cause confounders.  But estimating this latent variable requires
evidence for the dependence, and evidence for dependence cannot exist
with just one assigned cause.  Thus the deconfounder requires multiple
causes.

\subsubsection{Is the deconfounder free lunch?}

The deconfounder is not free lunch---it trades confounding bias for
estimation variance. Take an information point of view: the
deconfounder uses a portion of information in the data to estimate a
substitute confounder; then it uses the rest to estimate causal
effects. By contrast, classical causal inference uses all the
information to estimate causal effects, but it must assume
ignorability.  Put differently, while the deconfounder assumes the
weaker assumption of single ignorability, it pays for this flexibility
in the information it has available for causal estimation. Hence the
deconfounder estimate often has higher variance.

Suppose full ignorability is satisfied.  Then both classical causal
inference and the deconfounder provide unbiased causal estimates,
though the deconfounder will be less confident; it has higher
variance. Now suppose only single ignorability is satisfied.  The
deconfounder still provides unbiased causal estimates, but classical
causal inference is biased.

\subsubsection{Why does the deconfounder have two stages? }

\Cref{alg:deconfounder} first fits a factor model to the assigned
causes and then fits the potential outcome function.  This is a two
stage procedure.  Why? Can we fit these two models jointly?

One reason is convenience.  Good models of assigned causes may be
known in the research literature, such as for genetic studies.
Moreover, separately fitting the assignment model allows the
investigator to fit models to any available data of assigned causes,
including datasets where the outcome is not measured.

Another reason for two stages is to ensure that $Z_i$ does not contain
mediators, variables along the causal path between the assigned causes
and the outcome.  Intuitively, excluding the outcome ensures that the
substitute confounders are ``pre-treatment'' variables; we cannot
identify a mediator by looking only at the assigned causes. More
formally, excluding the outcome ensures that the model satisfies
$p(z_i \g \mba_i, y_i(\mba_i)) = p(z_i \g \mba_i)$; this equality
cannot hold if $Z_i$ contains a mediator.

\subsubsection{How does the deconfounder relate to the generalized
propensity score? What about instrumental variables? }

The deconfounder relates to both.

The deconfounder can be interpreted as a generalized propensity score
approach, except where the propensity score model involves latent
variables. If we treat the substitute confounder $Z_i$ as observed
covariates, then the factor model $P(A_i\g Z_i)$ is precisely the
propensity score of the causes $A_i$. With this view, the innovation
of the deconfounder is in $Z_i$ being latent. Moreover, it is the
multiplicity of the causes $A_{i1}, \ldots, A_{im}$ that makes a
latent $Z_i$ feasible; we can construct $Z_i$ by finding a random
variable that renders all the causes conditionally independent.

The deconfounder can also be interpreted as a way of constructing
instruments using latent factor models. Think of a factor model of the
causes with linearly separable noises:
$A_{ij}\stackrel{a.s.}{=}f(Z_i)+\epsilon_{ij}$. Given the substitute
confounder, consider the residual of the causes $\epsilon_{ij}$.
Assuming single ignorability, the variable $\epsilon_{ij}$ is an
instrumental variable for the $j$th cause $A_{ij}$. For example, with
probabilistic
\gls{PCA} the residual is~${\epsilon_{ij} = A_{ij} -
Z_i^\top \theta_j}\sim \cN(0, \sigma^2)$.

The residual $\epsilon_{ij}$ satisfies the requirements of being an
instrument for $A_{ij}$: (1) The residual $\epsilon_{ij}$ correlates
with the cause $A_{ij}$.  (2) The residual $\epsilon_{ij}$ affects the
outcome only through the cause $A_{ij}$; this fact is true because the
substitute confounder $Z_i$ is constructed without using any outcome
information. (3) The residual $\epsilon_{ij}$ cannot be correlated
with a confounder; this is true because $Z_i\perp \epsilon_{ij}$ by
construction from the factor model, where $P(Z_i)$ and $P(A_{ij}\g
Z_i)$ are specified separately.

However, the deconfounder differs from classical instrumental
variables approaches because it uses latent variable models to
construct instruments, rather than requiring that instruments be
observed. The latent variable construction is feasible because the
multiplicity of the causes allows us to construct $Z_i$ and
$\epsilon_{ij}$ from the conditional independence requirement.

\subsubsection{Does the factor model of the assigned causes need to be
  the true assignment model? Which factor model should I choose if
  multiple factor models return good predictive scores?}

\label{subsubsec:correct-model}

Finding a good factor model is not the same as finding the ``true''
model of the assigned causes. We do not assume the inferred variable
$Z_i$ reflects a real-world unobserved variable.

Rather, the deconfounder requires the factor model to capture the
population distribution of the assigned causes and, more particularly,
their dependence structure.  This requirement is why predictive
checking is important.  If the deconfounder captures the population
distribution---if the predictive check returns high predictive
scores---then we can use the inferred local variables $Z_i$ as
substitute confounders.

For the same reason, the deconfounder can rely on approximate
inference methods to infer the substitute confounder.  The predictive
check evaluates whether $Z_i$ provides a good predictive distribution,
regardless of how it was inferred. As long as the model and
(approximate) inference method together give a good predictive
distribution---one close to the population distribution of the
assigned causes---then the downstream causal inference is valid.  We
use approximate inference for most of the factor models we study in
\Cref{sec:study}.

\label{subsubsec:multiplefacpass}

Suppose multiple factor models give similarly good predictive scores
in the predictive check. In this case, we recommend choosing the
factor model with the lowest capacity. Factor models with similar
predictive scores often result in causal estimates with similarly
little bias. But the variance of these estimates can differ. Factor
models with high capacity can compromise overlap and lead to
high-variance estimates; factor models with low capacities tend to
produce lower variance causal estimates.  The empirical study in
\Cref{subsec:smoking} demonstrates this phenomenon.

\subsubsection{Can the causes be causally dependent among themselves?}

\label{subsubsec:factorexist}

When the causes are causally dependent, the deconfounder can still
provide unbiased estimates of the potential outcomes. Its success
relies on a valid substitute confounder.

Note there are cases where a valid substitute confounder cannot exist.
For example, consider a cause $A_1$ that causally affects $A_2$
according to $A_1\sim \cN(0,1), A_2 = A_1 + \epsilon, \epsilon\sim
\cN(0,1)$. In this case, a substitute confounder $Z$ must satisfy
$Z\stackrel{a.s.}{=} A_1$ or $Z\stackrel{a.s.}{=} A_2$, because it
needs to render the two causes conditionally independent. But such a
$Z$ does not satisfy overlap.

On the other hand, causal dependence among the causes does not
necessarily imply the nonexistence of a valid substitute confounder.
Consider a different mechanism for the causal relationship between
$A_1$ and $A_2$,
\begin{align*}
  A_1 &\sim \cN(0,1), \\
  A_2 &= |A_1| + \epsilon, \quad \epsilon\sim \cN(0,1).
\end{align*}
Here $Z \stackrel{a.s}{=} |A_1|$ is a valid substitute confounder; it
satisfies overlap and renders $A_1$ conditionally independent of
$A_2$.

Empirically, it is hard to detect the nonexistence of a valid
substitute confounder without knowing the functional form of how the
causes are structurally dependent. Insisting on using the deconfounder
in this case results in limited overlap and high variance causal
estimates downstream. We will illustrate this phenomenon in
\Cref{subsec:smoking}.

Finally, we recommend applying the deconfounder to non-causally
dependent causes. A valid substitute confounder is guaranteed to exist
in this case; it will both satisfy overlap and render the causes
conditionally independent of each other.

\subsubsection{Should I condition on known confounders and
covariates?}

\label{subsubsec:knownconf} Suppose we also observe known confounders
and other covariates $X_i$. The deconfounder maintains its theoretical
properties when we condition on observed covariates $X_i$ as well as
infer a substitute confounder $Z_i$. In particular, if $X_i$ is
``pre-treatment'' ---it does not include any mediators---then the
causal estimate will be unbiased \citep{imai2004causal} (also see
\Cref{thm:deconfounderfactor} below).  In general, it is good to
condition on observed confounders, especially if they may contain
single-cause confounders.

That said, we do not need to condition on observed confounders that
affect more than one of the causes; it suffices to condition only on
the substitute confounder $Z_i$.  And there is a trade off.
Conditioning on covariates $X_i$ maintains unbiasedness but it hurts
efficiency.  If the true causal effect size is small then large
confidence or credible intervals will conclude these small effects as
insignificant---inefficient causal estimates can bury the real causal
effects. The empirical study in \Cref{subsec:smoking} explores this
phenomenon.

\subsubsection{How can I assess the uncertainty of the deconfounder?}

\label{subsubsec:uncertainty} The uncertainty in the deconfounder
comes from two sources, the factor model and the outcome model. The
deconfounder first fits (and checks) the factor model; it gives a
substitute confounder $Z_i\sim p(z_i\g \mba_i)$. It then uses the mean
of the substitute confounder $\hat{z}_i=\E{\hat{M}}{Z_i\g \mba_i}$ to
fit an outcome model $p(y_i\g \mba_i, \hat{z}_i)$ and compute the
potential outcome estimate $\E{}{Y_i(\mba)}$.

To assess the uncertainty of the deconfounder, we consider the
uncertainty from both sources. We first draw $s$ samples $\{z_i^{(1)},
\ldots, z_i^{(s)}\}$ of the substitute confounder:
$z_i^{(l)}\stackrel{iid}{\sim} p(z_i\g \mba_i), l=1, \ldots, s$. For
each sample $z_i^{(l)}$, we fit an outcome model and compute a point
estimate of the potential outcome. (If the outcome model is
probabilistic, we compute the posterior distribution of its
parameters; this leads to a posterior of the potential outcome.) We
aggregate the estimates of the potential outcome (or its
distributions) from the $s$ samples $\{z_i^{(1)}, \ldots,
z_i^{(s)}\}$; the aggregated estimate is a collection of point
estimates of the potential outcome (or a mixture of its posterior
distributions). The variance of this aggregated estimate describes the
uncertainty of the deconfounder; it reflects how the finite data
informs the estimation of the potential outcome. In a two-cause
smoking study, \Cref{subsec:smoking} illustrates this strategy for
calculating the uncertainty of the deconfounder.



\section{Empirical studies}

\label{sec:study}

\glsresetall

We study the deconfounder in three empirical studies. Two studies
involve simulations of realistic scenarios; these help assess how well
the deconfounder performs relative to ground truth.  In
\Cref{subsec:smoking} we study semi-synthetic data about smoking; the
causes are a real dataset about smoking and the effect (medical
expenses) is simulated.  In \Cref{subsec:gwasstudy} we study
semi-synthetic data about genetics.  Finally, in
\Cref{subsec:moviereal} we study real data about actors and movie
revenue; there is no simulation.  All three of these studies
demonstrate the benefits of the deconfounder.  They show how
predictive checks reveal potential issues with downstream causal
inference and how the deconfounder can provide closer-to-truth causal
estimates.

Each stage of the deconfounder requires computation: to fit the factor
model, to check the factor model, to calculate the substitute
deconfounder, and to fit the outcome model.  In all these stages, we
use \gls{BBVI} \citep{ranganath2014black} as implemented in Edward, a
probabilistic programming system \citep{tran2017deep, tran2016edward}.
(This was a choice; the deconfounder can be used with other methods
for calculating the posterior and fitting models. For example, we can
also use Stan \citep{carpenter2017stan}, which is a probabilistic
programming language available in R \citep{team2013r}.)

\newcommand{\age}{\textrm{age}}
\newcommand{\exposure}{\textrm{exp}}
\newcommand{\marital}{\textrm{mar}}
\newcommand{\linear}{\textrm{line}}
\newcommand{\quadr}{\textrm{quad}}

\subsection{Two causes: How smoking affects medical expenses}
\label{subsec:smoking}

We first study the deconfounder with semi-synthetic data about
smoking.  The 1987 National Medical Expenditures Survey (NMES)
collected data about smoking habits and medical expenses in a
representative sample of the U.S. population \citep{imai2004causal,
nmes1987}. The dataset contains 9,708 people and 8 variables about
each.  For each person, we focus on the current marital status
($a_{\marital}$), the cumulative exposure to smoking
($a_{\exposure}$), and the last age of smoking ($a_{\age}$). (We
standardize all variables.)

\parhead{A true outcome model and causal inference problem.} We use
the assigned causes from the survey to simulate a dataset of medical
expenses, which we will consider as the outcome variable.  Our true
model is linear,
\begin{align}
  \label{eq:true-model}
  y_i = \beta_{\marital} \, a_{\marital,i}
  + \beta_{\exposure} \, a_{\exposure,i}
  + \beta_{\age} \, a_{\age,i}
  + \varepsilon_i,
\end{align}
where $\varepsilon_i \sim \cN(0,1)$.  We generate the true causal
coefficients from
\begin{align}
  \beta_{\marital} \sim\cN(0,1) \quad
  \beta_{\exposure} \sim\cN(0,1) \quad
  \beta_{\age}\sim\cN(0,1).
\end{align}
and from these coefficients we generate the outcome for each
individual.  The result is a semi-synthetic dataset of 9,708 tuples
$(a_{\marital,i}, a_{\exposure,i}, a_{\age,i}, y_i)$.  The assigned
causes are from the real world, but we know the true outcome model.
Note that the last smoking age is a multi-cause confounder---it
affects both marital status and exposure and is one of the causes of
the expenses.

We are interested in the causal effects of marital status and smoking
exposure on medical expenses.  But suppose we do not observe age; it
is an unobserved confounder.  We can use the deconfounder to solve the
problem.

\begin{figure}[t]
  \centering
  \begin{subfigure}[b]{0.24\textwidth}
    \centering
    \includegraphics[scale=0.4]{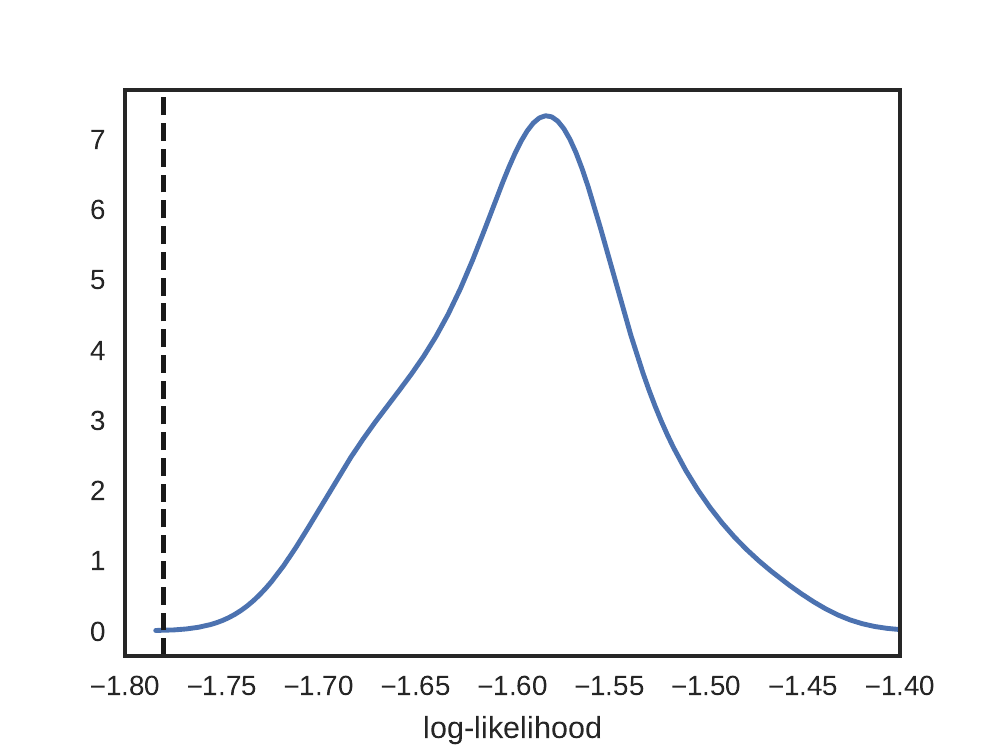}
    \caption{Linear Model\label{fig:linearppc}}
  \end{subfigure}%
  \begin{subfigure}[b]{0.24\textwidth}
    \centering
    \includegraphics[scale=0.4]{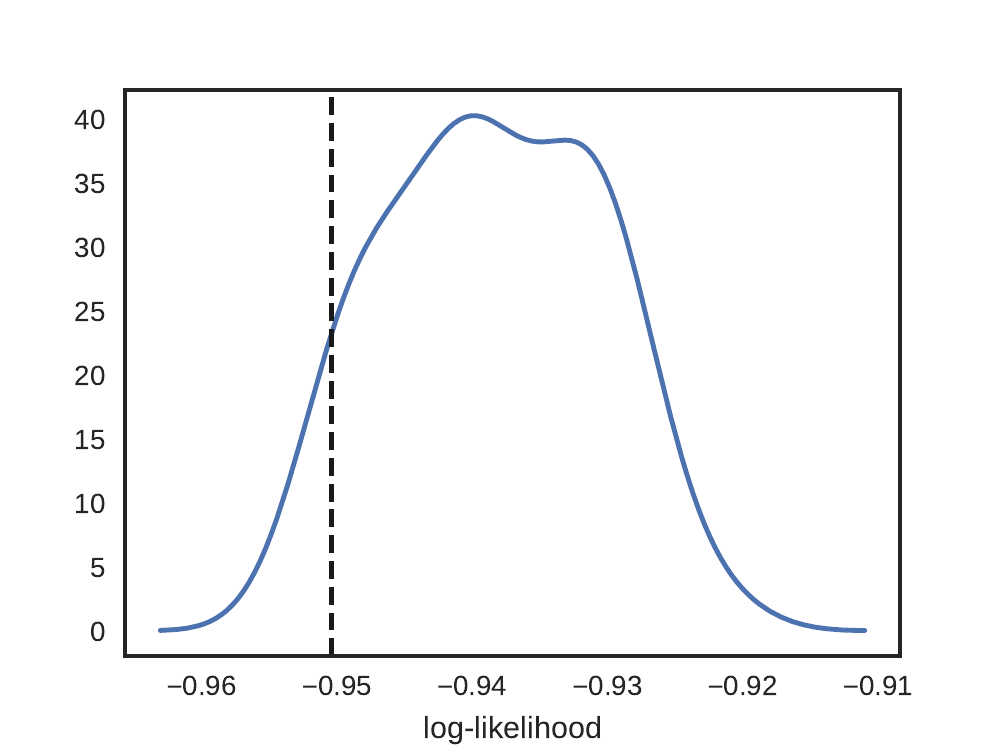}
    \caption{Quadratic Model\label{fig:quadraticppc}}
  \end{subfigure}
  \caption{Predictive checks for the substitute confounder $z$
    obtained from a linear factor model (a) and a quadratic factor
    model (b). The blue line is the \gls{KDE} of the
    test-statistic based on the predictive distribution. The dashed
    vertical line shows the value of the test-statistic on the
    observed dataset. The figure shows that the linear model
    mismatches the data---the observed statistic falls in a low
    probability region of the \gls{KDE}. The quadratic factor model is
    a better fit to the data.}
    \label{fig:smoking-ppc}
\end{figure}

\parhead{Modeling the assigned causes.}  We begin by finding a good
factor model of the assigned causes $(a_{\marital,i}, a_{\exp,i})$.
Because there are two observed assigned causes, we consider models
with a single scalar latent variable for overlap considerations.  (See
\Cref{subsec:identificationstrat}.)  We consider two factor models.

The first is a linear factor model,
\begin{align}
  z_{\linear,i} &\sim \cN(0,\sigma^2) \\
  a_{\marital,i} &= \eta^{(1)}_{\marital} \, z_{\linear,i} + \eta_{\marital}^{(0)} +
  \epsilon_{i,\marital}\\
  a_{\exp,i} &= \eta^{(1)}_{\exp} \, z_{\linear,i} + \eta_{\exp}^{(0)} +
               \varepsilon_{i,\exp},
\end{align}
where all errors are standard normal.  We fit this model with
variational Bayes \citep{blei2017variational}, which gives us
posterior estimates of the substitute confounders $z_{\linear,i}$.
Then we use the predictive check to evaluate it: following
\Cref{subsec:assignment-model}, we hold out a subset of the assigned
causes and using the expected log probability as the test statistic.
The resulting predictive score is 0.03, which signals a model
mismatch.  See \Cref{fig:smoking-ppc} (a).

We next consider a quadratic factor model,
\begin{align}
  z_{\quadr,i} &\sim \cN(0,\sigma^2) \\ a_{\marital,i} & =
               \eta^{(1)}_{\marital} \, z_{\quadr,i} +
               \eta^{(2)}_{\marital} \, z_{\quadr,i}^2 +
               \eta_{\marital}^{(0)} + \varepsilon_{i,\marital} \\
  a_{\exp,i} &= \eta^{(1)}_{\exp} \, z_{\quadr,i} +
               \eta_{\exp}^{(2)} \, z_{\quadr,i}^2 +
               \eta_{\exp}^{(0)} + \varepsilon_{i, \exp},
\end{align}
where all errors are standard normal.  We again fit this model with
variational Bayes and used a predictive check.  The resulting
predictive score is 0.12, \Cref{fig:smoking-ppc} (b).  This value
gives the green light.  We use the model's posterior estimates
$\hat{z}_i \sim p_{\quadr}({Z \g A=a_i})$ to form a substitute
confounder in a causal inference.

\parhead{Deconfounded causal inference.} Using a factor model to
estimate substitute confounders, we proceed with causal inference.  We
set the outcome model of $\E{}{Y(A_{\marital}, A_{\exposure}) \g A,
Z}$ to be linear in $a_{\marital}$ and $a_{\exposure}$.  In one form,
the linear model conditions on $\hat{z}$ directly.  In another it
conditions on the reconstructed causes, e.g. for the quadratic model
and for age,
\begin{align}
  a_{\marital,i}(\hat{z}_i) = \E{\quadr}{A_{\marital} \g Z = \hat{z}_i}.
\end{align}
See \Cref{eq:condition-on-a-hat}.

We use predictive checks to evaluate the outcome models. Conditioning
on $\hat{z}$ gives a predictive score of 0.05; conditioning on
$a(\hat{z})$ gives a predictive score of 0.18. The model with
reconstructed causes is better.

If the outcome model is good and if the substitute confounder captures
the true confounders then the estimated coefficients for age and
exposure will be close to the true $\beta_{\marital}$ and
$\beta_{\exposure}$ of \Cref{eq:true-model}.  We emphasize that
\Cref{eq:true-model} is the true mechanism of the simulated world,
which the deconfounder does not have access to.  The linear model we
posit for $\E{}{Y(A_{\marital}, A_{\exposure}) \g A, Z}$ is a
functional form for the expectation we are trying to estimate.

\parhead{Performance.}  We compare all combinations of factor model
(linear, quadratic) and outcome-expectation model (conditional on
$\hat{z}_i$ or $a(\hat{z}_i)$).  \Cref{tab:twocause} gives the
results, reporting the total bias and variance of the estimated causal
coefficients $\beta_{\marital}$ and $\beta_{\exposure}$. We compute
the variance by drawing posterior samples of the substitute confounder
and the resulting posterior samples of the causal coefficients.

\Cref{tab:twocause} also reports the estimates if we had observed the
age confounder (oracle), and the estimates if we neglect causal
inference altogether and fit a regression to the confounded data.
Neglecting causal inference gives biased causal estimates; observing
the confounder corrects the problem.

How does the deconfounder fare?  Using the deconfounder with a linear
factor model yields biased causal estimates, but we predicted this
peril with a predictive check.  Using the deconfounder with the
quadratic assignment model, which passed its predictive check,
produces less biased causal estimates. (The estimate with
one-dimensional $z_{\textrm{quad}}$ was still biased, but the outcome
check revealed this issue.)

We also use this simulation study to illustrate a few questions
discussed in \Cref{subsec:faq}:
\begin{itemize}

\item \parhead{What if multiple factor models pass the check?
    (\Cref{subsubsec:multiplefacpass})} We fit to the causes
  one-dimensional, two-dimensional, and three-dimensional quadratic
  factor models. All three models pass the check. \Cref{tab:twocause}
  shows that they yield estimates with similar bias. However, factor
  models with higher capacity in general lead to higher variance. The
  one-dimensional factor model, which is the smallest factor model
  that passes the check achieve the best mean squared error.

\item \parhead{Should we additionally condition on the observed
    covariates?  (\Cref{subsubsec:knownconf})} \Cref{tab:twocause}
  shows that using the deconfounder, along with covariates, preserves
  the unbiasedness of the causal estimates, but it inflates the
  variance. (The covariates include gender, race, seat belt usage,
  education level, and the age of starting to smoke.)

\item \parhead{What if some causes are causally dependent among themselves?
  (\Cref{subsubsec:factorexist})} We repeat the above experiments with
  the same confounder $a_{\age}$ but three causes: $a_{\marital},
  a_{\exposure}$ and an additional cause $a_{\marital+}$. We assume
  $a_{\marital+}$ causally depend on $a_{\marital}$, where
  \begin{align} 
  a_{\marital+} = a_{\marital} +
  \varepsilon_{i,\marital+}, \qquad \varepsilon_{i,\marital+}\sim\cN(0,1).
  \label{eq:dependentcause}
  \end{align}
  We simulate the outcome from
  \begin{align}
  y_i = \beta_{\marital} \, a_{\marital,i}
  + \beta_{\exposure} \, a_{\exposure,i}
  + \beta_{\age} \, a_{\age,i}
  + \beta_{\marital+} \, a_{\marital+,i}
  + \varepsilon_i,
\end{align}
where $\varepsilon_i \sim \cN(0,1)$.  We generate the true causal
coefficients from
\begin{align}
  \beta_{\marital} \sim\cN(0,1) \quad
  \beta_{\exposure} \sim\cN(0,1) \quad
  \beta_{\age}\sim\cN(0,1) \quad 
  \beta_{\marital+}\sim\cN(0,1).
\end{align}
\Cref{eq:dependentcause} implies that theoretically there exists no
substitute confounders that can both satisfy overlap and render the
causes conditionally independent; see discussion in
\Cref{subsubsec:factorexist}.

Nevertheless, we apply the deconfounder to this data. We model the
three causes with one-dimensional linear and quadratic factor model;
both pass the predictive check, with a predictive score of 0.28 and
0.20. \Cref{tab:dependentcause} shows the bias and variance of the
deconfounder estimate of $\beta_{\marital}$ and $\beta_{\exposure}$.
With causally dependent causes (\Cref{tab:dependentcause}), the
deconfounder estimates have much larger variance than usual
(\Cref{tab:twocause}); it signals that the substitute confounder we
constructed is close to breaking overlap. That said, the deconfounder
is still able to correct for a substantial portion of confounding
bias.
\end{itemize}


\begin{table}[t]
  \begin{center}
    \begin{tabular}{lcrrr} 
      \toprule
      & Check & Bias$^2 \times 10^{-2}$ & Variance $\times 10^{-2}$ & MSE $\times 10^{-2}$\\
      \midrule
      No control & -- & 24.19 & 0.28 & 24.48 \\      
      Control for age (oracle) & -- & 5.06 & 0.07 & 5.14\\
      \midrule
      \textbf{Deconfounder}\\
      \midrule 
      Control for 1-dim $z_{\linear}$ & \xmark & 21.51 & 4.48 & 25.99\\
      Control for 1-dim $a(z_{\linear})$ & \xmark & 20.02 &  4.77 & 24.80\\
      Control for 1-dim $z_{\quadr}$ & \cmark & 17.77 & 5.59 & 23.36\\
      Control for 1-dim $a(z_{\quadr})$ & \cmark & \textbf{11.55} & 5.95 & \textbf{17.51}\\
      \cdashlinelr{1-5}
      Control for 2-dim $z_{\quadr}$ & \cmark & 15.08  & 7.49 & 22.58\\
      Control for 2-dim $a(z_{\quadr})$ & \cmark & \textbf{12.47} & 6.95 & \textbf{19.42}\\
      Control for 3-dim $z_{\quadr}$ & \cmark & 16.24 & 7.74 & 23.99\\
      Control for 3-dim $a(z_{\quadr})$ & \cmark & 13.62 & 8.91 & 22.53\\
      \midrule
      \textbf{Deconfounder with covariates}\\
      \midrule
      Control for 1-dim $z_{\quadr}, x$ & \cmark & 16.15 & 6.22 & 22.38\\
      Control for 1-dim $a(z_{\quadr}), x$ & \cmark & \textbf{14.47} & 7.55 & \textbf{22.03}\\
\bottomrule
    \end{tabular}
    \caption{Total bias and variance of the estimated causal
coefficients $\beta_{\exposure}$ and
$\beta_{\marital}$\label{tab:twocause}. (``Control for xxx'' means we
include xxx as a covariate in the linear outcome model. The
\cmark~symbol indicates the factor model gives a predictive score
larger than 0.1; the \xmark~symbol indicates otherwise.) Not
controlling for confounders yields biased causal estimates. So does
using deconfounder with a poor $Z$-model that fails model checking.
Deconfounder with a good $Z$-model and a good outcome model
significantly reduces the bias in causal estimates; controlling for
the ``reconstructed causes'' $\hat{a}$ yields less biased estimates
than the substitute confounder $Z$. Models that pass the check usually
yield estimates with similar bias, but their variance grows as the
capacity of the model grows. Using deconfounder along with covariates
preserves the reduction in bias; yet, it inflates the variance.}
  \end{center}
\end{table}

\begin{table}[t]
  \begin{center}
    \begin{tabular}{lcrrr} 
      \toprule
      & Check & Bias$^2 \times 10^{-2}$ & Variance $\times 10^{-2}$ & MSE $\times 10^{-2}$\\
      \midrule
      No control & -- & 41.89 & 0.01 & 41.90 \\   
      Control for age (oracle) & -- & 22.57 & 0.01 & 22.57\\   
      \midrule 
      Control for 1-dim $z_{\linear}$ & \cmark & 29.98 & 16.97 & 46.96\\
      Control for 1-dim $a(z_{\linear})$ &  \cmark & 28.01 & 18.49 & 46.50\\
      \cdashlinelr{1-5}
      Control for 1-dim $z_{\quadr}$ & \cmark & \textbf{25.10} & 16.70 & \textbf{41.80}\\
      Control for 1-dim $a(z_{\quadr})$ & \cmark & 27.46 & 15.77 & 43.23\\
      \bottomrule
    \end{tabular}
    \caption{Total bias and variance of the estimated causal
coefficients $\beta_{\exposure}$ and $\beta_{\marital}$ when there is a
third cause dependent on $a_{\marital}$. The nonlinear factor model
outperforms linear factor model. The deconfounder estimate has much
higher variance than usual (\Cref{tab:twocause}) when two of the
causes are dependent.\label{tab:dependentcause}}
  \end{center}
\end{table}

This study provides two takeaway messages: (1) It is crucial to check
both the assignment model and the outcome model; (2) Unless a
single-cause confounder believably exists, we do not need to accompany
the deconfounder with other observed covariates; (3) Use the
deconfounder.

\subsection{Many causes: Genome-wide association studies}
\label{subsec:gwasstudy}
\glsresetall

Analyzing gene-wide association studies (GWAS) is an important problem
in modern genetics \citep{stephens2009bayesian, visscher201710}.  The
GWAS problem involves large datasets of human genotypes and a trait of
interest; the goal is to determine how genetic variation is causally
connected to the trait.  GWAS is a problem of multiple causal
inference: for each individual, the data contains a trait and hundreds
of thousands of \glspl{SNP}, measurements on various locations on the
genome.

One benefit of GWAS is that biology guarantees that genes are
(typically) cast in advance; they are potential causes of the trait,
and not the other way around.  However there are many confounders. In
particular, any correlation between the SNPs could induce confounding.
Suppose the value of SNP $i$ is correlated with the value of SNP $j$,
and SNP $j$ is causal for the outcome.  Then a naive analysis will
find a connection between gene $i$ and the outcome.  There can be many
sources of correlation; common sources include population structure,
i.e., how the genetic codes of an individuals exhibits their ancestral
populations, and lifestyle variables.  We study how to use the
deconfounder to analyze GWAS data. (Many existing methods to analyze
GWAS data can be seen as versions of the deconfounder; see
\Cref{subsec:connections}.)

\parhead{Simulated GWAS data and the causal inference problem.}  We
put the GWAS problem into our notation.  The data are tuples
$(\mba_i, y_i)$, where $y_{i}$ is a real-valued trait and
$a_{ij} \in \{0,1,2\}$ is the value of SNP $j$ in individual $i$.
(The coding denotes ``unphased data,'' where $a_{ij}$ codes the number
of minor alleles---deviations from the norm---at location $j$ of the
genome.)  As usual, our goal is to estimate aspects of the
distribution of $y_i(\mba)$, the trait of interest as a function of a
specific genotype.

We generate synthetic GWAS data.  Following \citet{song2015testing},
we simulate genotypes $\mba_{1:n}$ from an array of realistic models.
These include models generated from real-world fits, models that
simulate heterogeneous mixing of populations, and models that simulate
a smooth spatial mixing of populations.  For each model, we produce
datasets of genotypes with 100,000 SNPs and 1000-5000 individuals.
\Cref{sec:genesimdetail} details the configurations of the simulation.

With the individuals in hand, we next generate their traits.  Still
following \citet{song2015testing}, we generate the outcome (i.e., the
trait) from a linear model,
\begin{align}
  \label{eq:gwas-outcome}
  y_i = \sum_{j} \beta_j a_{ij} + \lambda_{c_i} + \varepsilon_i.
\end{align}
To introduce further confounding effects, we group the individuals by
their SNPs; the $i$th individual is in group $c_i$.
(\Cref{sec:genesimdetail} describes how individuals are grouped.) Each
group is associated with a per-group intercept term $\lambda_c$ and a
per-group error variance $\sigma_c$, where the noise
$\varepsilon_i \sim \cN(0, \sigma_c^2)$. In our empirical study, the
group indicator of each individual is an unobserved confounder.

In \Cref{eq:gwas-outcome}, SNP $j$ is associated with a true causal
coefficient $\beta_j$.  We draw this coefficient from $\cN(0, 0.5^2)$
and truncate so that 99\% of the coefficients are set to zero (i.e.,
no causal effect).  Such truncation mimics the sparse causal effects
that are found in the real world.  Further, we impose a low
signal-to-noise ratio setting; we design the intercept and random
effects such that the SNPs $\sum_{j} \beta_j a_{ij}$ contributes 10\%
of the variance, the per-group intercept $\lambda_{c_i}$ contributes
20\% , and the error $\varepsilon_i$ contributes 70\%. We also study a
high signal-to-noise ratio setting where the SNPs signal contributes
40\%, the per-group intercept contributes 40\% and the error
contributes 20\%.

In a separate set of studies, we generate binary outcomes.  They come
from a generalized linear model,
\begin{align}
  \label{eq:gwas-outcome}
  y_i \sim \text{Bernoulli}\left(
  \frac{1}{1 +
  \exp(\sum_{j} \beta_j a_{ij} + \lambda_{c_i} + \varepsilon_i)}
  \right).
\end{align}
We will study the deconfounder for both binary or real-valued
outcomes.

For each true assignment model of $\mba_{i}$, we simulate 100 datasets
of genotypes $\mba_i$, causal coefficients $\beta_j$, and outcomes
$y_i$ (real and binary).  For each, the causal inference problem is to
infer the causal coefficients $\beta_j$ from tuples $(\mba_i, y_i)$.
The unobserved confounding lies in the correlation structure of the
SNPs and the unobserved groups.  We correct it with the deconfounder.

\parhead{Deconfounding GWAS.} We apply the deconfounder with five
assignment models discussed in \Cref{subsec:deconfounder}: \gls{PPCA},
\gls{PF}, \glspl{GMM}, the three-layer
\gls{DEF}, and \gls{LFA}; none of these models is the true assignment
model. (We use $50$ latent dimensions so that most pass the predictive
check; for the \gls{DEF} we use the structure $[100, 30, 15]$.) We fit
each model to the observed SNPs and check them with the per-individual
predictive checks from \Cref{subsec:assignment-model}.

With the fitted assignment model, we estimate the causal effects of
the SNPs.  For real-valued traits, we use a linear model conditional
on the \gls{SNP}s and the reconstructed causes $a(\hat{z})$; see
\Cref{eq:condition-on-a-hat}. Each assignment model gives a different
form of $a(\hat{z})$.  For the binary traits, we use a logistic
regression, again conditional on the SNPs and reconstructed causes.
We emphasize that these are not the true model of the outcome, but
rather models of the random potential outcome function.

\glsreset{RMSE}

\parhead{Performance.}  We study the deconfounder for
GWAS.
\Cref{tab:highSNRBN,tab:highSNRTGP,tab:highSNRHGDP,tab:highSNRPSD,tab:highSNRspatial,tab:BN,tab:TGP,tab:HGDP,tab:PSD,tab:spatial}
present the full results across the 11 different configurations and
both high and low \gls{SNR} settings. Each table is attached to a true
assignment model and reports results across different factor models of
the SNPs.  For each factor model, the tables report the results of the
predictive check and the \gls{RMSE} of the estimated causal
coefficients (for real-valued and binary-valued outcomes).
\Cref{tab:highSNRBN,tab:highSNRTGP,tab:highSNRHGDP,tab:highSNRPSD,tab:highSNRspatial,tab:BN,tab:TGP,tab:HGDP,tab:PSD,tab:spatial}
also report the error if we had observed the confounder and if we
neglect causal inference by fitting a regression to the confounded
data.

\glsreset{LMM}

On both real and binary outcomes, the deconfounder gives good causal
estimates with \gls{PPCA}, \gls{PF}, \gls{LFA}, \glspl{LMM}, and
\glspl{DEF}: they produce lower \gls{RMSE}s than blindly fitting
regressions to the confounded data. (The linear mixed model does not
explicitly posit an assignment model so we omit the predictive check.
It can be interpreted as the deconfounder though; see
\Cref{subsec:connections}.)  Notably, the deconfounder often
outperforms the regression where we include the (unobserved)
confounder as a covariate under the low \gls{SNR} setting; see
\Cref{tab:BN,tab:TGP,tab:HGDP,tab:PSD}.

In general, predictive checks of the factor models reveal downstream
issues with causal inference: better factor models of the assigned
causes, as checked with the predictive checks, give closer-to-truth
causal estimates. For example, the \gls{GMM} does not perform well as
a factor model of the assignments; it struggles with fitting
high-dimensional data and can amplify the causal effects (see
e.g. \Cref{tab:spatial}). But checking the \gls{GMM} signals this
issue beforehand; the \gls{GMM} constantly yields close-to-zero
predictive scores in predictive checks.

Among the assignment models, the three-layer \gls{DEF} almost always
produces the best causal estimates. Inspired by deep neural networks,
the \gls{DEF} has layered latent variables; see
\Cref{subsec:assignment-model}.  The \gls{DEF} model of SNPs uses
Gamma distributions on the latent variables (to induce sparsity) and a
bank of Poisson distributions to model the observations.

The deconfounder is most challenged when the assigned SNPs are
generated from a spatial model; see
\Cref{tab:highSNRspatial,tab:spatial}. The spatial model produces
spatially-correlated individuals; its parameter $\tau$ controls the
spatial dispersion. (Consider each individual to sit in a unit square;
as $\tau \rightarrow 0$, the individuals are placed closer to the
corners of the unit square while when $\tau = 1$ they are distributed
uniformly.)  The five factor models---\gls{PPCA},
\gls{PF}, \gls{LFA}, \gls{GMM}, \gls{LMM}, and \gls{DEF}---all produce
closer-to-truth causal estimates than when ignoring confounding
effects.  But they are farther from the truth than the estimates that
use the (unobserved) confounder. Again, the predictive check hints at
this issue.  When the true distribution of SNPs is a spatial model,
the predictive scores are generally more extreme (i.e., closer to zero).

\begin{figure}[t]
  \centering
  \begin{subfigure}[b]{0.2\textwidth}
    \centering
    \includegraphics[scale=0.6]{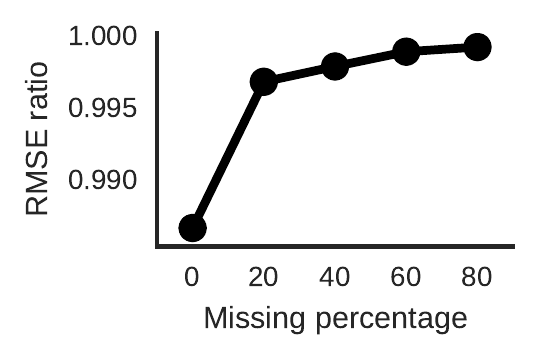}
    \caption{Balding-Nichols}
  \end{subfigure}%
  \begin{subfigure}[b]{0.2\textwidth}
    \centering
    \includegraphics[scale=0.6]{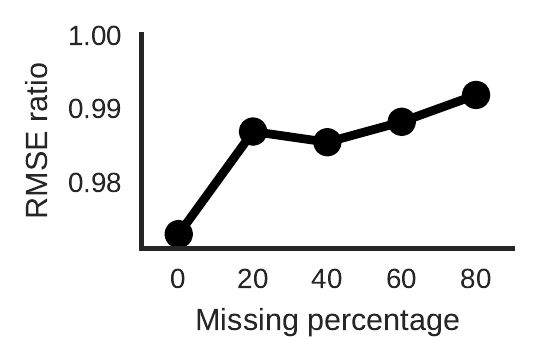}
    \caption{TGP}
  \end{subfigure}%
  \begin{subfigure}[b]{0.2\textwidth}
    \centering
    \includegraphics[scale=0.6]{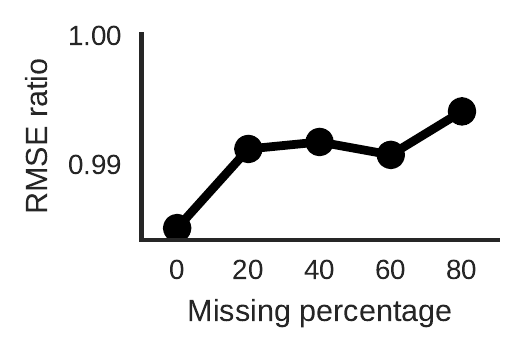}
    \caption{HDGP}
  \end{subfigure}%
  \begin{subfigure}[b]{0.2\textwidth}
    \centering
    \includegraphics[scale=0.6]{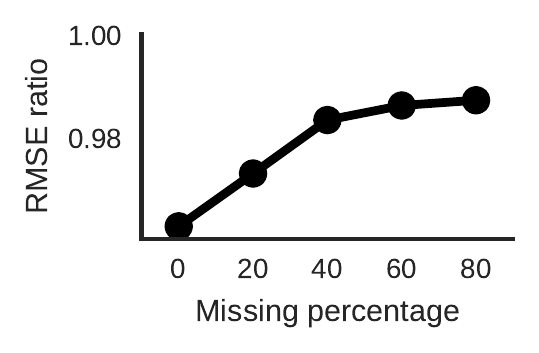}
    \caption{PSD $(\alpha=0.01)$}
  \end{subfigure}%
  \begin{subfigure}[b]{0.2\textwidth}
    \centering
    \includegraphics[scale=0.6]{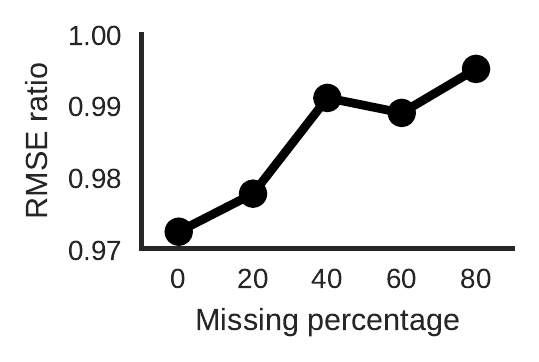}
    \caption{Spatial $(\tau=0.1)$}
  \end{subfigure}%
  \caption{The \gls{RMSE} ratio between the deconfounder with
    \gls{DEF} and ``No control'' across simulations when only a subset
    of causes are unobserved. (Lower ratios means more correction.) As
    the percentage of observed causes decreases, the single strong
    ignorability is compromised; the deconfounder can no longer
    correct for all latent confounders. }
    \label{fig:missingtreat}
\end{figure}

\parhead{Partially observed causes.} Finally, we study the situation
where some assigned causes are unobserved, that is, where some of the
SNPs are not measured.  Recall that the deconfounder assumes
\emph{single strong ignorability}, that all single-cause confounders
are observed.  This assumption may be plausible when we measure all
assigned causes but it may well be compromised when we only observe a
subset---if a confounder affects multiple causes but only one of those
causes is observed then the confounder becomes a single-cause
confounder.

Using the simulated GWAS data, we randomly mask a percentage of the
causes. We then use the deconfounder to estimate the causal effects of
the remaining causes.  To simplify the presentation, we focus on the
DEF factor model.  \Cref{fig:missingtreat} shows the ratio of the
\gls{RMSE} between the deconfounder and ``no control''; a ratio closer
to one indicates a more biased causal estimate.  Across simulations,
the \gls{RMSE} ratio increases toward one as the percentage of
observed causes decreases.  With fewer observed causes, it becomes
more likely for single-strong ignorability to be compromised.

\parhead{Summary.}  These studies provide three take-away messages:
(1) The deconfounder can produce closer-to-truth causal estimates,
especially when we observe many assigned causes; (2) Predictive checks
reveal downstream issues with causal inference, and better factor
models give better causal estimates; (3) \glspl{DEF} can be a handy
class of factor models in the deconfounder.

\subsection{Case study: How do actors boost movie earnings?}
\label{subsec:moviereal}

We now return to the example from \Cref{sec:introduction}: How much
does an actor boost (or hurt) a movie's revenue? We study the
deconfounder with the TMDB 5000 Movie
Dataset.\footnote{https://www.kaggle.com/tmdb} It contains 901 actors
(who appeared in at least five movies) and the revenue for the 2,828
movies they appeared in. The movies span 18 genres and 58 languages.
(More than 60\% of the movies are in English.) We focus on the cast
and the log of the revenue. Note that this is a real-world
observational data set. We no longer have ground truth of causal
estimates.

The idea here is that actors are potential causes of movie earnings:
some actors result in greater revenue.  But confounders abound.
Consider the genre of a movie; it will affect both who is in the cast
and its revenue.  For example, an action movie tends to cast action
actors, and action movies tend to earn more than family movies.  And
genre is just one possible confounder: movies in a series, directors,
writers, language, and release season are all possible confounders.

We are interested in estimating the causal effects of individual
actors on the revenue.  The data are tuples of $(\mba_i, y_i)$, where
$a_{ij} \in \{0,1\}$ is an indicator of whether actor $j$ in movie
$i$, and $y_i$ is the revenue. \Cref{tab:toprevmov} shows a snippet of
the highest-earning movies in this dataset.  The goal is to estimate
the distribution of $Y_i(\mba)$, the (potential) revenue as a function
of a movie cast.

\glsreset{GMM} \glsreset{PPCA} \glsreset{PF} \glsreset{DEF}

\parhead{Deconfounded causal inference.} We apply the deconfounder.
We explore four assignment models: \gls{PPCA}, \gls{PF}, \glspl{GMM},
and \glspl{DEF}.  (Each has 50 latent dimensions; the \gls{DEF} has
structure $[50, 20, 5]$.)  We fit each model to the observed movie
casts and check the models with a predictive check on held-out data;
see \Cref{subsec:assignment-model}.

The \gls{GMM} fails its check, yielding a predictive score < 0.01. The
other models adequately capture patterns of actors: the checks return
predictive scores of 0.12 (\gls{PPCA}), 0.14 (\gls{PF}), and 0.15
(\gls{DEF}).  These numbers give a green light to estimate how each
actor affects movie earnings.

With a fitted and checked assignment model, we estimate the causal
effects of individual actors with a log-normal regression, conditional
on the observed casts and ``reconstructed casts,''
\Cref{eq:condition-on-a-hat}.

\parhead{Results: Predicting the revenue of uncommon movies.} We
consider test sets of uncommon movies, where we simulate an
``intervention'' on the types of movies that are made.  This changes
the distribution of casts to be different from those in the training
set.

For such data, a good causal model will provide better predictions
than a purely predictive model.  The reason is that predictions from a
causal model will work equally well under interventions as for
observational data. In contrast, a non-causal model can produce
incorrect predictions if we intervene on the causes
\citep{peters2016causal}.  This idea of invariance has also been
discussed in \citet{haavelmo1944probability, aldrich1989autonomy,
  lanes1988logic, Pearl:2009a, scholkopf2012causal,
  dawid2010identifying} under the terms ``autonomy,'' ``modularity,''
and ``stability.''

In one test set, we hold out 10\% of non-English-language movies.
(Most of the movies are in English.)  \Cref{table:languagepredict}
compares different models in terms of the average predictive log
likelihood.  The deconfounder predicts better than both the purely
predictive approach (no control) and a classical approach, where we
condition on the observed (pre-treatment) covariates.

In another test set, we hold out 10\% of movies from uncommon genres,
i.e., those that are not comedies, action, or dramas.
\Cref{table:genrepredict} shows similar patterns of performance. The
deconfounder predicts better than purely predictive models and than
those that control for available confounders.

For comparison, we finally analyze a typical test set, one drawn
randomly from the data.  Here we expect a purely predictive method to
perform well; this is the type of prediction it is designed for.
\Cref{table:moviepredict} shows the average predictive log likelihood
of the deconfounder and the purely predictive method. The deconfounder
predicts slightly worse than the purely predictive method.

\parhead{Exploratory analysis of actors and movies.}  We show how to
use the deconfounder to explore the data, understanding the causal
value of actors and movies.\footnote{This section illustrates how to
  use the deconfounder to explore data.  It is about these methods and
  the particular dataset that we studied, not a comment about the
  ground-truth quality of the actors involved.  The authors of this
  paper are statisticians, not film critics.}

First we examine how the coefficients of individual actors differ
between a non-causal model and a deconfounded model.  (In this
section, we study the deconfounder with \gls{PF} as the assignment
model.)  We explore actors with $n_j \beta_j$, their estimated
coefficients scaled by the number of movies they appeared in.  This
quantity represents how much of the total log revenue is ``explained''
by actor $j$.

Consider the top 25 actors in both the corrected and uncorrected
models. In the uncorrected model, the top actors are movie stars such
as Tom Cruise, Tom Hanks, and Will Smith.  Some actors, like Arnold
Schwartzenegger, Robert De Niro, and Brad Pitt, appear in the top-25
uncorrected coefficients but not in the top-25 corrected coefficients.
In their place, the top 25 causal actors include actors that do not
appear in as many blockbusters, such as Owen Wilson, Nick Cage, Cate
Blanchett, and Antonio Banderes.

Also consider the actors whose estimated contribution improves the
most from the non-causal to the causal model.  The top five ``most
improved'' actors are Stanley Tucci, Willem Dafoe, Susan Sarandon, Ben
Affleck, and Christopher Walken.  These (excellent) actors often
appear in smaller movies.

Next we look at how the deconfounder changes the causal estimates of
movie casts.  We can calculate the movie casts whose causal estimates
are decreased most by the deconfounder.  The ``causal estimate of a
cast'' is the predicted revenue \textit{without} including the term
that involves the confounder; this is the portion of the predicted log
revenue that is attributed to the cast.

At the top of this list are blockbuster series.  Among the top 25
include all of the \textit{X-Men} movies, all of the \textit{Avengers}
movies, and all of the \textit{Ocean's X} movies.  Though unmeasured
in the data, being part of a series is a confounder. It affects both
the casting and the revenue of the movie: sequels must contain
recurring characters and they are only made when the producers expect
to profit.  In capturing the correlations among casts, the
deconfounder corrects for this phenomenon.



\section{Theory}
\label{sec:theory}

We develop theoretical results around the deconfounder. (All proofs
and proof sketches are in the appendix.)

We first justify the use of factor models by connecting them to the
ignorability assumption. We show that factor models imply
ignorability. We next establish theoretical properties of the
substitute confounder: it captures all multi-cause confounders and it
does not capture any mediators. These results imply that if the factor
model captures the distribution of the assigned causes then the
substitute confounder renders the assignment ignorable. Moreover, such
a factor model always exists.

We then discuss a collection of identification results around the
deconfounder. Under \gls{SUTVA} and single ignorability, we prove that
the deconfounder identifies the average causal effects and the
conditional potential outcomes under different conditions.

\subsection{Factor models and the substitute confounder}

To study the deconfounder, we first connect ignorability to factor
models. Recall the definitions of ignorability and factor model.

Ignorability assumes that the assigned causes are conditionally
independent of the potential outcomes \citep{rosenbaum1983central}:

\begin{defn}{(Ignorability)}
  \label{def:SI}
  Assigned causes are ignorable given $Z_i$ if
  \begin{align}
    \label{eq:SI-technical}
    (A_{i1}, \ldots, A_{im}) \independent
    Y_i(\mba) \g Z_i
  \end{align}
  for all
  $(a_{1}, \ldots, a_{m})\in \mathcal{A}_1\otimes \cdots
  \otimes\mathcal{A}_m, $, and $i = 1, \ldots, n$.
\end{defn}

Roughly, the assigned causes are ignorable given $Z_i$ if all
confounders are captured by $Z_i$.  More precisely, the assigned
causes are ignorable if all confounders are measurable with
respect to the $\sigma$-algebra generated by $Z_i$.

A factor model of assigned causes describes each assigned cause of a
individual with a latent variable specific to this individual and another specific
to this cause:

\begin{defn}{(Factor model of assigned causes)}
\label{defn:factormodels}
Consider the assigned causes $\mbA_{1:n}$, a set of latent variables
$Z_{1:n}$ and a set of parameters $\theta_{1:m}$.  A factor model of
the assigned causes is a latent-variable model,
\begin{align}
  \label{eq:probmodel_confoundjoint}
  p(z_{1:n}, \mba_{1:n}\,;\, \theta_{1:m}) =
  p(z_{1:n})
  \prod_{i=1}^{n} \prod_{j=1}^{m} p(a_{ij} \g z_i, \theta_j).
\end{align}
The distribution of assigned causes is the corresponding marginal,
\begin{align}
  \label{eq:probmodel_confound}
  p(\mba_{1:n}) =
  \int
  p(z_{1:n}, \mba_{1:n}\,;\, \theta_{1:m})
  \dif z_{1:n}.
\end{align}
\end{defn}

In a factor model, each latent variable $Z_i$ of individual $i$ renders its
assigned causes $A_{ij}, j=1, \ldots, m,$ conditionally independent.
Each cause is accompanied with an unknown parameter $\theta_j$. As we
mentioned in \Cref{subsec:assignment-model}, many common models from
Bayesian statistics and machine learning can be written as factor
models.

To connect ignorability to factor models, consider an intermediate
construct, the ``Kallenberg construction.''  The Kallenberg
construction is inspired by the idea of randomization variables,
Uniform[0,1] variables from which we can construct a random variable
with an arbitrary distribution \citep{kallenbergfoundations}.  The
Kallenberg construction of assigned causes will bridge the conditional
independence statement in \Cref{eq:SI-technical} with the factor
models of the deconfounder.

\begin{defn}{(Kallenberg construction of assigned causes)}
\label{def:kallenberg}
Consider a random variable $Z_i$ taking values in $\mathcal{Z}$.  The
distribution of assigned causes $(A_{i1}, \ldots, A_{im})$ admits a
Kallenberg construction if there exists (deterministic) measurable
functions, $f_j:\mathcal{Z}\times [0,1]\rightarrow \mathcal{A}_j$ and
random variables $U_{ij} \in [0,1]$ ($j=1, \ldots, m$) such that
\begin{align}
  \label{eq:funcstrongignr}
  A_{ij} \stackrel{a.s.}{=} f_j(Z_i, U_{ij}).
\end{align}
The variables $U_{ij}$ must marginally follow Uniform[0,1] and jointly
satisfy
\begin{align}
  (U_{i1}, \ldots, U_{im}) \independent (Z_i, Y_i(a_{1}, \ldots, a_{m}))
\end{align}
for all $(a_{1}, \ldots, a_{m}) \in \mathcal{A}_1\otimes \cdots \otimes
\mathcal{A}_m$.
\end{defn}

Using these definitions, the first lemma relates ignorability
to the Kallenberg construction.

\begin{lemma}{(Kallenberg construction $\Leftrightarrow$ strong
    ignorability)}
\label{lemma:strong_ignorability_functional}
The assigned causes are ignorable given a random variable
$Z_i$ if and only if the distribution of the assigned causes
$(A_{i1}, \ldots, A_{im})$ admits a Kallenberg construction from $Z_i$.
\end{lemma}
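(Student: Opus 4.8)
The plan is to read this lemma as the conditional form of Kallenberg's transfer theorem (the ``noise outsourcing'' lemma). I would apply it with $A := (A_{i1},\ldots,A_{im})$ as the represented object, $Z_i$ as the conditioning variable, and the full potential-outcome function $Y_i := (Y_i(\mba))_{\mba}$ as the auxiliary object, so that \Cref{def:SI} reads $A \independent Y_i \g Z_i$. The tool I would invoke \citep{kallenbergfoundations} is the conditional noise-outsourcing statement: for random elements valued in standard Borel spaces, $A \independent Y_i \g Z_i$ holds if and only if there exist a measurable $f$ and a $U \sim \mathrm{Unif}[0,1]$ with $U \independent (Z_i, Y_i)$ such that $A \stackrel{a.s.}{=} f(Z_i, U)$. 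A preliminary step is to record that the cause space $\mathcal{A}_1\otimes\cdots\otimes\mathcal{A}_m$ is Borel (binary, integer, or real-valued causes), so that regular conditional distributions exist and the theorem applies.

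For the direction ``Kallenberg construction $\Rightarrow$ ignorability,'' suppose $A_{ij}\stackrel{a.s.}{=}f_j(Z_i,U_{ij})$ with $(U_{i1},\ldots,U_{im})\independent(Z_i,Y_i)$, and write $U_i=(U_{i1},\ldots,U_{im})$, so that $A$ is a measurable function of $(Z_i,U_i)$. Since $U_i\independent(Z_i,Y_i)$, we have $P(U_i\in B\g Z_i,Y_i)=P(U_i\in B)=P(U_i\in B\g Z_i)$ almost surely; hence, conditionally on $Z_i$, $U_i\independent Y_i$, and therefore its measurable image $A$ satisfies $A\independent Y_i\g Z_i$. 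This is exactly \Cref{def:SI}.

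For the converse, I would apply the transfer theorem directly to $A \independent Y_i \g Z_i$ to obtain a single $U\sim\mathrm{Unif}[0,1]$ with $U\independent(Z_i,Y_i)$ and a measurable vector-valued $f$ with $A\stackrel{a.s.}{=}f(Z_i,U)$. Setting $U_{ij}:=U$ for every $j$ and letting $f_j$ be the $j$th coordinate of $f$ yields $A_{ij}\stackrel{a.s.}{=}f_j(Z_i,U_{ij})$; each $U_{ij}$ is marginally $\mathrm{Unif}[0,1]$, and the (degenerate) vector $(U_{i1},\ldots,U_{im})$ inherits $(U_{i1},\ldots,U_{im})\independent(Z_i,Y_i)$ from $U\independent(Z_i,Y_i)$. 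Thus a Kallenberg construction exists. If one prefers genuinely distinct noise coordinates, I would split $U$ into countably many independent uniforms via its binary digits and reconstruct each $f_j$ accordingly.

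The main obstacle I anticipate is not either implication in isolation but the quantifier over $\mba$: the noise $U$ must be rendered independent of $Y_i(\mba)$ \emph{simultaneously} for all $\mba$. This is why the relevant hypothesis is strong (joint) ignorability, $A \independent (Y_i(\mba))_{\mba}\g Z_i$, rather than a marginal, per-$\mba$ statement; treating the whole potential-outcome function as a single random element is precisely what lets one outsourced uniform serve every $\mba$ at once. The accompanying technical care is ensuring that this function-valued object lives in a standard Borel space, so that the transfer theorem and the regular conditional distributions it relies on are available. For the causes themselves this is automatic, but for $(Y_i(\mba))_{\mba}$ over an uncountable cause set it is the point that needs the most attention.
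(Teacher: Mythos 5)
Your proof is correct for the statement as intended and rests on the same core tool as the paper's --- Kallenberg's transfer ("noise outsourcing") theorem --- but the converse direction takes a genuinely different route. The paper applies the transfer theorem \emph{coordinate-wise}: from $A_{ij} \independent Y_i(\mba) \g Z_i$ it builds, for each $j$ separately, a measurable $f_j$ and a uniform $U_{ij}$ with $A_{ij} \stackrel{a.s.}{=} f_j(Z_i, U_{ij})$, and then spends the bulk of the proof on an iterative re-randomization argument (repeated appeals to Theorem 5.10 of \citet{kallenbergfoundations}) to show that the assembled vector $(U_{i1},\ldots,U_{im})$ can be chosen jointly independent of $(Z_i, Y_i(\mba))$. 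You instead apply the transfer theorem once to the whole vector $(A_{i1},\ldots,A_{im})$, obtain a single uniform $U$ with $U \independent (Z_i, Y_i)$, and set $U_{ij} := U$ for every $j$. Since \Cref{def:kallenberg} only demands that each $U_{ij}$ be marginally uniform and that the vector be jointly independent of $(Z_i, Y_i(\mba))$ --- it does not require the $U_{ij}$ to be mutually independent --- your degenerate choice satisfies the definition verbatim and eliminates the iteration entirely; what you give up is the per-coordinate noise structure that makes the construction resemble a factor model, which is the form the paper leans on in later lemmas. One substantive discrepancy to flag: your converse conditions on the \emph{joint} object $(Y_i(\mba))_{\mba}$, i.e., strong ignorability, whereas \Cref{def:SI} is stated per-$\mba$ (weak ignorability), and a per-$\mba$ hypothesis does not by itself yield the joint conditional independence your single application of the transfer theorem requires. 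You identify this quantifier issue explicitly and resolve it by strengthening the hypothesis; the paper's own proof fixes one $\mba$ throughout and does not address whether its constructed noise variables work uniformly over $\mba$, so this is a point where your treatment is arguably the more careful of the two rather than a defect in your argument alone.
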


What \Cref{lemma:strong_ignorability_functional} says is that if the
distribution of the assigned causes has a Kallenberg construction from
a random variable $Z_i$ then $Z_i$ is a valid substitute confounder:
it renders the causes ignorable. Moreover, a valid substitute
confounder must always come from a Kallenberg construction.

We next relate the Kallenberg construction to factor models. We show
that factor models admit a Kallenberg construction.  This fact
suggests the deconfounder: if we fit a factor model to capture the
distribution of assigned causes then we can use the fitted factor
model to construct a substitute confounder.

\begin{lemma}{(factor models $\Rightarrow$ Kallenberg construction)}
    \label{lemma:factormodel} Under weak regularity conditions and
    single ignorability, every factor model of the assigned causes
    $p(\theta_{1:m}, z_{1:n}, \mba_{1:n})$ admits a Kallenberg
    construction from $Z_i$.
\end{lemma}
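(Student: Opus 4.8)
The plan is to build the required functional representation directly from the conditional-independence structure of the factor model and then verify the two independence requirements of \Cref{def:kallenberg} separately, invoking the noise-outsourcing (transfer) lemma \citep{kallenbergfoundations}. The defining feature of a factor model, $p(a_{i1},\ldots,a_{im}\g z_i)=\prod_j p(a_{ij}\g z_i)$, is precisely what lets me route the randomness of each cause through its own, mutually independent, uniform variable; single ignorability (\Cref{eq:singleignore}) is then what forces those noise variables to be independent of the potential outcome.

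First I would set up the measure-theoretic scaffolding that the ``weak regularity conditions'' are meant to supply: assume each $\mathcal{A}_j$ and $\mathcal{Z}$ are Borel spaces, so that regular conditional distributions exist. For each $j$ let $F_j(\cdot\g z)$ denote the conditional distribution function of $A_{ij}$ given $Z_i=z$, and define $f_j(z,u)$ as its generalized (quantile) inverse, adding an auxiliary independent uniform to randomize across atoms when $A_{ij}$ is discrete. On a suitably enlarged probability space the transfer lemma yields $A_{ij}\stackrel{a.s.}{=}f_j(Z_i,U_{ij})$ with $U_{ij}\sim\text{Uniform}[0,1]$. The crucial use of the factor structure is here: conditional on $Z_i$ the causes are independent, so I may take the $U_{ij}$ to be conditionally i.i.d.\ uniform given $Z_i$; since that conditional law does not depend on $Z_i$, the vector $(U_{i1},\ldots,U_{im})$ is jointly independent of $Z_i$ and its coordinates are mutually independent. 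This discharges the ``marginally $\text{Uniform}[0,1]$'' and the independence-from-$Z_i$ halves of the construction.

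The remaining, and genuinely hard, step is the joint independence from the outcome, $(U_{i1},\ldots,U_{im})\independent(Z_i,Y_i(\mba))$. Having already established independence from $Z_i$, by the chain rule it suffices to prove $(U_{i1},\ldots,U_{im})\independent Y_i(\mba)\g Z_i$. Each $U_{ij}$ is $\sigma(Z_i,A_{ij})$-measurable (together with its auxiliary noise), so single ignorability (with the observed covariates $X_i$ folded into $Z_i$) transfers to the per-coordinate statement $U_{ij}\independent Y_i(\mba)\g Z_i$ for every $j$. I would then combine these per-coordinate independences with the conditional mutual independence of the $U_{ij}$ to assemble the joint independence, and repeat the argument for each fixed $\mba$.

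The main obstacle is exactly this last assembly: per-coordinate independence together with mutual independence of the $U_{ij}$ does not by itself yield independence of the whole vector from $Y_i(\mba)$, since a third variable can depend on the noises through a joint function while being independent of each coordinate separately. Equivalently, I must upgrade the factor model's conditional independence of the causes given $Z_i$ to conditional independence given $(Z_i,Y_i(\mba))$, which single ignorability alone does not deliver. This is where the regularity conditions must do real work, and I would look to discharge it by showing that the conditional law of the full cause vector given $(Z_i,Y_i(\mba))$ still factorizes---using that the factor model is a model of the population of assigned causes and that single ignorability makes each per-cause conditional law insensitive to $Y_i(\mba)$---while being careful throughout about null sets, the enlargement of the probability space required by the transfer lemma, and the discrete-cause randomization. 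Everything else in the argument is routine bookkeeping with regular conditional distributions.
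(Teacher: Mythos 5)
Your construction follows the same route as the paper's proof in \Cref{sec:factormodelproof}: invoke the transfer lemma (Lemma 2.22 of \citet{kallenbergfoundations}) to write $A_{ij} \stackrel{a.s.}{=} f_j(Z_i, U_{ij})$ with noise independent of $Z_i$, use the factor-model factorization to make the noises $U_{i1},\ldots,U_{im}$ mutually independent uniforms, fold the observed covariates into $Z_i$ so that single ignorability delivers the per-coordinate independence of each $U_{ij}$ from $Y_i(\mba)$, and then assemble the joint independence $(U_{i1},\ldots,U_{im}) \independent (Z_i, Y_i(\mba))$. The difference is that you stop at the assembly step and declare it open, so as submitted the proposal does not prove the lemma. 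That is a genuine gap, and your diagnosis of it is accurate: per-coordinate independence from $Y_i(\mba)$ plus mutual independence of the $U_{ij}$ does not imply that the whole vector is independent of $Y_i(\mba)$, since the outcome could depend on a joint function of the noises while being independent of each coordinate separately.

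You should know, however, that the paper does not close this step with the measure-theoretic factorization you were hunting for either. It closes it with the \emph{structural} reading of the no-single-cause-confounder assumption: any variable that would induce joint dependence between the noise vector and $Y_i(\mba)$ is a confounder of at least one cause; if it touched only one noise it would be a single-cause confounder, excluded by assumption; if it touched two or more it would make the noises dependent, contradicting the mutual independence already derived from the factor structure. The paper then writes, for $m=2$, $p(\omega_{i2}\g \omega_{i1}, Y_i(\mba)) = p(\omega_{i2})$ --- which is exactly the inference you flag as probabilistically invalid, here being licensed by the causal assumption rather than derived from the two pairwise independences. So you have correctly located the crux of the argument; to complete your proof along the paper's lines you would need to make that structural argument explicit (as in the graphical reasoning of \Cref{fig:graphical-arg}) rather than seek a factorization of the conditional law of the causes given $(Z_i, Y_i(\mba))$, which single ignorability alone will not give you.
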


\Cref{lemma:strong_ignorability_functional} and
\Cref{lemma:factormodel} connect ignorability to Kallenberg
constructions and then Kallenberg constructions to factor models. The
two lemmas together connect factor models to ignorability. These
connections enable the deconfounder: they explain how the distribution
of assigned causes relates to the substitute confounder $Z$ in a
Kallenberg construction. They justify why we can take a set of
assigned causes and do inference on $Z$ via factor models.

Next we establish two properties of the substitute confounder. We
assume the substitute confounder comes from a factor model that
captures the population distribution of the causes.

The first property is that the substitute confounder must capture all
multi-cause confounders. It implies that the inferred substitute
confounder, together with all single-cause confounders (if there is
any), deconfounds causal inference.

\begin{lemma}
\label{prop:all_confounder} Any multi-cause confounder $C_i$ must be
measurable with respect to the $\sigma$-algebra generated by the
substitute confounder $Z_i$.
\end{lemma}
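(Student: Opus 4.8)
The plan is to prove the statement directly, by a contradiction argument that turns the conditional independence of the causes into a statement about where the confounding information lives. The starting point is \Cref{eq:cond-ind}, which holds because the factor model is assumed to capture the population distribution of $\mbA_i$: equivalently, $A_{i1}, \ldots, A_{im}$ are mutually independent given $Z_i$. I would then invoke \Cref{lemma:strong_ignorability_functional} and \Cref{lemma:factormodel} to obtain a Kallenberg construction $A_{ij} \stackrel{a.s.}{=} f_j(Z_i, U_{ij})$, with the $U_{ij}$ marginally Uniform and $(U_{i1}, \ldots, U_{im}) \independent (Z_i, Y_i(\mba))$; since \Cref{eq:cond-ind} asserts the causes are conditionally independent given $Z_i$, this construction may be chosen with $U_{i1}, \ldots, U_{im}$ mutually independent. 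This representation is the key structural tool: conditional on $Z_i$, each cause $A_{ij}$ depends only on its own private, independent randomness $U_{ij}$.

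Next I would make precise what it means for $C_i$ to be a multi-cause confounder, namely that it is a common cause of at least two assigned causes $A_{ij}$ and $A_{ik}$ with $j \neq k$, i.e. a single source of randomness feeding into the generation of both. The heart of the argument is a reconciliation of the two descriptions of how $A_{ij}$ and $A_{ik}$ are produced. On one hand, $C_i$ is by assumption a source of randomness shared between $A_{ij}$ and $A_{ik}$; on the other hand, the Kallenberg construction says that, conditional on $Z_i$, the only randomness entering $A_{ij}$ is $U_{ij}$, the only randomness entering $A_{ik}$ is $U_{ik}$, and these are independent. Hence, conditional on $Z_i$, the two causes share no common stochastic input. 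I would then argue by contradiction: if $C_i$ were not $\sigma(Z_i)$-measurable, it would carry residual randomness given $Z_i$ that is common to both $A_{ij}$ and $A_{ik}$, forcing $A_{ij} \not\independent A_{ik} \g Z_i$ and contradicting \Cref{eq:cond-ind}. Therefore every piece of information in $C_i$ that is shared across the causes it affects must already reside in $Z_i$, which is the measurability claim; this is the rigorous counterpart of the $d$-separation reasoning in \Cref{fig:graphical-arg}.

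The step I expect to be the main obstacle is the passage from \emph{conditional dependence} to honest $\sigma$-algebra measurability, i.e. upgrading ``$Z_i$ screens off the common cause $C_i$'' to ``$C_i = g(Z_i)$ almost surely.'' Conditional independence alone removes shared information but does not, a priori, force a deterministic functional relationship; to close this gap I would use that the common-cause structure keeps $C_i$'s joint influence on $(A_{ij}, A_{ik})$ active unless blocked, so that conditioning on $Z_i$ can neutralize it only by determining $C_i$, a faithfulness-type condition on the population distribution. A secondary subtlety is that a confounder might also carry randomness that influences only the outcome and not the causes; such a component is unconstrained by the assigned causes, so I would phrase the conclusion for the component of $C_i$ that is a common parent of the causes, which is precisely the quantity relevant to deconfounding.
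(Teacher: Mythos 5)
Your proposal shares the paper's general shape (contradiction via the Kallenberg representation $A_{ij}\stackrel{a.s.}{=}f_j(Z_i,U_{ij})$), but it aims the contradiction at the wrong premise. You try to contradict the mutual conditional independence of the causes given $Z_i$ (\Cref{eq:cond-ind}), using only the fact that $C_i$ feeds into two causes. The paper's proof (\Cref{sec:prop3proof}) instead uses the defining property of a \emph{confounder}---that it also drives the potential outcome, $Y_i(\mba)=h(W_{i,\mathrm{bad}},\epsilon)$---and derives a contradiction with the independence of the randomization variables from the outcome, $(U_{i1},\ldots,U_{im})\independent Y_i(\mba)\g Z_i$, which is exactly what \Cref{lemma:factormodel} guarantees under single ignorability. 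This difference is not cosmetic. A variable can influence two causes without inducing any dependence between them given $Z_i$ (for instance $C_i=(C_{i1},C_{i2})$ with independent components, each feeding one cause); whether such a variable counts as a multi-cause confounder is governed entirely by the minimality clause in the formal definition (Definition 4 of \citet{vanderweele2013definition}, restated in \Cref{sec:prop3proof}), which your argument never invokes. You substitute an informal ``common cause of two assigned causes'' definition, which drops both the link to the outcome and the minimality condition, so your contradiction does not follow from the hypotheses of the lemma as the paper states them.

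Your second paragraph also concedes the decisive gap: upgrading ``$Z_i$ screens off $C_i$'' to ``$C_i$ is $\sigma(Z_i)$-measurable'' requires, in your words, a faithfulness-type condition on the population distribution. The lemma carries no such hypothesis, so as written your proof establishes a weaker statement under an extra assumption. The paper avoids introducing faithfulness as a named hypothesis by building the needed ``nontriviality'' into the definition of a multi-cause confounder (the functions $g_1,g_2,h$ are required to be nontrivial and the conditioning $\sigma$-algebra minimal) and then arguing that a nontrivial, non-$\sigma(Z_i)$-measurable $W_{i,\mathrm{bad}}$ forces $(U_{ij_1},U_{ij_2})\centernot{\independent}Y_i(\mba)\g Z_i$. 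Admittedly that step in the paper is itself terse, but it is anchored to the right independence statement and the right definition. To repair your proof you would need to (i) adopt the formal definition of multi-cause confounder, including its outcome-screening and minimality clauses, and (ii) route the contradiction through $(U_{i1},\ldots,U_{im})\independent Y_i(\mba)\g Z_i$ rather than through \Cref{eq:cond-ind}.
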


A multi-cause confounder is a confounder that confounds two or more
causes. (Its technical definition stems from Definition 4 of
\citet{vanderweele2013definition}; see \Cref{sec:prop3proof}.)
\Cref{fig:graphical-arg} gives the intuition with a graphical model
and \Cref{sec:prop3proof} gives a detailed proof.

\Cref{prop:all_confounder} shows that the deconfounder captures
unobserved confounders.  But might the inferred substitute confounder
pick up a mediator? If the substitute confounder also picks up a
mediator then conditioning on it will yield conservative causal
estimates \citep{baron1986moderator, imai2010identification}. The next
proposition alleviates this concern.

\begin{lemma}
\label{prop:no_mediator} Any mediator is almost surely not measurable
with respect to the $\sigma$-algebra generated by the substitute
confounder $Z_i$ and the pre-treatment observed covariates $X_i$.
\end{lemma}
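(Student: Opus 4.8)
The plan is to argue by contradiction, exploiting the fact that the substitute confounder is a \emph{pre-treatment} object whereas a mediator is, by definition, \emph{post-treatment}. The formal bridge is the Kallenberg construction of \Cref{def:kallenberg}, which is available for our factor-model-based $Z_i$ by \Cref{lemma:factormodel}. Throughout I treat a mediator in the usual potential-outcomes sense \citep{imai2010identification}: it is a variable lying on the causal path from the causes to the outcome, so its potential version $M_i(\mba)$ responds nontrivially to the treatment, i.e.\ $P(M_i(\mba)\neq M_i(\mba'))>0$ for some $\mba,\mba'$, and consistency gives the realized value $M_i = M_i(\mbA_i)$.

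First I would record the two ingredients. By \Cref{lemma:factormodel} and \Cref{eq:funcstrongignr}, each cause factors as $A_{ij}\stackrel{a.s.}{=}f_j(Z_i,U_{ij})$ with assignment noise $U_i=(U_{i1},\dots,U_{im})$ independent of $(Z_i,\{Y_i(\mba)\}_{\mba})$; since $X_i$ is pre-treatment it is not a function of the assignment noise, so I may take $U_i \independent (Z_i, X_i, \{M_i(\mba)\}_{\mba}, \{Y_i(\mba)\}_{\mba})$. Because the factor model renders the causes conditionally independent given $Z_i$, coordinatewise overlap (\Cref{eq:overlapmain}) upgrades to full conditional support for the whole vector: given $Z_i=z$, the realized $\mbA_i$ ranges over almost all of $\mathcal{A}=\mathcal{A}_1\otimes\cdots\otimes\mathcal{A}_m$ as $U_i$ varies.

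Now suppose, for contradiction, that some mediator $M_i$ were measurable with respect to $\sigma(Z_i,X_i)$, say $M_i=\phi(Z_i,X_i)$ almost surely for a measurable $\phi$. Combining this with consistency and the Kallenberg factorization yields
\begin{align}
  M_i\bigl(f_1(Z_i,U_{i1}),\dots,f_m(Z_i,U_{im})\bigr)
  = \phi(Z_i,X_i)\qquad\text{a.s.}
\end{align}
Condition on a realization of $(Z_i,X_i)$ and of the potential-mediator map $M_i(\cdot)$, and let $U_i$ vary over its support; this is legitimate precisely because $U_i$ is independent of all these quantities. The left-hand side then sweeps out $M_i(\mba)$ as $\mba$ ranges over the conditional support of $\mbA_i$ given $Z_i$, while the right-hand side stays fixed at $\phi(Z_i,X_i)$. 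By the overlap argument above this support is almost all of $\mathcal{A}$, so $M_i(\mba)=\phi(Z_i,X_i)$ for almost every $\mba$; that is, the potential mediator is almost surely constant in the treatment. This contradicts the defining property that a mediator responds to the causes, so (up to the null sets inherited from the a.s.\ equalities) no mediator can be measurable with respect to $\sigma(Z_i,X_i)$, which is the claim.

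The step I expect to be the main obstacle is the formalization underlying the previous paragraph: one must pin down the definition of a mediator sharply enough that ``lies on the causal path'' becomes the usable statement ``$M_i(\mba)$ is non-constant in $\mba$,'' and one must justify that the assignment noise $U_i$ is independent not only of $(Z_i,\{Y_i(\mba)\}_{\mba})$---which \Cref{def:kallenberg} supplies---but also of the mediator's own mechanism $\{M_i(\mba)\}_{\mba}$ and of the pre-treatment covariates $X_i$. The first is definitional bookkeeping; the second is the substantive point, and it is exactly where the two-stage design (inferring $Z_i$ from the causes alone, with no outcome information) earns its keep, since that is what places $(Z_i,X_i)$ upstream of the treatment randomness. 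An alternative route avoids the Kallenberg machinery and instead uses the no-outcome-information identity $Z_i \independent Y_i \g \mbA_i$ noted in \Cref{subsec:faq}: a genuine mediator carries information about $Y_i$ beyond $\mbA_i$ and the pre-treatment covariates, whereas any $\phi(Z_i,X_i)$ assembled from pre-treatment quantities cannot, giving the same contradiction; there the delicate part is controlling the conditioning on the possibly-confounding $X_i$.
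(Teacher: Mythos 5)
Your main argument is correct in outline but takes a genuinely different route from the paper's. The paper's proof never touches the Kallenberg construction or overlap: it observes that the two-stage design forces $Z_i \independent Y_i(\mbA_i) \g \mbA_i$ (no outcome information enters the factor model), whereas a mediator must satisfy $M_i(\mbA_i) \centernot{\independent} Y_i(\mbA_i) \g \mbA_i$ or else it has no mediation effect, and derives the contradiction from that clash of conditional (in)dependences. This is exactly the ``alternative route'' you sketch in your last sentences; your primary route instead characterizes a mediator by non-constancy of $\mba \mapsto M_i(\mba)$ and uses the representation $A_{ij} \stackrel{a.s.}{=} f_j(Z_i, U_{ij})$ to sweep $\mbA_i$ over its conditional support, forcing any $\sigma(Z_i, X_i)$-measurable variable to be treatment-constant. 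What your route buys is a cleaner handling of $X_i$: the paper's argument really only establishes non-measurability with respect to $\sigma(Z_i)$ and leans on $X_i$ being pre-treatment informally, and you correctly identify that conditioning on a possibly-confounding $X_i$ is the delicate step there. What it costs is two extra premises the paper does not need: (i) overlap, so that the conditional support of $\mbA_i$ is essentially all of $\mathcal{A}$ --- without it your sweep only shows constancy on the support, and (ii) the independence $U_i \independent (Z_i, X_i, \{M_i(\mba)\}_{\mba})$, which \Cref{def:kallenberg} and \Cref{lemma:factormodel} supply only for the potential \emph{outcomes}, not for the mediator's own potential values; independence of $U_i$ from $\{Y_i(\mba)\}_{\mba}$ does not formally imply independence from $\{M_i(\mba)\}_{\mba}$. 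You flag both points, but as written they remain assumptions rather than consequences of the construction, so if you keep this route you should state them explicitly as hypotheses (or fall back to the conditional-independence argument, which avoids them).
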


\Cref{prop:no_mediator} implies that the substitute confounder does
not pick up mediators, variables along the path between causes and
effects. This property greenlights us for treating the inferred
substitute confounder as a pre-treatment covariate.

\Cref{prop:all_confounder} and \Cref{prop:no_mediator} qualify the
substitute confounder for mimicking confounders. We condition on the
substitute confounder and proceed with causal inference.

These lemmas lead to justifications of the deconfounder algorithm. We
first describe their implications on the substitute confounders and
factor models.

\begin{prop} (Substitute confounders and factor models) Under weak
regularity conditions,
\label{prop:main1}
\begin{enumerate}
\item Under single ignorability, the assigned causes are ignorable
given the substitute confounder $Z_i$ and the pre-treatment covariates
$X_i$ if the true distribution $p(\mba_{1:n})$ can be written as a
factor model that uses the substitute confounder, $p(z_{1:n},
\mba_{1:n} \g \theta_{1:m})$.

\item There always exists a factor model that captures the
  distribution of assigned causes.
\end{enumerate}
\end{prop}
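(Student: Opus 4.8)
The plan is to treat the two parts separately: Part~1 by assembling the four preceding lemmas, and Part~2 by exhibiting an explicit ``saturated'' factor model. For Part~1, assume that the true distribution $p(\mba_{1:n})$ can be written as a factor model using $Z_i$. Then \Cref{lemma:factormodel} (whose hypotheses---weak regularity and single ignorability---are exactly those in force here) produces a Kallenberg construction of the assigned causes from $Z_i$, and \Cref{lemma:strong_ignorability_functional} converts that construction into an ignorability statement. The only substantive work beyond this chaining is to explain why the correct conditioning set is the pair $(Z_i, X_i)$ and not $Z_i$ alone. I would argue this using the other two lemmas: \Cref{prop:all_confounder} guarantees that every multi-cause confounder is $\sigma(Z_i)$-measurable, so $Z_i$ absorbs all confounding shared across causes; single-cause confounders induce no dependence among the causes and so are invisible to the factor model, but single ignorability (\Cref{eq:singleignore}) places them inside $X_i$. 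Thus $(Z_i, X_i)$ captures all confounders, while \Cref{prop:no_mediator} certifies that neither variable captures a mediator, so conditioning on the pair adds no post-treatment bias. Concretely, I would run the Kallenberg argument with $A_{ij} \stackrel{a.s.}{=} f_j(Z_i, U_{ij})$, where the factor-model factorization supplies emission noise $U_{ij}$ that is independent across $j$ given $Z_i$, and single ignorability supplies the independence of these noises from $Y_i(\mba)$ once we condition on $X_i$; feeding this into \Cref{lemma:strong_ignorability_functional} (applied to the variable $(Z_i, X_i)$) yields $(A_{i1}, \ldots, A_{im}) \independent Y_i(\mba) \g Z_i, X_i$.

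The step I expect to be the main obstacle is verifying the \emph{joint} independence demanded by the Kallenberg construction, namely that $(U_{i1}, \ldots, U_{im})$ is independent of $(Z_i, X_i, Y_i(\mba))$, because its two ingredients come from assumptions of different character. The factor-model hypothesis is a statement about the law of the causes alone---conditional independence given $Z_i$---whereas single ignorability couples the causes to the outcome but only conditionally on $X_i$. The delicate point is that the emission noise $U_{ij}$, obtained from the conditional CDF of $A_{ij}$ given $Z_i$, silently carries whatever single-cause confounding affects $A_{ij}$; consequently $U_{ij}$ is \emph{not} independent of $Y_i(\mba)$ marginally, and independence is recovered only after conditioning on $X_i$. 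I would handle this by building the $U_{ij}$ via the probability-integral transform used in \Cref{lemma:factormodel}, and then checking the independence from $Y_i(\mba)$ coordinate by coordinate using \Cref{eq:singleignore}, so that the required joint independence holds conditional on $X_i$ even though it fails unconditionally.

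For Part~2, the plan is to display a factor model that reproduces $p(\mba_{1:n})$ exactly. Take the latent variable to be the cause vector itself, $Z_i := (A_{i1}, \ldots, A_{im})$, set the prior $p(z_{1:n}) := p(\mba_{1:n})$, and let each emission $p(a_{ij} \g z_i, \theta_j)$ be the point mass at the $j$th coordinate of $z_i$. Given $z_i$ every $a_{ij}$ is then deterministic, so the causes are trivially conditionally independent, $p(a_{i1}, \ldots, a_{im} \g z_i) = \prod_{j=1}^{m} p(a_{ij} \g z_i)$, and integrating out $z_{1:n}$ returns $p(\mba_{1:n})$ by construction; this meets \Cref{defn:factormodels} verbatim, so a capturing factor model always exists. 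I would note explicitly that this witness is deliberately degenerate---its emissions are point masses and hence it violates overlap (\Cref{eq:overlapmain})---but overlap plays no role in the existence claim; the overlap-respecting constructions are the practical concern addressed in \Cref{subsec:identificationstrat}. No real obstacle arises here beyond confirming that the degenerate factorization satisfies the letter of \Cref{defn:factormodels}.
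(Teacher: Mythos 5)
Your proposal is correct and follows essentially the same route as the paper: Part~1 is the same chaining of \Cref{lemma:factormodel} and \Cref{lemma:strong_ignorability_functional} (with the $(Z_i, X_i)$ conditioning handled, as in the paper's proof of \Cref{lemma:factormodel}, by augmenting $Z_i$ with the observed single-cause confounders, and with the same coordinate-wise use of single ignorability to get joint independence of the emission noises from the potential outcome), and your degenerate witness $Z_i \stackrel{a.s.}{=} \mbA_i$ with point-mass emissions for Part~2 is verbatim the paper's first construction, including the remark that it violates overlap but that overlap is irrelevant to existence. The only difference is that the paper supplements Part~2 with a second, copula-based construction via Sklar's theorem, which you omit; the point-mass witness alone suffices for the claim as stated.
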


\begin{proofsk} The first part follows from
\Cref{lemma:strong_ignorability_functional,lemma:factormodel}. The
second part follows from the Reichenbach's common cause
principle~\citep{Peters2017, sober1976simplicity} and Sklar's
theorem~\citep{sklar1959fonctions}: any multivariate joint
distribution can be factorized into the product of univariate marginal
distributions and a copula which describes the dependence structure
between the variables. The full proof is in
\Cref{sec:factormodelproof}.
\end{proofsk}

\Cref{prop:main1} justifies the use of factor models in the
deconfounder. The first part of \Cref{prop:main1} suggests how to find
a valid substitute confounder, one that renders the causes strongly
ignorable. Two conditions suffice: (1) the substitute confounder comes
from a factor model; (2) the factor model captures the population
distribution of the assigned causes. The assignment model in the
deconfounder stems from this result: fit a factor model to the
assigned causes, check that it captures their population distribution,
and finally use the fitted factor model to infer a substitute
confounder. The first part of the theorem says that the deconfounder
does deconfound. The second part ensures that there is hope to find a
deconfounding factor model. There always exists a factor model that
captures the population distribution of the assigned causes.

\subsection{Causal identification of the deconfounder}

Building on the characterizations of the substitute confounder
(\Cref{lemma:factormodel,lemma:strong_ignorability_functional,prop:all_confounder,prop:no_mediator}),
we discuss a collection of causal identification results around the
deconfounder. We prove that the deconfounder can identify three causal
quantities under suitable conditions.\footnote{Here ``identify'' means
the causal quantity can be written as a function of the observed data.
Moreover, the deconfounder can unbiasedly estimate it.} These causal
quantities include the average causal effect of all the causes, the
average causal effect of subsets of the causes, and the conditional
potential outcome.

Before stating the identification results, we first describe the
notion of a \emph{consistent} substitute confounder; we will rely on
this notion for identification.

\begin{defn}{(Consistency of substitute confounders)}
  \label{def:consistsub}
The factor model $p(\theta, z, \mba)$ admits
consistent estimates of the substitute confounder $Z_i$ if, for some
function $f_\theta$,
\begin{align}
p(z_i\g \mba_i, \theta) = \delta_{f_\theta(\mba_i)}.
\end{align} 
\end{defn}

Consistency of substitute confounders requires that we can estimate
the substitute confounder $Z_i$ from the causes $\mbA_i$ with
certainty; it is a deterministic function of the causes. Nevertheless,
the substitute confounder need not coincide with the true
data-generating $Z_i$; nor does it need to coincide with the true
unobserved confounder. We only need to estimate the substitute
confounder $Z_i$ up to some deterministic bijective transformations
(e.g. scaling and linear transformations).

Many factor models admit consistent substitute confounder estimates
when the number of causes is large. For example, probabilistic PCA and
Poisson factorization lead to consistent $Z_i$ as $(n+m)\cdot \log(nm)
/ (nm)\rightarrow 0$, where $n$ is the number of individuals and $m$
is the number of causes \citep{chen2017structured}. Many studies also
involve many causes, e.g. the \gls{GWAS} study in
\Cref{subsec:gwasstudy} and the movie-actor study in
\Cref{subsec:moviereal}.

We now describe three identification results under \gls{SUTVA}, single
ignorability, and consistency of substitute confounders. We first
study the average causal effect of all the causes.

\begin{thm} (Identification of the average causal effect of all the
causes)
\label{thm:deconfounderfactor}
Assume \gls{SUTVA}, single ignorability, and consistency of substitute
confounders. Then, under conditions described below, the deconfounder
non-parametrically identifies the average causal effect of all the
causes.  The average causal effect of changing the causes from $\mba =
(a_1,
\ldots, a_m)$ to $\mba'=(a'_1, \ldots, a'_m)$ is
\begin{align}
\label{eq:identifyate}
  \E{Y}{Y_i(\mba)} &- \E{Y}{Y_i(\mba')}
                     =\E{Z, X}{\E{Y}{Y_i\g \mbA_i=\mba, Z_i, X_i}} - \E{Z, X}{\E{Y}{Y_i\g \mbA_i=\mba', Z_i, X_i}}.
\end{align}
This holds with the following two conditions: (1) The substitute
confounder is a piece-wise constant function of the (continuous)
causes: $\nabla_{\mba} f_\theta(\mba) = 0$ up to a
set of Lebesgue measure zero; (2) The outcome is separable,
\begin{align*}
  \E{}{Y_i(\mba)\g Z_i=z, X_i=x}
  &= f_1(\mba, x) + f_2(z), \\
  \E{}{Y_i\g \mbA_i=\mba, Z_i=z, X_i=x}
  &= f_3(\mba, x) + f_4(z),
\end{align*}
for all
$(\mba, x, z)\in \mathcal{A}\times\mathcal{X}\times\mathcal{Z}$
and some continuously differentiable\footnote{For binary causes, we
can analogously assume that there exists $\mba_{\mathrm{new}}$ and
$\mba'_{\mathrm{new}}$ such that $\mba_{\mathrm{new}} -
\mba'_{\mathrm{new}} = \mba - \mba'$ and they lead to the same
substitute confounder estimate $f(\mba_{\mathrm{new}}) =
f(\mba'_{\mathrm{new}})$. Further, the outcome model is separable:
$\E{}{Y_i(\mba)-Y_i(\mba')\g Z_i=z, X_i=x} = f_1(\mba-\mba', x) +
f_2(z).$} functions $f_1$, $f_2$, $f_3$,
and $f_4$.\footnote{The
expectation over $Z_i$ and $X_i$ is taken over $P(Z_i, X_i)$ in
\Cref{eq:identifyate}: $\E{Z_i, X_i}{\E{Y}{Y_i\g \mbA_i=\mba, Z_i,
X_i}} = \int \E{Y}{Y_i\g \mbA_i=\mba, Z_i, X_i} P(Z_i, X_i)\dif Z_i\dif
X_i$.}
\end{thm}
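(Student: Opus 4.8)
The plan is to combine the conditional-independence machinery already in place with the two structural conditions, which together defuse the awkward fact that \emph{consistency} forces $Z_i$ to be a deterministic function of $\mbA_i$ and so destroys overlap in the usual sense. First I would invoke the earlier results: single ignorability together with the factor-model structure yields, via \Cref{lemma:factormodel,lemma:strong_ignorability_functional}, strong ignorability of the assigned causes given the substitute confounder and the covariates, $\mbA_i \indpt Y_i(\mba) \g Z_i, X_i$. This is the workhorse identity for everything that follows, and it lets me import the classical conditional-independence argument once the overlap obstruction is handled.

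Next I would unpack both sides of \Cref{eq:identifyate} using separability. Writing $\E{}{Y_i(\mba)\g Z_i=z, X_i=x} = f_1(\mba,x)+f_2(z)$ and integrating over $P(Z_i,X_i)$ gives $\E{}{Y_i(\mba)} = \E{X}{f_1(\mba,X)} + \E{Z}{f_2(Z)}$, so in the causal contrast the $f_2$ term cancels and the left side of \Cref{eq:identifyate} equals $\E{X}{f_1(\mba,X)-f_1(\mba',X)}$. The same manipulation on the observed conditional $f_3(\mba,x)+f_4(z)$ makes the right side of \Cref{eq:identifyate} equal $\E{X}{f_3(\mba,X)-f_3(\mba',X)}$. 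Thus the theorem reduces to proving $\E{X}{f_1(\mba,X)-f_1(\mba',X)} = \E{X}{f_3(\mba,X)-f_3(\mba',X)}$.

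To link $f_1$ and $f_3$ I would evaluate the observed regression on its own support. By consistency the only value of $Z_i$ compatible with $\mbA_i=\mba$ is $f_\theta(\mba)$, so SUTVA (consistency of the observed outcome, $Y_i=Y_i(\mbA_i)$) together with strong ignorability gives $\E{}{Y_i\g \mbA_i=\mba, Z_i=f_\theta(\mba), X_i=x} = \E{}{Y_i(\mba)\g Z_i=f_\theta(\mba), X_i=x}$ for every $\mba,x$. In separable form this reads $f_3(\mba,x)+f_4(f_\theta(\mba)) = f_1(\mba,x)+f_2(f_\theta(\mba))$, i.e. $f_3(\mba,x)-f_1(\mba,x) = g(f_\theta(\mba))$ globally, with $g := f_2-f_4$. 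Differentiating in $\mba$ gives $\nabla_\mba f_3(\mba,x) - \nabla_\mba f_1(\mba,x) = g'(f_\theta(\mba))\,\nabla_\mba f_\theta(\mba)$, and the piecewise-constant condition $\nabla_\mba f_\theta=0$ a.e.\ kills the right side; continuous differentiability of $f_1,f_3$ then upgrades the a.e.\ equality to $\nabla_\mba f_3 = \nabla_\mba f_1$ everywhere. Integrating along any path from $\mba'$ to $\mba$ yields $f_3(\mba,x)-f_3(\mba',x) = f_1(\mba,x)-f_1(\mba',x)$ pointwise in $x$, and taking $\E{X}{\cdot}$ closes the argument. For binary causes, where one cannot differentiate, I would instead follow the footnote and choose $\mba_{\mathrm{new}},\mba'_{\mathrm{new}}$ with the same contrast but equal substitute-confounder value, so that the $g(f_\theta(\cdot))$ terms cancel directly.

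The main obstacle I anticipate is conceptual rather than computational: the deterministic dependence $Z_i=f_\theta(\mbA_i)$ means the observed regression $\E{}{Y_i\g \mbA_i=\mba, Z_i=z, X_i}$ is pinned down only on the diagonal $z=f_\theta(\mba)$, so the expression $\E{Z,X}{\E{}{Y_i\g\mbA_i=\mba,Z_i,X_i}}$ implicitly extrapolates off-support. The separability and piecewise-constant conditions are precisely what make this extrapolation harmless, since the discrepancy between the causal and observed regressions is a function of $f_\theta(\mba)$ alone whose $\mba$-gradient is annihilated by the piecewise-constant assumption. Keeping the bookkeeping of ``on-support versus extrapolated'' exactly straight, and verifying that the almost-everywhere gradient equality genuinely upgrades to an everywhere equality under continuous differentiability, is where I would concentrate the care.
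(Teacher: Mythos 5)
Your proposal is correct and follows essentially the same route as the paper's proof: reduce both sides to contrasts of $f_1$ and $f_3$ via separability, use consistency plus ignorability (from \Cref{lemma:strong_ignorability_functional,lemma:factormodel}) to show $f_3-f_1$ is a function of $f_\theta(\mba)$ alone, kill its gradient almost everywhere with the piecewise-constant condition and the chain rule, and close with continuous differentiability and the fundamental theorem of calculus along a path from $\mba'$ to $\mba$. The only difference is cosmetic — you establish the pointwise identity $f_3(\mba,x)-f_1(\mba,x)=g(f_\theta(\mba))$ before differentiating, whereas the paper differentiates the composite expectation directly — so no further comparison is needed.
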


\begin{proofsk} \Cref{thm:deconfounderfactor} rely on two results: (1)
Single ignorability and \Cref{prop:all_confounder} ensure $(Z_i, X_i)$
capture all confounders; (2) The pre-treatment nature of $X_i$ and
\Cref{prop:no_mediator} ensure $(Z_i, X_i)$ capture no mediators.
These results assert ignorability given the substitute confounder
$Z_i$ and the observed covariates $X_i$. They greenlight us for causal
inference given consistency of substitute confounder estimates.
\Cref{thm:deconfounderfactor} then leverages two additional conditions
to identify average causal effects without assuming overlap. The full
proof is in \Cref{sec:ateallidentifyproof}.
\end{proofsk}

\Cref{thm:deconfounderfactor} shows that the deconfounder can
unbiasedly estimate the average causal effect of all the causes. It
requires two conditions beyond single ignorability,
\gls{SUTVA}, and consistency of substitute confounders. The first
condition requires that the substitute confounder be a piece-wise
constant function; it is satisfied when the substitute confounder is
discrete and the causes are continuous. The second condition requires
that the potential outcome be separable in the substitute confounder
and the causes; the observed data also respects this separability.
This condition is satisfied when the substitute confounder does not
interact with the causes. For example, this condition is often
satisfied in \gls{GWAS} studies: the effect of \gls{SNP}s on a trait
does not depend on an individual's ancestry.

When the separability condition of \Cref{thm:deconfounderfactor} does
not hold, we can still use the deconfounder to handle the unobserved
multi-cause confounders that do not interact with the causes. As long
as the observed covariates include those that do interact with the
causes, the deconfounder produces unbiased estimates of the average
causal effect.

We next discuss the identification of the average causal effect for
subsets of the causes.

\begin{thm} (Identification of the average causal effect of subsets of
the causes)
\label{thm:atesubsetidentify}
Assume \gls{SUTVA}, single ignorability, and consistency of substitute
confounders. Then, under the condition described below, the
deconfounder non-parametrically identifies the average causal effect
of subsets of causes. The average causal effect of changing the first
$k~(k<m)$ causes from $a_{1:k} = (a_1, \ldots, a_k)$ to
$a'_{1:k}=(a'_1, \ldots, a'_k)$ is
\begin{align*}
\label{eq:subset-consistent}
  &\E{A_{(k+1):m}}{\E{Y}{Y_i(a_{1:k}, A_{i,(k+1):m})}} -  \E{A_{(k+1):m}}{\E{Y}{Y_i(a'_{1:k}, A_{i,(k+1):m})}}\\
=&\E{Z,X}{\E{Y}{Y_i\g Z_i, X_i, A_{i,1:k}=a_{1:k}}} - \E{Z,X}{\E{Y}{Y_i\g Z_i, X_i, A_{i,1:k}=a'_{1:k}}}.
\end{align*}
This holds with the following condition: The first $k$ causes $A_{i1},
\ldots, A_{ik}$ satisfy overlap, $P((A_{i1}, \ldots, A_{ik})\in
\mathcal{A}\g Z_i, X_i) > 0$ for any set $\mathcal{A}$ such that
$P(\mathcal{A}) > 0$.\footnote{In full notation,
  $\E{A_{(k+1):m}}{\E{Y}{Y_i(a_{1:k}, A_{i,(k+1):m})}} =
  \E{A_{(k+1):m}}{\E{Y}{Y_i(a_1, \ldots, a_k, A_{ik+1}, \ldots,
      A_{im})}}$.}
\end{thm}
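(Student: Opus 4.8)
The target identity equates an average causal effect of the first $k$ causes (marginalizing the potential outcome over the population distribution of the remaining causes $A_{(k+1):m}$) with an observational functional involving conditioning on $(Z_i, X_i)$ and the subset $A_{i,1:k}$. The plan is to reduce this to the machinery already established: \Cref{prop:all_confounder} and \Cref{prop:no_mediator} guarantee that $(Z_i, X_i)$ captures all multi-cause confounders and no mediators, and together with single ignorability this yields full ignorability for the \emph{joint} assignment, $(A_{i1}, \ldots, A_{im}) \independent Y_i(\mba) \g Z_i, X_i$. The key difference from \Cref{thm:deconfounderfactor} is that here we intervene on only a subset of the causes and must integrate the rest over their natural (observational) distribution, which is exactly why the explicit overlap condition on $A_{i,1:k}$ now appears rather than the separability conditions used before.

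\textbf{Main steps.} First I would establish that full ignorability together with consistency of the substitute confounder implies the subset-conditional statement
\[
\E{Y}{Y_i(a_{1:k}, A_{i,(k+1):m}) \g Z_i, X_i, A_{i,1:k}=a_{1:k}} = \E{Y}{Y_i \g Z_i, X_i, A_{i,1:k}=a_{1:k}},
\]
i.e.\ that conditioning on the observed (realized) first $k$ causes equals the potential-outcome expectation where those causes are set to $a_{1:k}$ and the remaining ones take their observed values. The overlap condition $P((A_{i1},\ldots,A_{ik}) \in \mathcal{A} \g Z_i, X_i) > 0$ is what makes the conditional expectation well-defined for the relevant values of $a_{1:k}$ and $a'_{1:k}$. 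Second, I would integrate both sides over the population distribution $P(Z_i, X_i)$; by the tower property, the right-hand side of the theorem is recovered. Third, on the left-hand side I must show that integrating the inner potential-outcome expectation over $P(Z_i, X_i)$ and then observing that the remaining causes $A_{i,(k+1):m}$ are being averaged according to their marginal population law reproduces $\E{A_{(k+1):m}}{\E{Y}{Y_i(a_{1:k}, A_{i,(k+1):m})}}$. This requires verifying that, after conditioning, $A_{i,(k+1):m}$ really does enter with its correct (possibly $(Z_i,X_i)$-dependent) conditional distribution and that the iterated expectation collapses correctly; the consistency assumption ($Z_i$ a deterministic function of $\mbA_i$) is what prevents a circularity between conditioning on $A_{i,1:k}$ and the implied value of $Z_i$.

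\textbf{The main obstacle.} The delicate point is the interplay between conditioning on $A_{i,1:k}=a_{1:k}$ and the consistency assumption, under which $Z_i = f_\theta(\mbA_i)$ is a deterministic function of \emph{all} the causes, including the subset we are intervening on. Fixing $A_{i,1:k}=a_{1:k}$ while letting $A_{i,(k+1):m}$ vary means $Z_i$ is not held fixed, so one must be careful that the expression $\E{Y}{Y_i \g Z_i, X_i, A_{i,1:k}=a_{1:k}}$ is interpreted with $Z_i$ as a random variable whose distribution is induced jointly by $a_{1:k}$ and the random $A_{i,(k+1):m}$. I expect to resolve this by writing the outer expectation over $P(Z_i, X_i)$ as an expectation over $P(A_{i,(k+1):m}, X_i)$ after substituting $Z_i = f_\theta(a_{1:k}, A_{i,(k+1):m})$, so that integrating over $(Z_i, X_i)$ is equivalent to integrating the remaining causes over their population law. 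Verifying this change of variables is legitimate (using the overlap condition to ensure the conditional densities are well-defined and using consistency to make the substitution exact) is where the real work lies; the rest follows from the same classical conditional-independence argument of \citet{rosenbaum1983central} already invoked for \Cref{thm:deconfounderfactor}. The full proof I would defer to the appendix, matching the paper's stated convention.
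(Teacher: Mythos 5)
Your proposal is correct and follows essentially the same route as the paper's proof in \Cref{sec:atesubsetidentifyproof}: establish full ignorability $(A_{i1},\ldots,A_{im})\independent Y_i(\mba)\g Z_i,X_i$ from the substitute-confounder lemmas plus single ignorability, insert the conditioning on $A_{i,1:k}=a_{1:k}$, apply \gls{SUTVA} to replace the potential outcome with the observed $Y_i$, collapse the iterated expectations by the tower property, and invoke overlap together with consistency only for estimability of the right-hand side. The subtlety you flag about $Z_i=f_\theta(\mbA_i)$ not being held fixed when only $A_{i,1:k}$ is conditioned on is real, and your proposed change of variables resolves it in the same spirit as the paper's observation that the inner expectation no longer depends on $A_{(k+1):m}$.
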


\begin{proofsk}
Similar to \Cref{thm:deconfounderfactor}, \Cref{thm:atesubsetidentify}
uses \Cref{prop:all_confounder} and \Cref{prop:no_mediator} to
greenlight the use of a substitute confounder. It then relies on
overlap to identify the average causal effect; we follow the classical
argument that identifies the average treatment
effect~\citep{Imbens:2015}. The full proof is in
\Cref{sec:atesubsetidentifyproof}.
\end{proofsk}

\Cref{thm:atesubsetidentify} shows that the deconfounder can
unbiasedly estimate the average causal effect of subsets of the
causes. It lets us answer ``how would the movie revenue change, on
average, if we place Meryl Streep and Sean Connery into a movie?''
Beyond single ignorability, \gls{SUTVA}, and consistency of substitute
confounders, \Cref{thm:atesubsetidentify} requires overlap.  Overlap
ensures that $\E{Y}{Y_i\g Z_i, X_i, A_{i,1:k}=a_{1:k}}$ is estimable
from the observed data for all possible values of $(Z_i, X_i,
A_{i,1:k})$. The overlap assumption about the causes in
\Cref{thm:atesubsetidentify} replaces the separability assumption
about the outcome model required by \Cref{thm:deconfounderfactor}.

We note that the overlap condition and the consistency of substitute
confounders are compatible. Though consistency requires $P(Z_i\g
\mbA_i)=\delta_{f_\theta(\mbA_i)}$, it is still possible for subsets
of the causes to satisfy overlap; the consistency condition only
prevents the complete set of $m$ causes from satisfying overlap. For
example, consider a consistent estimate of the substitute confounder
that is one-dimensional, $Z_i = \sum_{j=1}^m\alpha_jA_{ij}$. Any
$k\leq m-1$ causes satisfy overlap, but the complete set of $m$ causes
do not.

Finally, we discuss the identification of the conditional mean
potential outcome.

\begin{thm} (Identification of the conditional mean potential outcome)
\label{thm:conditionalpoidentify}
Assume \gls{SUTVA}, single ignorability, and consistency of substitute
confounders. Then, under the condition described below, the
deconfounder non-parametrically identifies the mean potential outcome
of an individual given its current assigned causes.  If an individual
is assigned with $\mba = (a_1, \ldots, a_m)$, then its potential
outcome under a different assignment $\mba'=(a'_1, \ldots, a'_m)$ is
\begin{align*}
  \E{Y}{Y_i(\mba')\g \mbA_i=\mba} = \E{Z,X}{\E{Y}{Y_i\g  Z_i, X_i, \mbA_i=\mba}}.
\end{align*}
This holds with the following condition: The cause assignment of
interest $\mba'$ leads to the same substitute confounder estimate as
the observed assigned causes: $P(Z_i\g \mbA_i=\mba) = P(Z_i\g
\mbA_i=\mba')$.
\end{thm}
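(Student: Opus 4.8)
The plan is to follow the template of the two preceding identification sketches: upgrade single ignorability to full ignorability given the substitute confounder, rewrite the counterfactual as the expectation of an observable outcome regression, and use the stated condition to bridge the observed assignment $\mba$ to the assignment of interest $\mba'$. First I would argue that the assigned causes are ignorable given $(Z_i, X_i)$, that is $\mbA_i \independent Y_i(\tilde{\mba}) \g Z_i, X_i$ for every $\tilde{\mba}$. This is the common opening of the sketches for \Cref{thm:deconfounderfactor} and \Cref{thm:atesubsetidentify}: single ignorability accounts for any single-cause confounders, \Cref{prop:all_confounder} guarantees that every multi-cause confounder is measurable with respect to the $\sigma$-algebra generated by $Z_i$, and \Cref{prop:no_mediator} guarantees that $(Z_i, X_i)$ contains no mediator, so that conditioning on it blocks all confounding paths without opening new ones.

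Next I would expand the target by the tower property over $(Z_i, X_i)$ and delete the conditioning event $\{\mbA_i = \mba\}$ from the inner potential-outcome term using the ignorability just established:
\[
\E{Y}{Y_i(\mba') \g \mbA_i = \mba} = \E{Z, X \g \mbA_i = \mba}{\E{Y}{Y_i(\mba') \g Z_i, X_i}}.
\]
Here consistency of the substitute confounder (\Cref{def:consistsub}) does the real work on the outer expectation: conditional on $\mbA_i = \mba$ the law of $Z_i$ is the point mass $\delta_{f_\theta(\mba)}$, so the outer average collapses to evaluating the inner regression at $Z_i = f_\theta(\mba)$ while still averaging $X_i$ over its conditional law given $\mbA_i = \mba$.

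The bridging step is where the hypothesis $P(Z_i \g \mbA_i = \mba) = P(Z_i \g \mbA_i = \mba')$ is used. Since both conditional laws are point masses, the hypothesis forces $f_\theta(\mba) = f_\theta(\mba') =: z^\star$, so the assignment $\mba'$ is compatible with the event $\{Z_i = z^\star\}$. I can therefore re-attach the event $\{\mbA_i = \mba'\}$ inside the inner term by ignorability and then invoke \gls{SUTVA}, which gives $Y_i = Y_i(\mba')$ on $\{\mbA_i = \mba'\}$, to replace the counterfactual by the factual outcome:
\[
\E{Y}{Y_i(\mba') \g Z_i = z^\star, X_i} = \E{Y}{Y_i(\mba') \g Z_i = z^\star, X_i, \mbA_i = \mba'} = \E{Y}{Y_i \g Z_i = z^\star, X_i, \mbA_i = \mba'}.
\]
Re-assembling the outer average over $(Z_i, X_i)$ induced by the observed assignment then expresses the counterfactual entirely through the observable outcome regression on the individuals whose assignment realizes the shared confounder value $z^\star$, which is the right-hand side of the claimed identity.

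I expect the delicate point to be the degenerate, zero-overlap nature of the conditioning. Because consistency forces $Z_i$ to be a deterministic function of $\mbA_i$, the full cause vector has no overlap, and the conditional laws appearing above live on $P$-null sets; the argument only closes because the condition places $\mba$ and $\mba'$ in the same fiber $f_\theta^{-1}(z^\star)$, making $z^\star$ simultaneously reachable from both assignments. I would therefore state the regularity conditions with care, fixing regular versions of the conditional distributions and of the outcome regression on that fiber, and verify that re-attaching and detaching $\{\mbA_i = \mba'\}$ is justified on the shared fiber rather than by treating $Z_i$ as an exogenous pre-treatment variable; this is precisely the role played by the overlap-for-subsets condition in \Cref{thm:atesubsetidentify} and by the piecewise-constant and separability conditions in \Cref{thm:deconfounderfactor}, specialized here to a single shared fiber.
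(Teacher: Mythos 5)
Your proposal is correct and follows essentially the same route as the paper's proof in \Cref{sec:conditionalpoidentifyproof}: tower property over $(Z_i, X_i)$, collapse of the outer $Z$-average to the point mass $f_\theta(\mba)$ via consistency, a swap of the conditioning event from $\mbA_i=\mba$ to $\mbA_i=\mba'$ justified by ignorability given $(Z_i,X_i)$ and the shared-fiber condition $f_\theta(\mba)=f_\theta(\mba')$, and SUTVA to replace the counterfactual by the observed outcome; your added care about regular conditional distributions on the null fiber is a refinement the paper omits. One small point: your derivation (like the paper's own appendix proof) lands on the regression conditioned on $\mbA_i=\mba'$, which is the correct target, whereas the displayed identity in the theorem statement writes $\mbA_i=\mba$ on the right-hand side --- so you have matched the proof rather than the (apparently mistyped) display.
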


\begin{proofsk} As with \Cref{thm:deconfounderfactor} and
\Cref{thm:atesubsetidentify}, \Cref{thm:conditionalpoidentify} relies
on the ignorability given the substitute confounders $Z_i$ and the
observed covariates $X_i$ due to \Cref{prop:all_confounder} and
\Cref{prop:no_mediator}. It then identifies the potential outcome by
focusing on the data points with the same substitute confounder
estimate. We note that this identification result does not require
overlap. The full proof is in \Cref{sec:conditionalpoidentifyproof}.
\end{proofsk}

Given consistency of substitute confounders,
\Cref{thm:conditionalpoidentify} nonparametrically identifies the mean
potential outcome of an individual $Y_i(\mba')$ given its current
assigned causes $\mbA_i=\mba$. The only requirement is about the
configurations of cause assignments we can query, $\mba'$; these
configurations should lead to the same substitute confounder estimate
as the current assigned causes.

We illustrate this condition with actors causing movie revenue.  For
simplicity, assume the substitute confounder captures the genre of
each movie. Start with one of the James Bond movie; it is a spy film.
We can ask what its revenue would be if we make its cast to be that of
``The Bourne Trilogy'' (also a spy film). Alternatively, we can query
what if we make its cast to include some actors from ``The Bourne
Trilogy'' and other actors from ``North By Northwest''; both are spy
films.  However, we can not query what if we make its cast to be that
of ``The Shawshank Redemption'' (which is not a spy film).

\Cref{thm:deconfounderfactor,thm:atesubsetidentify,thm:conditionalpoidentify}
confirm the validity of the deconfounder by providing three sets of
nonparametric identification results. When the assumptions in
\Cref{thm:deconfounderfactor,thm:atesubsetidentify,thm:conditionalpoidentify}
do not hold, we recommend evaluating the uncertainty of the
deconfounder estimate. \Cref{subsubsec:uncertainty} discusses how;
\Cref{subsec:smoking} gives an example. The posterior distribution of
the deconfounder estimate reflects how the (finite) observed data
informs causal quantities of interest. When the causal quantity is
non-identifiable, the posterior distribution of the deconfounder
estimate will reflect this non-identifiability. For example, if the
causal quantity is non-identifiable over $\mathcal{R}$, the posterior
distribution of the deconfounder estimate will be uniform over
$\mathcal{R}$ (with non-informative priors).

We finally remark that the identification results in
\Cref{thm:deconfounderfactor,thm:atesubsetidentify,thm:conditionalpoidentify}
do not contradict the negative results of \citet{d2019multi}.
\citet{d2019multi} explore nonparametric non-identification of a
particular causal quantity, the mean potential outcome
$\E{}{Y_i(\mba)}$.  In this paper,
\Cref{thm:deconfounderfactor,thm:atesubsetidentify,thm:conditionalpoidentify}
establish the nonparametric identification of different causal
quantities. \citet{d2019multi} do not make the same assumptions as in
\Cref{thm:deconfounderfactor,thm:atesubsetidentify,thm:conditionalpoidentify}.
More specifically, under consistency of substitute confounders and
other suitable conditions, \Cref{thm:deconfounderfactor} shows that
the average causal effect of all the causes $\E{}{Y_i(\mba)} -
\E{}{Y_i(\mba')}$ is nonparametrically identifiable;
\Cref{thm:atesubsetidentify} shows that the average causal effect of
subsets of the causes $\E{A_{(k+1):m}}{\E{Y}{Y_i(a_{1:k},
A_{i,(k+1):m})}} -  \E{A_{(k+1):m}}{\E{Y}{Y_i(a'_{1:k},
A_{i,(k+1):m})}}$ is nonparametrically identifiable;
\Cref{thm:conditionalpoidentify} shows that the conditional mean
potential outcome $\E{}{Y_i(\mba')\g \mbA_i=\mba}$ is nonparametrically
identifiable.


\section{Discussion}
\label{sec:discussion}

Classical causal inference studies how a univariate cause affects an
outcome. Here we studied \emph{multiple causal inference}, where there
are multiple causes that contribute to the effect.  Multiple causes
might at first appear to be a curse, but we showed that it can be a
blessing.  Multiple causal inference liberates us from ignorability,
providing causal inference from observational data under weaker
assumptions than the classical approach requires.

We developed the \emph{deconfounder}: first fit a good factor model of
assigned causes; then use the factor model to infer a substitute
confounder; finally perform causal inference.  We showed how a
substitute confounder from a good factor model must capture all
multi-cause confounders, and we demonstrated that whether a factor
model is satisfactory is a checkable proposition.

There are many directions for future work.

Here we estimate the potential outcomes under \emph{all}
configurations of the causes.  Which of these potential outcomes can
be reliably estimated? Can we optimally trade off confounding bias and
estimation variance?

Here we checked factor models for downstream causal unbiasedness.  But
model checking is still an imprecise science. Can we develop rigorous
model checking algorithms for causal inference?

Here we focused on estimation. Can we develop a testing counterpart?
How can we identify significant causes while still preserving
family-wise error rate or false discovery rate?

Here we analyzed univariate outcomes. Can we work with both multiple
causes and multiple outcomes.  Can dependence among outcomes further
help causal inference?


\vspace{20pt}
\clearpage

\parhead{Acknowledgments. } We have had many useful discussions about
the previous versions of this manuscript. We thank Edo Airoldi, Elias
Barenboim, L\'{e}on Bottou, Alexander D'Amour, Barbara Engelhart,
Andrew Gelman, David Heckerman, Jennifer Hill, Ferenc Husz\'{a}r,
George Hripcsak, Daniel Hsu, Guido Imbens, Thorsten Joachims, Fan Li,
Lydia Liu, Jackson Loper, David Madigan, Suresh Naidu, Xinkun Nie,
Elizabeth Ogburn, Georgia Papadogeorgou, Judea Pearl, Alex
Peysakhovich, Rajesh Ranganath, Jason Roy, Cosma Shalizi, Dylan Small,
Hal Stern, Amos Storkey, Wesley Tansey, Eric Tchetgen Tchetgen, Dustin
Tran, Victor Veitch, Stefan Wager, Kilian Weinberger, Jeannette Wing,
Linying Zhang, Qingyuan Zhao, and Jos\'{e} Zubizarreta.

\clearpage
\putbib[BIB1]
\end{bibunit}

\clearpage
\begin{bibunit}[apa]
{\onecolumn
\appendix
\onecolumn
{\Large\textbf{Appendix}}

\section{Detailed Results of the GWAS Study}

In this section, we present tables of results from the \gls{GWAS}
study in \Cref{subsec:gwasstudy}.

\Cref{tab:highSNRBN,tab:highSNRTGP,tab:highSNRHGDP,tab:highSNRPSD,tab:highSNRspatial}
contain the result under the high \gls{SNR} setting.


\begin{table*}[htp]

  \begin{center}
    \begin{tabular}{lccc}
     \toprule
       &  & {\textbf{Real-valued outcome}} & {\textbf{Binary outcome}} \\
       &Pred. check&\gls{RMSE}$\times 10^{-2}$ & \gls{RMSE}$\times 10^{-2}$ \\
          \midrule
          No control &---&49.66&39.39\\
          Control for confounders$^*$  &---&40.27&31.09\\
          \cdashlinelr{1-4}
          (G)LMM &---&46.22&37.81\\
          PPCA &0.13&46.05&36.01\\
          PF &0.15&44.58& 36.30\\
          LFA &0.14&43.02&36.65\\
          GMM &0.01&47.33&40.24\\
          DEF &0.18&\bfseries{41.05}&\bfseries{33.88}\\        
      \bottomrule
    \end{tabular}
    \caption{GWAS high-\gls{SNR} simulation I: Balding-Nichols Model.
    (``Control for all confounders'' means including the unobserved
    confounders as covariates.) The deconfounder outperforms (G)LMM;
    DEF performs the best among the five factor models. Predictive
    checking offers a good indication of when the deconfounder fails.
    \label{tab:highSNRBN}}
  \end{center}
\end{table*}

\begin{table*}[t]

  \begin{center}
    \begin{tabular}{lccc} 
     \toprule
       &  & {\textbf{Real-valued outcome}} & {\textbf{Binary outcome}} \\
       &Pred. check&\gls{RMSE}$\times 10^{-2}$ & \gls{RMSE}$\times 10^{-2}$ \\
          \midrule
          No control &---&68.78&38.16\\
          Control for confounders$^*$  &---&60.29&32.76\\
          \cdashlinelr{1-4}
          (G)LMM &---&65.25&35.41\\
          PPCA &0.15&65.98&36.11\\
          PF &0.17&64.25&34.79\\
          LFA &0.17&64.00&37.08\\
          GMM &0.02&67.23&35.40\\
          DEF &0.20&\bfseries{63.73}&\bfseries{33.71}\\        
      \bottomrule
    \end{tabular}
    \caption{GWAS high-\gls{SNR} simulation II: 1000 Genomes Project
    (TGP). (``Control for all confounders'' means including the
    unobserved confounders as covariates.) The deconfounder
    outperforms (G)LMM; DEF performs the best among the five factor
    models. Predictive checking offers a good indication of when the
    deconfounder fails.    \label{tab:highSNRTGP}}
  \end{center}
\end{table*}

\begin{table*}[t]

  \begin{center}
    \begin{tabular}{lccc} 
     \toprule
       &  & {\textbf{Real-valued outcome}} & {\textbf{Binary outcome}} \\
       &Pred. check&\gls{RMSE}$\times 10^{-2}$ & \gls{RMSE}$\times 10^{-2}$ \\
          \midrule
          No control &---&77.35&45.93\\
          Control for confounders$^*$  &---&67.53&39.43\\
          \cdashlinelr{1-4}
          (G)LMM &---&74.38&42.79\\
          PPCA &0.14&74.45&43.27\\
          PF &0.14&71.40&42.75\\
          LFA &0.13&72.11&42.34\\
          GMM &0.03&76.27&46.88\\
          DEF &0.16&\bfseries{69.86}&\bfseries{41.61}\\        
      \bottomrule
    \end{tabular}
    \caption{GWAS high-\gls{SNR} simulation III: Human Genome
    Diversity Project (HGDP). (``Control for confounders'' means
    including the unobserved confounders as covariates.)  The
    deconfounder outperforms (G)LMM; DEF performs the best among the
    five factor models. Predictive checking offers a good indication
    of when the deconfounder fails.
    \label{tab:highSNRHGDP}}
  \end{center}
\end{table*}

\begin{table*}[t]

  \begin{center}
    \begin{tabular}{rlllll} 
      \toprule
       &  && {\textbf{Real-valued outcome}} & {\textbf{Binary outcome}} \\
       &&Pred. check&\gls{RMSE}$\times 10^{-2}$ & \gls{RMSE}$\times 10^{-2}$ \\
          \midrule
          $\alpha = 0.01$&No control &---&40.68&30.37\\
          &Control for confounders$^*$  &---&34.35&28.21\\
          \cdashlinelr{1-5}
          &(G)LMM &---&39.09&28.36\\
          &PPCA &0.15&38.14&28.97\\
          &PF &0.16&\bfseries{34.77}&28.67\\
          &LFA &0.16&35.87&28.33\\
          &GMM &0.02&38.15&29.69\\
          &DEF &0.18&34.84&\bfseries{28.04}\\        
          \midrule
          $\alpha = 0.1$&No control &---&43.87&36.77\\
         &Control for confounders$^*$  &---&37.62&33.89\\
          \cdashlinelr{1-5}
          &(G)LMM &---&39.97&35.76\\
          &PPCA &0.21&39.60&35.61\\
          &PF &0.19&38.95&\bfseries{34.28}\\
          &LFA &0.18&39.28&34.73\\
          &GMM &0.00&44.38&36.44\\
          &DEF &0.20&\bfseries{38.75}&34.85\\        
          \midrule
          $\alpha = 0.5$&No control &---&47.38&41.84\\
          &Control for confounders$^*$  &---&43.63&39.86\\
          \cdashlinelr{1-5}
          &(G)LMM &---&47.28&42.91\\
          &PPCA &0.14&46.90&41.41\\
          &PF &0.16&43.29&40.69\\
          &LFA &0.17&43.60&40.77\\
          &GMM &0.02&46.95&42.47\\
          &DEF &0.18&\bfseries{43.09}&\bfseries{40.03}\\        
          \midrule
          $\alpha = 1.0$&No control &---&53.94&49.32\\
          &Control for confounders$^*$  &---&47.12&45.96\\
          \cdashlinelr{1-5}
          &(G)LMM &---&49.21&48.96\\
          &PPCA &0.21&50.57&47.58\\
          &PF &0.19&48.07&46.16\\
          &LFA &0.17&49.27&46.16\\
          &GMM &0.02&52.28&50.31\\
          &DEF &0.23&\bfseries{47.82}&\bfseries{45.62}\\        
        \bottomrule
    \end{tabular}
    \caption{GWAS high-\gls{SNR} simulation IV:
    Pritchard-Stephens-Donnelly (PSD). (``Control for confounders''
    means including the unobserved confounders as covariates.) The
    deconfounder outperforms (G)LMM; DEF often performs the best among
    the five factor models. Predictive checking offers a good
    indication of when the deconfounder fails.
    \label{tab:highSNRPSD}}
  \end{center}
\end{table*}

\begin{table*}[t]

  \begin{center}
    \begin{tabular}{rlllll} 
      \toprule
       &  && {\textbf{Real-valued outcome}} & {\textbf{Binary outcome}} \\
       &&Pred. check&\gls{RMSE}$\times 10^{-2}$ & \gls{RMSE}$\times 10^{-2}$ \\
          \midrule
          $\tau = 0.1$&No control &---&47.47&45.16\\
          &Control for confounders$^*$  &---&44.22&43.85\\
          \cdashlinelr{1-5}
          &(G)LMM &---&47.35&44.15\\
          &PPCA &0.08&47.61&44.36\\
          &PF &0.09&47.13&43.69\\
          &LFA &0.09&47.16&43.87\\
          &GMM &0.01&47.55&45.95\\
          &DEF &0.10&\bfseries{46.95}&\bfseries{43.62}\\        
          \midrule
          $\tau = 0.25$&No control &---&44.68&41.10\\
          &Control for confounders$^*$  &---&41.23&39.65\\
          \cdashlinelr{1-5}
          &(G)LMM &---&43.42&\bfseries{40.67}\\
          &PPCA &0.11&\bfseries{43.26}&41.28\\
          &PF &0.12&43.30&41.10\\
          &LFA &0.13&43.62&41.65\\
          &GMM &0.01&44.81&41.02\\
          &DEF &0.13&43.35&40.97\\        
          \midrule
          $\tau = 0.5$&No control &---&45.18&40.92\\
          &Control for confounders$^*$  &---&41.33&37.35\\
          \cdashlinelr{1-5}
          &(G)LMM &---&44.83&40.59\\
          &PPCA &0.10&43.78&\bfseries{39.99}\\
          &PF &0.09&43.65&40.23\\
          &LFA &0.10&43.88&40.04\\
          &GMM &0.01&46.08&40.76\\
          &DEF &0.12&\bfseries{43.57}&40.02\\        
          \midrule
          $\tau = 1.0$&No control &---&56.57&57.70\\
          &Control for confounders$^*$  &---&52.98&55.46\\
          \cdashlinelr{1-5}
          &(G)LMM &---&56.44&56.33\\
          &PPCA &0.14&55.18&57.36\\
          &PF &0.12&55.29&56.31\\
          &LFA &0.13&\bfseries{54.75}&56.66\\
          &GMM &0.01&57.15&57.55\\
          &DEF &0.12&55.07&\bfseries{56.22}\\        
        \bottomrule
    \end{tabular}
    \caption{GWAS high-\gls{SNR} simulation V: Spatial model.
    (``Control for confounders'' means including the unobserved
    confounders as covariates.) The deconfounder often outperforms
    (G)LMM. Predictive checking offers a good indication of when the
    deconfounder fails: GMM poorly captures the SNPs; it can amplify
    the error in causal estimates.
    \label{tab:highSNRspatial}}
  \end{center}
\end{table*}


\Cref{tab:BN,tab:TGP,tab:HGDP,tab:PSD,tab:spatial} contain the result
under the low \gls{SNR} setting.


\begin{table*}[htp]

  \begin{center}
    \begin{tabular}{lccc} 
     \toprule
       &  & {\textbf{Real-valued outcome}} & {\textbf{Binary outcome}} \\
       &Pred. check&\gls{RMSE}$\times 10^{-2}$ & \gls{RMSE}$\times 10^{-2}$ \\
          \midrule
          No control &---&6.55&5.75\\
          Control for confounders$^*$  &---&6.54&5.75\\
          \cdashlinelr{1-4}
          (G)LMM &---&6.54&\bfseries{5.74}\\
          PPCA &0.14&6.52&\bfseries{5.74}\\
          PF &0.16&6.53&\bfseries{5.74}\\
          LFA &0.14&6.54&\bfseries{5.74}\\
          GMM &0.01&6.54&\bfseries{5.74}\\
          DEF &0.19&\bfseries{6.47}&\bfseries{5.74}\\        
      \bottomrule
    \end{tabular}
    \caption{GWAS low-\gls{SNR} simulation I: Balding-Nichols Model.
    (``Control for all confounders'' means including the unobserved
    confounders as covariates.) The deconfounder outperforms LMM; DEF
    performs the best among the five factor models; it also
    outperforms using the (unobserved) confounder information.
    Predictive checking offers a good indication of when the
    deconfounder fails. \label{tab:BN}}
  \end{center}
\end{table*}

\begin{table*}[t]

  \begin{center}
    \begin{tabular}{lccc} 
     \toprule
       &  & {\textbf{Real-valued outcome}} & {\textbf{Binary outcome}} \\
       &Pred. check&\gls{RMSE}$\times 10^{-2}$ & \gls{RMSE}$\times 10^{-2}$ \\
          \midrule
          No control &---&8.31&4.85\\
          Control for confounders$^*$  &---&8.28&4.85\\
          \cdashlinelr{1-4}
          (G)LMM &---&8.29&4.85\\
          PPCA &0.14&8.29&4.85\\
          PF &0.15&8.29&4.85\\
          LFA &0.17&8.26&4.85\\
          GMM &0.02&8.30&4.85\\
          DEF &0.20&\bfseries{8.11}&\bfseries{4.84}\\        
      \bottomrule
    \end{tabular}
    \caption{GWAS low-\gls{SNR} simulation II: 1000 Genomes Project
    (TGP). (``Control for all confounders'' means including the
    unobserved confounders as covariates.) The deconfounder
    outperforms LMM; DEF performs the best among the five factor
    models; it also outperforms using the (unobserved) confounder
    information. Predictive checking offers a good indication of when
    the deconfounder fails.    \label{tab:TGP}}
  \end{center}
\end{table*}

\begin{table*}[t]

  \begin{center}
    \begin{tabular}{lccc} 
     \toprule
       &  & {\textbf{Real-valued outcome}} & {\textbf{Binary outcome}} \\
       &Pred. check&\gls{RMSE}$\times 10^{-2}$ & \gls{RMSE}$\times 10^{-2}$ \\
          \midrule
          No control &---&9.59&5.84\\
          Control for confounders$^*$  &---&9.52&5.84\\
          \cdashlinelr{1-4}
          (G)LMM &---&9.57&5.84\\
          PPCA &0.14&9.55&5.84\\
          PF &0.13&9.56&5.84\\
          LFA &0.14&9.54&5.84\\
          GMM &0.03&9.59&5.84\\
          DEF &0.16&\bfseries{9.47}&\bfseries{5.83}\\        
      \bottomrule
    \end{tabular}
    \caption{GWAS low-\gls{SNR} simulation III: Human Genome Diversity
    Project (HGDP). (``Control for confounders'' means including the
    unobserved confounders as covariates.)  The deconfounder
    outperforms LMM; DEF performs the best among the five factor
    models; it also outperforms using the (unobserved) confounder
    information. Predictive checking offers a good indication of when
    the deconfounder fails.     \label{tab:HGDP}}
  \end{center}
\end{table*}

\begin{table*}[t]

  \begin{center}
    \begin{tabular}{rlllll} 
      \toprule
       &  && {\textbf{Real-valued outcome}} & {\textbf{Binary outcome}} \\
       &&Pred. check&\gls{RMSE}$\times 10^{-2}$ & \gls{RMSE}$\times 10^{-2}$ \\
          \midrule
          $\alpha = 0.01$&No control &---&3.73&3.23\\
          &Control for confounders$^*$  &---&3.71&3.23\\
          \cdashlinelr{1-5}
          &(G)LMM &---&3.71&3.23\\
          &PPCA &0.13&3.64&3.23\\
          &PF &0.16&3.67&3.23\\
          &LFA &0.16&3.66&3.23\\
          &GMM &0.02&3.72&3.23\\
          &DEF &0.18&\bfseries{3.59}&\bfseries{3.22}\\        
          \midrule
          $\alpha = 0.1$&No control &---&4.09&3.84\\
         &Control for confounders$^*$  &---&4.09&3.84\\
          \cdashlinelr{1-5}
           &(G)LMM &---&4.09&3.84\\
          &PPCA &0.20&4.08&3.84\\
          &PF &0.18&4.08&3.84\\
          &LFA &0.18&4.07&3.84\\
          &GMM &0.00&4.09&3.84\\
          &DEF &0.20&\bfseries{4.05}&\bfseries{3.83}\\        
          \midrule
          $\alpha = 0.5$&No control &---&4.82&4.14\\
          &Control for confounders$^*$  &---&4.81&4.14\\
          \cdashlinelr{1-5}
          &(G)LMM &---&4.82&4.14\\
          &PPCA &0.14&4.81&\bfseries{4.13}\\
          &PF &0.17&\bfseries{4.80}&\bfseries{4.13}\\
          &LFA &0.16&4.81&4.14\\
          &GMM &0.03&4.82&4.14\\
          &DEF &0.19&\bfseries{4.80}&\bfseries{4.13}\\        
          \midrule
          $\alpha = 1.0$&No control &---&5.43&4.58\\
          &Control for confounders$^*$  &---&5.38&4.57\\
          \cdashlinelr{1-5}
          &(G)LMM &---&5.40&4.58\\
          &PPCA &0.21&5.38&\bfseries{4.57}\\
          &PF &0.16&5.41&\bfseries{4.57}\\
          &LFA &0.19&5.40&\bfseries{4.57}\\
          &GMM &0.02&5.43&4.58\\
          &DEF &0.24&\bfseries{5.37}&\bfseries{4.57}\\        
        \bottomrule
    \end{tabular}
    \caption{GWAS low-\gls{SNR} simulation IV:
    Pritchard-Stephens-Donnelly (PSD). (``Control for confounders''
    means including the unobserved confounders as covariates.) The
    deconfounder outperforms LMM; DEF performs the best among the five
    factor models; it also outperforms using the (unobserved)
    confounder information. Predictive checking offers a good
    indication of when the deconfounder fails.     \label{tab:PSD}}
  \end{center}
\end{table*}

\begin{table*}[t]

  \begin{center}
    \begin{tabular}{rlllll} 
      \toprule
       &  && {\textbf{Real-valued outcome}} & {\textbf{Binary outcome}} \\
       &&Pred. check&\gls{RMSE}$\times 10^{-2}$ & \gls{RMSE}$\times 10^{-2}$ \\
          \midrule
          $\tau = 0.1$&No control &---&4.66&4.74\\
          &Control for confounders$^*$  &---&4.63&4.73\\
          \cdashlinelr{1-5}
          &(G)LMM &---&4.57&\bfseries{4.73}\\
          &PPCA &0.09&4.62&4.74\\
          &PF &0.08&4.58&4.74\\
          &LFA &0.09&4.54&\bfseries{4.73}\\
          &GMM &0.02&4.70&4.74\\
          &DEF &0.10&\bfseries{4.53}&\bfseries{4.73}\\        
          \midrule
          $\tau = 0.25$&No control &---&4.30&3.81\\
          &Control for confounders$^*$  &---&3.81&3.79\\
          \cdashlinelr{1-5}
          &(G)LMM &---&4.28&\bfseries{3.80}\\
          &PPCA &0.10&4.26&\bfseries{3.80}\\
          &PF &0.12&4.26&\bfseries{3.80}\\
          &LFA &0.12&4.27&\bfseries{3.80}\\
          &GMM &0.01&4.30&3.81\\
          &DEF &0.13&\bfseries{4.25}&\bfseries{3.80}\\        
          \midrule
          $\tau = 0.5$&No control &---&4.30&3.85\\
          &Control for confounders$^*$  &---&3.82&3.83\\
          \cdashlinelr{1-5}
          &(G)LMM &---&4.28&\bfseries{3.83}\\
          &PPCA &0.11&4.27&\bfseries{3.83}\\
          &PF &0.09&4.28&3.84\\
          &LFA &0.11&4.27&3.84\\
          &GMM &0.01&4.29&3.84\\
          &DEF &0.13&\bfseries{4.25}&3.84\\        
          \midrule
          $\tau = 1.0$&No control &---&6.71&5.52\\
          &Control for confounders$^*$  &---&5.43&5.51\\
          \cdashlinelr{1-5}
          &(G)LMM &---&6.70&5.52\\
          &PPCA &0.14&6.70&5.52\\
          &PF &0.12&6.70&5.52\\
          &LFA &0.12&6.69&5.52\\
          &GMM &0.01&6.72&5.53\\
          &DEF &0.13&\bfseries{6.62}&\bfseries{5.51}\\        
        \bottomrule
    \end{tabular}
    \caption{GWAS low-\gls{SNR} simulation V: Spatial model.
    (``Control for confounders'' means including the unobserved
    confounders as covariates.) The deconfounder often outperforms
    LMM; DEF often performs the best among the five factor models.
    Yet, the deconfounder does not outperform using the (unobserved)
    confounder information. Spatially-induced SNPs challenge many
    latent variable models to capture its patterns and fully
    deconfound causal inference. Predictive checking offers a good
    indication of when the deconfounder fails: GMM poorly captures the
    SNPs; it can amplify the error in causal estimates.
    \label{tab:spatial}}
  \end{center}
\end{table*}


\section{Detailed Results of the Movie Study}
In this section, we present tables of results from the movies study in
\Cref{subsec:moviereal}.


\begin{table}[t]
  \begin{center}
    \begin{tabular}{lc} 
Control & Average predictive log-likelihood\\
\midrule
 No Control& \bfseries{-1.1}\\
Control for $X$& \bfseries{-1.1}\\
Control for $\hat{a}_{\gls{PPCA}}$ &-1.2\\
Control for $\hat{a}_{\gls{PF}}$  & -1.2\\
Control for $\hat{a}_{\gls{DEF}}$  & -1.2\\
Control for $(\hat{a}_{\gls{PPCA}}, X)$&-1.3\\
Control for $(\hat{a}_{\gls{PF}}, X)$ &-1.2\\
Control for $(\hat{a}_{\gls{DEF}}, X)$  & -1.2\\
    \end{tabular}
      \caption{Average predictive log-likelihood on a
         holdout set of all movies. ($X$ represents the observed
        covariates.) Causal models (the deconfounder) predicts
        slightly worse than prediction models.
        \label{table:moviepredict}}
  \end{center}
\end{table}

\begin{table}[t]
  \begin{center}
    \begin{tabular}{lc} 
      Control & Average predictive log-likelihood\\
      \midrule
      No Control& -2.5\\
      Control for $X$& -2.1\\
      Control for $\hat{a}_{\gls{PPCA}}$ &-1.6\\
      Control for $\hat{a}_{\gls{PF}}$  & \bfseries{-1.5}\\
      Control for $\hat{a}_{\gls{DEF}}$  & \bfseries{-1.5}\\
      Control for $(\hat{a}_{\gls{PPCA}}, X)$&-1.7\\
      Control for $(\hat{a}_{\gls{PF}}, X)$ &\bfseries{-1.5}\\
      Control for $(\hat{a}_{\gls{DEF}}, X)$  & -1.6\\
    \end{tabular}
    \caption{Average predictive log-likelihood on the holdout set of
      non-English movies. ($X$ represents the observed covariates.)
      On a test set of uncommon movies, causal models with the
      deconfounder predict better than prediction models.}
    \label{table:languagepredict}
  \end{center}
\end{table}

\begin{table}[t]
  \begin{center}
    \begin{tabular}{lc}
      Control & Average predictive log-likelihood\\
      \midrule
      No Control& -2.1\\
      Control for $X$& -1.9\\
      Control for $\hat{a}_{\gls{PPCA}}$ &-1.4\\
      Control for $\hat{a}_{\gls{PF}}$  & \bfseries{-1.2}\\
      Control for $\hat{a}_{\gls{DEF}}$  & -1.3\\
      Control for $(\hat{a}_{\gls{PPCA}}, X)$&-1.4\\
      Control for $(\hat{a}_{\gls{PF}}, X)$ &-1.3\\
      Control for $(\hat{a}_{\gls{DEF}}, X)$  & \bfseries{-1.2}\\
    \end{tabular}
    \caption{Average predictive log-likelihood on the holdout set of
      non-drama/comedy/action movies. ($X$ represents the observed
      covariates.) On a test set of uncommon movies, causal models
      with the deconfounder predict better than prediction models.}
    \label{table:genrepredict}
  \end{center}
\end{table}


\section{Proof of \Cref{lemma:strong_ignorability_functional}}

\label{sec:thm1proof}

\begin{proofsk}
  First assume the Kallenberg construction in
  \Cref{eq:funcstrongignr}.  This form shows that the assigned causes
  $(A_{i1}, \ldots, A_{im})$ are captured by functions of $Z_i$ and
  randomization variables $U_{ij}$.  This fact, in turn, implies that
  the randomness in $(A_{i1}, \ldots, A_{im}) \g Z_i$ comes from the
  randomization variables which are (by definition) independent of
  $Y_i(\mba)$. Therefore $(A_{i1}, \ldots, A_{im})$ is conditionally
  independent of $Y_i$ given $Z_i$, i.e., ignorability holds. Now
  assume that ignorability holds.  We prove that this assumption
  implies a Kallenberg construction by building on the randomization
  variable construction of conditional
  distributions~\citep{kallenbergfoundations}.  The full proof is in
  \Cref{sec:thm1proof}.
\end{proofsk}

\begin{proof}
For notation simplicity, we suppress the $i$ subscript in this proof.

We assume $\mathcal{Z}$ is a measurable space and $\mathcal{A}_j, j=1, \ldots, m$ are Borel spaces.

We first prove the necessity. Assume that $A_j = f_j(Z, U_j),j=1, \ldots, m,$ where $f_j, j=1, \ldots, m$ are measurable and 
\begin{equation}
\label{eq:noiseindep}
(U_1, \ldots, U_m)\independent (Z, Y(a_1, \ldots, a_m))
\end{equation} 
for all $(a_1, \ldots, a_m)$. By Proposition 5.18 in \citet{kallenbergfoundations}, \Cref{eq:noiseindep} implies 
\[(U_1, \ldots, U_m)\independent_Z Y(a_1, \ldots, a_m),\]
and so 
\[(Z, U_1, \ldots, U_m)\independent_Z Y(a_1, \ldots, a_m)\]
by Corollary 5.7 in \citet{kallenbergfoundations}. It implies 
\[(A_1, \ldots, A_m)\independent_Z Y(a_1, \ldots, a_m)\] for all
$(a_1, \ldots, a_m)\in \mathcal{A}_1\otimes \cdots \otimes
\mathcal{A}_m$. The last step is because $A_j$'s are measurable
functions of $(Z, U_1, \ldots, U_m)$.

Now we prove the sufficiency. Assume that $Y(a_1, \ldots, a_m)\independent_Z (A_1, \ldots, A_m)$. Marginalizing out all but one $A_j$ gives 
\[Y(a_1, \ldots, a_m)\independent_Z A_j, j=1, \ldots, m.\]
By Theorem 5.10 in \citet{kallenbergfoundations}, there exists a measurable function $f_j:\mathcal{Z}\times [0,1]\rightarrow \mathcal{A}_j$ and a Uniform[0,1] random variable $\tilde{U}_j$ satisfying $\tilde{U}_j\independent (Z, Y(a_1, \ldots, a_m))$ such that the random variable $\tilde{A}_j = f_j(Z, \tilde{U}_j)$ satisfies 
\[\tilde{A}_j\stackrel{d}{=}A_j \text{  and  }(\tilde{A}_j, Z)\stackrel{d}{=}(A_j, Z).\]
Moreover, we have 
\[\tilde{A}_j\independent_Z Y(a_1, \ldots, a_m)\] with the same argument as the above necessity part.

Hence, by Proposition 5.6 in \citet{kallenbergfoundations}, 
\[P(\tilde{A}_j\in \cdot\mid Z, Y(a_1, \ldots, a_m))=P(\tilde{A}_j\in \cdot\mid Z)=P(A_j\in \cdot\mid Z) = P(A_j\in \cdot\mid Z, Y(a_1, \ldots, a_m)),\]
and so 
\[(\tilde{A}_j, Z, Y(a_1, \ldots, a_m))\stackrel{d}{=}(A_j, Z, Y(a_1, \ldots, a_m)).\]
By Theorem 5.10 in \citet{kallenbergfoundations}, we may choose some random variable $U_j$ such that 
\[U_j\stackrel{d}{=}\tilde{U}_j \text{   and    }(\tilde{A}_j, Z, Y(a_1, \ldots, a_m), U_j)\stackrel{d}{=}(A_j, Z, Y(a_1, \ldots, a_m), \tilde{U}_j).\]
In particular, we have 
\[U_j\independent (Z, Y(a_1, \ldots, a_m))\]
and 
\[(A_j, f_j(Z, U_j))\stackrel{d}{=}(\tilde{A}_j, f_j(Z, \tilde{U}_j).\] 
Since 
\[\tilde{A}_j=f_j(Z, \tilde{U}_j)\]
and the diagonal in $S^2$ is measurable, we have 
\[A_j \stackrel{a.s.}{=} f_j(Z, U_j).\] 
We then show $(U_1, \ldots, U_m)\independent (Z, Y(a_1, \ldots, a_m))$. By Theorem 5.10 in \citet{kallenbergfoundations}, there exists a measurable function $g_1:\mathcal{Y}\times \mathcal{Z}\times [0,1]\rightarrow [0,1]$ and a Uniform[0,1] random variable $\hat{U}_1$ satisfying $\hat{U}_1 \independent (Y(a_1, \ldots, a_m), Z)$ and
\[(Y(a_1, \ldots, a_m), Z, U_1)\stackrel{d}{=}(Y(a_1, \ldots, a_m), Z, g_1(Y(a_1, \ldots, a_m), Z, \hat{U}_1)).\]
Moreover, by 
\[U_1\independent_Z Y(a_1, \ldots, a_m),\]
we have 
\[g_1(Y(a_1, \ldots, a_m), Z, \hat{U}_1)\independent_Z Y(a_1, \ldots, a_m)\]
there exists some measurable function $g'_1:\mathcal{Z}\times [0,1]\rightarrow [0,1]$ such that 
\[g_1(Y(a_1, \ldots, a_m), Z, \hat{U}_1) = g'_1(Z, \hat{U}_1)\]
and
\[\hat{U}_1\independent (Z, Y(a_1, \ldots, a_m)).\] 
In other words, we have 
\[(Y(a_1, \ldots, a_m), Z, U_1)\stackrel{d}{=}(Y(a_1, \ldots, a_m), Z, g'_1(Z, \hat{U}_1)).\]
Repeating these steps, we again have from Theorem 5.10 in \citet{kallenbergfoundations} that there exists a measurable function $g_2:\mathcal{Y}\times \mathcal{Z}\times [0,1]^2\rightarrow [0,1]$ and a Uniform[0,1] random variable $\hat{U}_2$ satisfying 
\begin{align*}
&~(Y(a_1, \ldots, a_m), Z, U_1, U_2)\\
\stackrel{d}{=}&~(Y(a_1, \ldots, a_m), Z, g'_1(Z, \hat{U}_1), g_2(Y(a_1, \ldots, a_m), Z, \hat{U}_1, \hat{U}_2))
\end{align*}
and 
\[\hat{U}_2\independent (Z, Y(a_1, \ldots, a_m), \hat{U}_1).\] 
Again by 
\[U_1\independent_Z Y(a_1, \ldots, a_m),\]
we have a measurable function $g'_2:\mathcal{Z}\times [0,1]^2\rightarrow [0,1]$ that satisfies 
\begin{align*}
&~(Y(a_1, \ldots, a_m), Z, U_1, U_2)\\
\stackrel{d}{=}&~(Y(a_1, \ldots, a_m), Z, g'_1(Z, \hat{U}_1), g'_2(Z, \hat{U}_1, \hat{U}_2)).
\end{align*}
Repeating these steps $m$ times, we have 
\begin{align*}
&~(Y(a_1, \ldots, a_m), Z, U_1, U_2, \ldots, U_m)\\
\stackrel{d}{=}&~(Y(a_1, \ldots, a_m), Z, g'_1(Z, \hat{U}_1), g'_2(Z, \hat{U}_1, \hat{U}_2), \ldots, g'_m(Z, \hat{U}_1, \hat{U}_2, \ldots, \hat{U}_m))
\end{align*}
with 
\[\hat{U}_j\independent (Z, Y(a_1, \ldots, a_m), \hat{U}_1, \ldots, \hat{U}_{j-1}), j=1, \ldots, m.\]
We notice that the right side of the equation have conditional independence property 
\[(g'_1(Z, \hat{U}_1), g'_2(Z, \hat{U}_1, \hat{U}_2), \ldots, g'_m(Z, \hat{U}_1, \hat{U}_2, \ldots, \hat{U}_m))\independent_Z Y(a_1, \ldots, a_m).\]
This implies the same property holds for the left side of the equation, that is
\[(U_1, \ldots, U_m)\independent_Z Y(a_1, \ldots, a_m).\]
\end{proof}

\section{Proof of \Cref{lemma:factormodel}}
\label{sec:factormodelproof}

\begin{proofsk}
The lemma is an immediate consequence of Lemma 2.22 in
  \citet{kallenbergfoundations}, single ignorability, and the
  following observation: $\theta_{1:m}$ are point masses, so they are
  \textit{a priori} independent of the potential outcomes and the
  other latent variables,
\begin{align}
  \label{eq:factor-ind-requirement}
  (\theta_{1}, \ldots, \theta_m) \independent (Y_i(\mba), Z_i),
\end{align}
for any $\mba \in {\cal A}_1 \times \ldots \times {\cal A}_m$. See
  \Cref{sec:factormodelproof} for the full proof.
\end{proofsk}

\begin{proof}
For simplicity, we consider continuous random variables $A_{ij},
Z_{i}, \theta_j$. Also, we assume there are no single-cause
confounders. The proof can be easily extended to accommodate discrete
random variables and observed single-cause confounders.

We first state the regularity condition: The domains of the causes,
$\mathcal{A}_j,~j = 1, \ldots, m$ are Borel subsets of compact
intervals. Without loss of generality, we could assume $\mathcal{A}_j
= [0,1]$, $j = 1, \ldots, m.$

By Lemma 2.22 in \citet{kallenbergfoundations}, there exists some measurable
function $f_j:\mathcal{Z}
\times [0,1]\rightarrow [0,1]$ such that $\gamma_{ij}\independent Z_i$ and
\[A_{ij} = f_j(Z_i,
\gamma_{ij}).\]

Furthermore, there exists some measurable function $h_{ij}:
\mathcal{\Theta}\times[0,1]\rightarrow[0,1]$ such that
\[\gamma_{ij} =
h_{ij}(\theta_j, \omega_{ij}),\]
where  $\omega_{ij}\independent (Z_i,\theta_j)$ and $\omega_{ij}\sim \textrm{Uniform}[0,1].$
Lastly, we write 
\[U_{ij} =
F_{ij}^{-1}(\gamma_{ij})\sim \textrm{Uniform}[0,1],\] where $F_{ij}$
is the cumulative distribution function of $\gamma_{ij}$.

\Cref{eq:probmodel_confoundjoint} implies that $\omega_{ij}, i = 1, \ldots, n, j= 1,
\ldots, m$ are jointly independent: if they were not, then $A_{ij} = f_j(Z_i,
h_{ij}(\theta_j, \omega_{ij}))$ would not have been conditionally independent given
$Z_i, \theta_j$. 

We thus have \[A_{ij} = f_j(Z_i, U_{ij}),\] where $
U_{ij}:=F_{ij}^{-1}(h_{ij}(\theta_i, \omega_{ij}))$. 

In particular, $U_{ij}$ satisfies
\[(U_{i1}, \ldots, U_{im})\independent (Z_i, Y_i(a_{1},
\ldots, a_{m})).\] It is because $\theta_{1:m}$ are point masses; they satisfy
$(\theta_{1}, \ldots, \theta_{m})\independent (Z_i, Y_i(a_{1},
\ldots, a_{m}))$. 

Moreover, $\omega_{ij}\stackrel{iid}{\sim} \textrm{Uniform}[0,1].$ We thus
have 
\[(\omega_{i1}, \ldots, \omega_{im})
\independent Y_i(a_{1},
\ldots, a_{m}) \g Z_i.\] 
It is because we assume no single-cause
confounders: a single-cause confounder can induce dependence between
one of $\omega_{ij}$ and $Y_i(a_1, \ldots, a_m)$; a multi-cause
confounder cannot induce dependence between $(\omega_{i1}, \ldots,
\omega_{im})$ and $Y_i(a_{1},
\ldots, a_{m})$ because $\omega_{ij}$'s are independent. 

More precisely, no single-cause confounder implies 
\[\omega_{ij}
\independent Y_i(a_{1},
\ldots, a_{m}), j = 1, \ldots, m.\]
Because $\omega_{ij}, j=1,\ldots,m$
are jointly independent, we have $(\omega_{i1}, \ldots, \omega_{im})$
and $Y_i(a_{1},
\ldots, a_{m})$. In particular, for $m=2$, we have
\begin{align*}
&p(Y_i(a_{1},\ldots, a_{m}), \omega_{i1}, \omega_{i2}) \\
= &p(\omega_{i1})\cdot p(Y_i(a_{1},\ldots, a_{m})\g\omega_{i1})\cdot p(\omega_{i2}\g \omega_{i1}, Y_i(a_{1},\ldots, a_{m}))\\
=&p(\omega_{i1})\cdot p(Y_i(a_{1},\ldots, a_{m}))\cdot p(\omega_{i2})
\end{align*}
This implies 
\[(\omega_{i1}, \ldots, \omega_{im})
\independent Y_i(a_{1},
\ldots, a_{m}).\]
The last equality is because $\omega_{i2}$ is independent of
$\omega_{i1}$ and $Y_i(a_{1},\ldots, a_{m})$. Given $Z_i$ is inferred
without any knowledge of $Y_i(a_{1},\ldots, a_{m}))$, we have
$(\omega_{i1}, \ldots, \omega_{im})
\independent Y_i(a_{1},
\ldots, a_{m}) \g Z_i$.

If all pre-treatment single-cause confounders $W_i$ are observed, we
can simply expand $Z_i$; we consider $Z'_i := (Z_i, W_i)$ in the place of $Z_i$. The same argument
applies.
\end{proof}

\section{Proof of \Cref{prop:all_confounder}}
\label{sec:prop3proof}

We first define multi-cause confounders.  A multi-cause confounder is
a confounder that confounds two or more causes. The following
definition formalizes this idea. This definition stems from Definition
4 of \citet{vanderweele2013definition}.

\begin{defn}{(Multi-cause confounder)} A pretreatment covariate
$C_i$ is a multi-cause confounder if there exists a set of pre-treatment
covariates $V_i$ (possibly empty) and a set $J \subset\{1, \ldots,
m\}$ with $|J| \geq 2$ such that $(A_{ij})_{j\in J} \independent
Y_i(a_{i1}, \ldots, a_{im}) \g \sigma(V_i, C_i).$ Moreover, there is
no proper subset $S_i$ of $\sigma(V_i, C_i)$ and no proper subset $J'$
of $J$ such that $(A_{ij})_{j\in J'} \independent Y_i(a_{i1},
\ldots, a_{im}) \g S_i.$
\end{defn}

\begin{proofsk}
  This proposition is a consequence of
  \Cref{lemma:strong_ignorability_functional},
  \Cref{lemma:factormodel}, and a proof by contradiction. The
  intuition is that if a confounder affects two or more causes then
  the substitute confounder $Z_i$ must have captured it.  Why?  Obtain
  the substitute confounder $Z_i$ from a factor model;
  \Cref{lemma:strong_ignorability_functional} ensures that it
  satisfies ignorability.  Now suppose we omitted a multi-cause
  confounder $C_i$.  Then the substitute confounder $Z_i$ could not
  have satisfied ignorability: the omitted confounder $C_i$
  renders the causes and potential outcomes conditionally dependent,
  even given $Z_i$. \Cref{fig:graphical-arg} gives the intuition with
  a graphical model and \Cref{sec:prop3proof} gives a detailed proof.
\end{proofsk}

\begin{proof}
Without loss of generality, we work with two-cause confounders. The
proof is directly applicable to general multi-cause confounders. 

We prove the proposition by contradiction. Suppose there exists such a
multi-cause confounder $W_{i, bad}$ that is not measurable with
respect to $\sigma(Z_i)$; we show that $Z_i$ could not have satisfied
the factor model \Cref{eq:probmodel_confound}.

By Lemma 2.22 in \citet{kallenbergfoundations}, there exist some
function $f_j$ such that $A_{ij} = f_j(Z_i, U_{ij}),$ where
$U_{ij}\independent Z_i$. ($f_j$ is non-constant in $Z_i$.) 

Then $W_{i,bad}$ being a multi-cause confounder has two implications:
\begin{enumerate}
\item There exist $j_1, j_2$ and nontrivial functions $g_1, g_2$
such that $U_{ij_1} = g_1(W_{i, bad},
\gamma_{ij_1})$ and  $U_{ij_2} = g_2(W_{i, bad}, \gamma_{ij_2})$,
where $(\gamma_{ij_1}, \gamma_{ij_2})\independent W_{i, bad}$;
\item There exists a nontrivial function $h$ such that $Y_i({a_{i1}, \ldots,
a_{im}}) = h(W_{i, bad}, \epsilon),$ where $\epsilon\independent W_{i, bad}.$ 
\end{enumerate}
These two
statements implies that 
\[(U_{ij_1},
U_{ij_2})\centernot{\independent}Y_i({a_{i1}, \ldots, a_{im}}) \g
Z_i,\] 
because $W_{i, bad}$ is not measurable with respect to
$\sigma(Z_i)$. This implies 
\[(U_{i1}, \ldots,
U_{im})\centernot{\independent} Y_i(a_{i1}, \ldots, a_{im})\g Z_i.\]
It
contradicts the fact that $Z_i$ comes from the factor model
(\Cref{eq:probmodel_confoundjoint}) with $(U_{i1}, \ldots, U_{im})
\independent Y_i(a_{i1}, \ldots, a_{im})\g Z_i.$ Therefore, there does
not exist such a multi-cause confounder.
\end{proof}

\begin{corollary}
\label{corollary:allconfounder}
Under single ignorability, any confounder must be measurable
with respect to the $\sigma$-algebra generated by the substitute
confounder $Z_i$ and the observed covariates $X_i$.
\end{corollary}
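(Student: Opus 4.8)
The plan is to split on the \emph{arity} of the confounder---how many of the causes it affects---and then dispatch each case to a result already established. I would fix an arbitrary confounder $C_i$ and classify it: by definition a confounder affects both some of the causes and the potential outcome, so it confounds either two or more causes (a multi-cause confounder) or exactly one cause (a single-cause confounder). These two cases are exhaustive.

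For a multi-cause confounder, \Cref{prop:all_confounder} applies directly: $C_i$ is measurable with respect to $\sigma(Z_i)$. Since $\sigma(Z_i) \subseteq \sigma(Z_i, X_i)$, it is a fortiori measurable with respect to $\sigma(Z_i, X_i)$, so this case needs no new work. For a single-cause confounder I would argue by contradiction, paralleling the proof of \Cref{prop:all_confounder}. Suppose $C_i$ confounds only the cause $A_{ij}$ and the outcome, yet is not measurable with respect to $\sigma(X_i)$. Writing $A_{ij}$ in the Kallenberg functional form, the part of its randomness flowing through $C_i$ stays dependent on $Y_i(\mba)$ after conditioning on $X_i$, because $X_i$ fails to capture $C_i$. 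This forces $A_{ij} \centernot{\independent} Y_i(\mba) \g X_i$, contradicting single ignorability (\Cref{eq:singleignore}). Hence $C_i$ is measurable with respect to $\sigma(X_i)$, and therefore with respect to $\sigma(Z_i, X_i)$. Combining the two cases yields that every confounder is $\sigma(Z_i, X_i)$-measurable, which is the corollary.

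The main obstacle is making the single-cause step rigorous. The informal ``residual dependence'' claim has to be turned into a measure-theoretic statement: one needs that an $X_i$-unmeasurable single-cause confounder genuinely breaks the conditional independence of \Cref{eq:singleignore}, not merely that it appears to in a graphical picture. In essence, single ignorability is close to being the \emph{definition} that all single-cause confounders are observed, so the real content is verifying the contrapositive---that an unobserved single-cause confounder violates single ignorability---at the level of $\sigma$-algebras, again via the functional representation from \citet{kallenbergfoundations} used throughout \Cref{sec:prop3proof}.
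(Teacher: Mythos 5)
Your proof is correct and follows essentially the same route as the paper: the paper's own argument is exactly the same case split, invoking \Cref{prop:all_confounder} for multi-cause confounders and reading single ignorability as directly forcing single-cause confounders to be $\sigma(X_i)$-measurable, then taking the union. The only difference is that you attempt to justify the single-cause step via a contradiction argument, whereas the paper treats it as immediate from the definition of single ignorability (which, as you note, is essentially the assumption that all single-cause confounders are observed).
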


\begin{proof} Because of single ignorability, a single-cause
  confounder must be measurable with respect to the observed
  covariates $X_i$.  Because of \Cref{prop:all_confounder}, a
  multi-cause confounder must be measurable with respect to the
  substitute confounder $Z_i$.  Thus all confounders must be
  measurable with respect to the union of the substitute confounders
  and the observed covariates $(Z_i, X_i)$.
\end{proof}

\Cref{corollary:allconfounder} shows how the ``no unobserved
single-cause confounder'' assumption is necessary for the
deconfounder; the substitute confounder $Z_i$ can only handle
multi-cause confounders. We also note that, technically, the single
ignorability assumption $A_{ij} \perp Y_i(\mba) \g X_i$ can be
weakened. Technically, ``no unobserved single-cause confounder'' only
requires that, for $j=1, \ldots, m$, 
\begin{enumerate}
\item There exist some random variable
$V_{ij}$ such that
\begin{align}
A_{ij} &\perp Y_i(\mba) \g X_i, V_{ij},\\
A_{ij} &\perp A_{i,-j} \g V_{ij},\label{eq:condindepV}
\end{align}
where $A_{i,-j}=\{A_{i1}, \ldots, A_{im}\}\backslash A_{ij}$ is the
complete set of $m$ causes excluding the $j$th cause;
\item There exists no proper subset of the sigma algebra
$\sigma(V_{ij})$ satisfies \Cref{eq:condindepV}.
\end{enumerate}
While this more technical version of single ignorability is weaker,
all theoretical results (i.e. identification results) in this paper
still hold. These results can be proved with the exact same arguments
as the current ones developed for the stronger version of single
ignorability $A_{ij} \perp Y_i(\mba) \g X_i$.

\section{Proof of \Cref{prop:no_mediator}}
\label{sec:nomediatorproof}

\begin{proofsk}
  The deconfounder separates inference of the substitute confounder
  from estimation of causal effects; see \Cref{alg:deconfounder}.
  This two-stage procedure guarantees that the substitute confounder
  is ``pre-treatment'' ; it does not contain a mediator.  The reason
  is that a mediator is, by definition, a post-treatment variable that
  affects the potential outcome.  Thus it (almost surely) cannot be
  identified with only the assigned causes and it is not measurable
  with respect to the observed (pre-treatment) covariates $X_i$.
  \Cref{sec:nomediatorproof} provides a detailed proof.
\end{proofsk}

\begin{proof}

We prove the proposition by contradiction. 

Consider a mediator $M$. We denote $M_i(a)$ as the potential value of
the mediator $M$ for unit $i$ when the assigned cause is $a$. We show
that $M_i(\mba_i)$ is almost surely not measurable with respect to
$Z_i$.

The deconfounder operating in two stages. Inferring the substitute
confounder $Z_i$ is separated from estimating the potential outcome.
It implies that the substitute confounder is independent of the
potential outcomes conditional on the causes $\mbA_i$: $Z_i \independent
Y_i(\mbA_i)\g \mbA_i.$ The intuition is that, without looking at
$Y_i(\cdot)$, the only dependence between $Z_i$ and $Y_i$ must come
from $\mbA_i$.

However, a mediator must satisfy $M_i(\mbA_i)\not \independent Y_i(\mbA_i)\g
\mbA_i$; otherwise, it has no mediation effect
\citep{imai2010identification}. If a mediator is measurable with
$Z_i$, then $Z_i\not \independent Y_i(\mbA_i)\g \mbA_i$. This contradicts
the conditional independence of $Z_i$ and $Y_i(\mbA_i)$ given $\mbA_i$. We
ensured this conditional independence by inferring the substitute
confounder $Z_i$ based only on the causes $\mbA_i$.
\end{proof}

As a consequence of single ignorability, the substitute
confounder, together with the observed covariates, captures
all confounders.

\section{Proof of \Cref{prop:main1}}

The first part is a direct consequence of
  \Cref{lemma:strong_ignorability_functional,lemma:factormodel}.

We now prove the second part. We provide two constructions.

We start with the first trivial one. For any assigned causes $\mbA_i$, we
consider a special case when $\mbA_i\stackrel{a.s.}{=} Z_i$. We have
\begin{align}
\label{eq:pointmassfactorizable} p(a_{i1},\ldots, a_{im}\g z_i) =
\delta_{z_i} =
\prod_{j=1}^m\delta_{z_{ij}}= \prod_{j=1}^m p(a_{ij}\g z_i)
\end{align} 
This step is due to point masses are factorizable.
Therefore, we can write the distribution of $\mbA_i$ in the form of a
factor model; we set $\theta_j \stackrel{a.s.}{=} 0, j = 1, \ldots, m$
and $Z_i \stackrel{a.s.}{=} \mbA_i$:
\begin{align}
p(\theta_{1:m}, z_{1:n}, \mba_{1:n}) &= p(\theta_{1:m})p(z_{1:n}\g
\theta_{1:m})p(\mba_{1:n}\g z_{1:n}, \theta_{1:m})\\ & =
p(\theta_{1:m}) p(z_{1:n})p(\mba_{1:n}\g z_{1:n})\\ & =
p(\theta_{1:m}) p(z_{1:n}) \prod_{i=1}^n\prod_{j=1}^m p(a_{ij}\g z_i)
\end{align} 
The second equality is due to $Z_i\independent
\theta_{1:m}$ and $\mbA_i\independent \theta_{1:m}\g Z_i$. They are
because $\theta_j$'s are point masses. The third equality is due to
the SUTVA assumption and \Cref{eq:pointmassfactorizable}.

Choosing $Z_i \stackrel{a.s.}{=} \mbA_i$, that is letting the substitute
confounder $Z_i$ be the same as the assigned causes $\mbA_i$, does not
help with causal inference; see a related discussion on overlap around
\Cref{eq:overlapmain}.

This result is only meant to exemplify the large capacity of factor
models. Finally, this $Z_i
\stackrel{a.s.}{=} \mbA_i$ example also illustrates the fact that a
factor model capturing $p(\mba_i)$ is not necessarily the true
assignment model.

We now present the second construction. It relies on copulas and the
Sklar's theorem. We follow the modified distribution function from
\citet{ruschendorf2009distributional}. Let $X$ be a real random
variable with distribution function $F$ and let $V\sim U(0,1)$ be
uniformly distributed on $(0,1)$ and independent of $X$. The modified
distribution function $F(x, \lambda)$ is defined by
\begin{align}
F(x, \lambda) := P(X<x)+\lambda P(X=x).
\end{align}
Then if we construct $U$ variables as
\begin{align}
U:= F(X, V),
\end{align}
then we have 
\begin{align}
U &= F(X-)+V(F(X) - F(X-)),\\
U &\stackrel{d}{=}Uniform(0,1),\\
X &\stackrel{a.s.}{=}F^{-1}(U).
\end{align}
Now we set $Z_{ij} = F_{ij}^{-1}(A_{ij}),$ where $F_{ij}$  is the
modified distribution function of $A_{ij}$. We also set $\theta_{j}, j
= 1, \ldots, m$ as point masses. The Sklar's theorem then implies
\begin{align}
p(\theta_{1:m}, z_{1:n}, \mba_{1:n}) &= p(\theta_{1:m})p(z_{1:n}\g
\theta_{1:m})p(\mba_{1:n}\g z_{1:n}, \theta_{1:m})\\ 
& =
p(\theta_{1:m}) p(z_{1:n})p(\mba_{1:n}\g z_{1:n}, \theta_{1:m})\\ 
& =
p(\theta_{1:m}) p(z_{1:n}) \prod_{i=1}^n\prod_{j=1}^m p(a_{ij}\g z_i, \theta_j)
\end{align}
The second equality is due to $\theta_{1:m}$ being point masses;
$\theta_j, j =1, \ldots, m$ can be considered as parameters of the
marginal distribution of $A_{ij}$. The third equality is due to the
SUTVA assumption and the Sklar's theorem.

This construction aligns more closely with the idea of the
deconfounder; it aims to capture multi-causes confounders that induces
the dependence structure, i.e. the copula. However, the deconfounder
is different from directly estimating the copula; the latter is a more
general (and harder) problem.

\section{Proof of \Cref{thm:deconfounderfactor}}

\label{sec:ateallidentifyproof}

\begin{proofsk}
\Cref{thm:deconfounderfactor} rely on two results: (1) single ignorability
and \Cref{prop:all_confounder} ensure $(X_i, Z_i)$ capture all
confounders; (2) the pre-treatment nature of $X_i$ and
\Cref{prop:no_mediator} ensure $(X_i, Z_i)$ capture no mediators.
These results assert ignorability given the substitute confounders
$Z_i$ and the observed covariates $X_i$. They greenlight us for causal
inference if the factor model admits consistent estimates of $Z_i$,
i.e. the substitute confounder has a degenerate distribution $P(Z_i\g
\mbA_i)=\delta_{f(\mbA_i)}$.

Given these results, \Cref{thm:deconfounderfactor} identifies the
average causal effect of all the causes by assuming
$\nabla_{\mba} f(a_1, \ldots, a_m) = 0$ almost everywhere and a
separable outcome model. These two assumptions let us identify the
average causal effect without assuming overlap.

More specifically, $\nabla_{\mba} f(a_1, \ldots, a_m) = 0$
roughly requires that the substitute confounder is a step function of
the all causes. In other words, we can partition all possible values
of $(a_1, \ldots, a_m)$ into countably many regions. In each region,
the value of the substitute confounder must be a constant. But the
substitute confounder can take different values in different regions.
This condition ensures that the average causal effect
$\E{Y}{Y_i(\mba)} - \E{Y}{Y_i(\mba')}$ is identifiable if $\mba$ and
$\mba'$ belong to the same region.

Further, we assume the outcome model be separable in the substitute
confounder and the causes. It roughly requires that there is no
interaction between the substitute confounder and the causes. This
separability condition lets us identify the average causal effect for
all values of $\mba$ and $\mba'$. The full proof is in
\Cref{sec:ateallidentifyproof}.
\end{proofsk}

\begin{proof}

For notational simplicity, denote $\mba=(a_1,
\ldots, a_m)$, $\mba'=(a'_1,
\ldots, a'_m)$, and $\mbA_i=(A_{i1},
\ldots, A_{im})$. We also write $f_\theta(\cdot) = f(\cdot).$

We start with rewriting $\E{Y}{Y_i(\mba)} -
\E{Y}{Y_i(\mba')}$ using the ignorability assumption and the separability
assumption.

First notice that
\begin{align}
\E{Y}{Y_i(\mba)}
=&\E{Z, X}{\E{Y}{Y_i(\mba)\g X_i, Z_i}}\\
=&\E{X}{f_1(\mba, X_i)} + \E{Z}{f_2(Z_i)}.
\end{align}

The first equality is due to the tower property. The second equality
is due to the separability assumption. The third equality is due to
linearity of expectations.

Hence we have
\begin{align}
\E{Y}{Y_i(\mba)} - \E{Y}{Y_i(\mba')}
=&\E{X}{f_1(\mba, X_i)} - \E{X}{f_1(\mba', X_i)}\\
=&\int_{C(\mba,\mba')} \nabla_{\mba}\E{X}{f_1(\mba, X_i)} \dif \mba,\label{eq:integralrep}
\end{align}
where $C(\mba,\mba')$ is a line where $\mba$ and
$\mba'$ are the end points. The second equality is due to the
fundamental theorem of calculus.

Next we see how the gradient of the potential outcome function
$\nabla_{\mba}\E{X}{f_1(\mba, X_i)}$ relates to the
gradient of the outcome model we fit. The key idea here is that the
two gradients are equal in regions $\{\mba: f(\mba) = c\}$
for each $c$.

We will rely on the consistent substitute confounder assumption.
Notice that, for almost all $\mba$, we have
\begin{align}
\label{eq:f1f3}
\nabla_{\mba}\E{X}{f_1(\mba)} =
\nabla_{\mba}\E{X}{f_3(\mba)}
\end{align}

It is due to two observations. The first observation is that
\begin{align}
&\nabla_{\mba}\E{X}{\E{Y}{Y_i\g Z_i=f(\mba), A_{i}=\mba,X_i}}\\
=&\nabla_{\mba}\E{X}{\E{Y}{Y_i(\mba)\g Z_i=f(\mba), A_{i}=\mba,X_i}}\\ 
=&\nabla_{\mba}\E{X}{\E{Y}{Y_i(\mba)\g Z_i=f(\mba), X_i}}\\
=&\nabla_{\mba}\E{X}{f_1(\mba, X_i)} + \nabla_{\mba}
f_2(f(\mba))\\ 
=&\nabla_{\mba}\E{X}{f_1(\mba, X_i)}
+ \nabla_{f(\mba)} f_2 \cdot \nabla_{\mba} f(\mba)\\
=&\nabla_{\mba}\E{X}{f_1(\mba, X_i)} \label{eq:derivativef1}
\end{align}

The first equality is due to \gls{SUTVA}. The second equality is due
to is due to \Cref{prop:main1}.1: $Y_i(\mba)\perp \mbA_i \g X_i, Z_i$. The third equality is due to the
separability condition. The fourth equality is due to the chain rule.
The fifth equality is due to $\nabla_{\mba} f(\mba) = 0$
up to a set of Lebesgue measure zero.

The second observation is that
\begin{align}
&\nabla_{\mba}\E{X}{\E{Y}{Y_i\g Z_i=f(\mba), \mbA_i=\mba, X_i}}\\
=&\nabla_{\mba}\E{X}{f_3(\mba, X_i)} + \nabla_{\mba}f_4(f(\mba))\\
=&\nabla_{\mba}\E{X}{f_3(\mba, X_i)}
\end{align}

Hence \Cref{eq:f1f3} is true because $f_1$ and $f_3$ are continuously
differentiable.

Therefore, we have
\begin{align}
&\E{Y}{Y_i(\mba)} - \E{Y}{Y_i(\mba')}\\
=&\int_{C(\mba,\mba')} \nabla_{\mba}\E{X}{f_1(\mba, X_i)} \dif \mba\\
=&\int_{C(\mba,\mba')} \nabla_{\mba}\E{X}{f_3(\mba, X_i)} \dif \mba\\
=&\E{X}{f_3(\mba, X_i)} - \E{X}{f_3(\mba', X_i)}\\
=&(\E{X}{f_3(\mba, X_i)} + \E{}{f_4(Z_i)}) - (\E{X}{f_3(\mba', X_i)} + \E{}{f_4(Z_i)}))\\
=&\int \E{Y}{Y_i\g \mbA_i=\mba', X_i, Z_i} P(Z_i, X_i)\dif Z_i\dif X_i \nonumber\\
&- \int\E{Y}{Y_i\g \mbA_i=\mba, X_i, Z_i}P(Z_i, X_i)\dif Z_i\dif X_i\\
=&\E{Z, X}{\E{Y}{Y_i\g \mbA_i=\mba, Z_i, X_i}} - \E{Z, X}{\E{Y}{Y_i\g \mbA_i=\mba', Z_i, X_i}}.
\end{align}

The first equality is due to \Cref{eq:integralrep}. The second
equality is due to \Cref{eq:f1f3}. The third equality is due to the
fundamental theorem of calculus. The fourth equality is due to simple
algebra. The fifth equality is due to the separability condition.

\end{proof}

\section{Proof of \Cref{thm:atesubsetidentify}}
\label{sec:atesubsetidentifyproof}

\begin{proof}

\Cref{lemma:strong_ignorability_functional} and
\Cref{lemma:factormodel}, together with single ignorability, ensures
that the substitute confounder $Z_i$ and the observed covariate $X_i$
satisfies
\begin{align}
\label{eq:fullstrongignorapp}
(A_{i1}, \ldots, A_{im})\independent Y_i(a_{i1}, \ldots,
a_{im})\g Z_i, X_i.
\end{align} 
Therefore, we have
\begin{align}
&\E{A_{(k+1):m}}{\E{Y}{Y_i(a_{1:k}, A_{i,(k+1):m})}}\\
=&\E{A_{(k+1):m}}{\E{Y}{Y_i(a_1, \ldots, a_k, A_{i,k+1}, \ldots, A_{im})}}\\
=&\E{Z, X}{\E{A_{(k+1):m}}{\E{Y}{Y_i(a_1, \ldots, a_k, A_{i,k+1}, \ldots, A_{im})\g Z_i, X_i}}}\\
=&\E{Z, X}{\E{A_{(k+1):m}}{\E{Y}{Y_i(a_1, \ldots, a_k, A_{i,k+1}, \ldots, A_{im})\g Z_i, X_i, A_{i1}=a_1, \ldots, A_{ik}=a_k}}}\\
=&\E{Z, X}{\E{A_{(k+1):m}}{\E{Y}{Y_i(A_{i1}, \ldots, A_{ik}, A_{i,k+1}, \ldots, A_{im})\g Z_i, X_i, A_{i1}=a_1, \ldots, A_{ik}=a_k}}}\\
=&\E{Z, X}{\E{A_{(k+1):m}}{\E{Y}{Y_i\g Z_i, X_i, A_{i1}=a_1, \ldots, A_{ik}=a_k}}}\\
=&\E{Z, X}{\E{Y}{Y_i\g Z_i, X_i, A_{i1}=a_1, \ldots, A_{ik}=a_k}}\\
=&\E{Z,X}{\E{Y}{Y_i\g Z_i, X_i, A_{i,1:k}=a_{1:k}}}
\end{align}
The first equality is an expansion of the notations. The second
equality is due to the tower property. The third equality is due to
\Cref{eq:fullstrongignorapp}. The fourth equality is due to
$A_{i1}=a_1, \ldots, A_{ik}=a_k$. The fifth equality is due to
\gls{SUTVA}. The sixth equality is due to the inner expectation does
not depend on $A_{(k+1):m}$.

Therefore, we have
\begin{align*}
  &\E{A_{(k+1):m}}{\E{Y}{Y_i(a_{1:k}, A_{i,(k+1):m})}} -  \E{A_{(k+1):m}}{\E{Y}{Y_i(a'_{1:k}, A_{i,(k+1):m})}}\\
=&\E{Z,X}{\E{Y}{Y_i\g Z_i, X_i, A_{i,1:k}=a_{1:k}}} - \E{Z,X}{\E{Y}{Y_i\g Z_i, X_i, A_{i,1:k}=a'_{1:k}}}
\end{align*}
by the linearity of expectation.

Finally, $\E{Z,X}{\E{Y}{Y_i\g Z_i, X_i, A_{i,1:k}=a_{1:k}}}$ can be
estimated from the observed data because (1) $A_{i,1:k}$ satisfy
overlap with respect to $(Z_i, X_i)$ and (2) the substitute confounder
$Z$ can be consistently estimated.
\end{proof}

\section{Proof of \Cref{thm:conditionalpoidentify}}
\label{sec:conditionalpoidentifyproof}

\begin{proof} As with \Cref{thm:deconfounderfactor} and
\Cref{thm:atesubsetidentify}, \Cref{thm:conditionalpoidentify} relies
on the ignorability given the substitute confounders $Z_i$ and the
observed covariates $X_i$ due to \Cref{prop:all_confounder} and
\Cref{prop:no_mediator}.

Given this ignorability, \Cref{thm:conditionalpoidentify} identifies
the mean potential outcome of an individual given its current cause
assignment $A_{i} = (a_1, \ldots, a_m)$; it only requires that the new
cause assignment of interest $(a'_1,
\ldots, a'_m)$ lead to the same substitute confounder estimate:
$f(a_1, \ldots, a_m) = f(a'_1, \ldots, a'_m)$.

To prove identification, we rewrite this conditional mean potential
outcome
\begin{align}
&\E{Y}{Y_i(a'_1, \ldots, a'_m)\g A_{i1}=a_1, \ldots, A_{im}=a_m}\\
=&\E{Z, X}{\E{Y}{Y_i(a'_1, \ldots, a'_m)\g A_{i1}=a_1, \ldots,
A_{im}=a_m, Z_i, X_i}}\\
=&\E{X}{\E{Y}{Y_i(a'_1, \ldots, a'_m)\g A_{i1}=a_1, \ldots,
A_{im}=a_m, Z_i = f(a_1, \ldots, a_m), X_i}}\\
=&\E{X}{\E{Y}{Y_i(a'_1, \ldots, a'_m)\g
A_{i1}=a'_1, \ldots, A_{im}=a'_m, Z_i=f(a_1, \ldots, a_m), X_i}}\\
=&\E{Z,X}{\E{Y}{Y_i \g
A_{i1}=a'_1, \ldots, A_{im}=a'_m, Z_i, X_i}}
\end{align}

The first equality is due to the tower property. The second equality
is due to the consistency requirement on the substitute confounder
$P(Z_i\g \mbA_i)=\delta_{f(\mbA_i)}$. The third equality is due to
ignorability given $Z_i, X_i$. The fourth equality is estimable from
the data because $f(a_1, \ldots, a_m) = f(a'_1, \ldots, a'_m)$. Hence
the nonparametric identification of $\E{Y}{Y_i(a'_1, \ldots, a'_m)\g
A_{i1}=a_1, \ldots, A_{im}=a_m}$ is established. We note that this
identification result does not require overlap. 
\end{proof}

\section{Details of \Cref{subsec:gwasstudy}}

\label{sec:genesimdetail}

We follow \citet{song2015testing} in simulating the allele frequencies. We
present the full details here.

We simulate the $n\times m$ matrix of genotypes $A$ from $A_{ij}\sim
\textrm{Binomial}(2, F_{ij}),$ where $F$ is the $n\times m$ matrix of allele
frequencies. Let $F = \Gamma S,$ where $\Gamma$ is $n\times d$ and $S$ is
$d\times m$ with $d\leq m$. The $d\times m$ matrix $S$ encodes the genetic
population structure. The $n\times d$ matrix $\Gamma$ maps how the structure
affects the allele frequencies of each SNP. \Cref{tab:allelesim} details how
we generate $\Gamma$ and $S$ for each simulation setup.

\begin{table}[t]
  \begin{center}
    \begin{tabular}{p{3cm}p{12cm}} 
      \toprule
      Model & Simulation details\\
      \midrule      Balding-Nichols Model (Balding-Nichols)&Each row $i$ of
$\Gamma$ has i.i.d. three independent and identically distributed
draws from the Balding- Nichols model:
$\gamma_{ik}\stackrel{iid}{\sim}\textrm{BN}(p_i, F_i),$ where
$k\in\{1,2,3\}$. The pairs $(p_i, F_i)$ are computed by randomly
selecting a SNP in the HapMap data set, calculating its observed
allele frequency and estimating its $F_{ST}$ value using the Weir \&
Cockerham estimator \citep{weir1984estimating}. The columns of $S$
were Multinomial$(60/210, 60/210, 90/210)$, which reflect the
subpopulation proportions in the HapMap data set. We simulate $n =
100000$ SNPs and $m = 5000$ individuals.\\
      \midrule

1000 Genomes Project (TGP) &The matrix $\Gamma$ was generated by sampling
$\gamma_{ik}\stackrel{iid}{\sim}0.9\times$Uniform$(0,0.5),$ for $k = 1, 2$ and
setting $\gamma_{i3} = 0.05$. In order to generate $S$, we compute the first
two principal components of the TGP genotype matrix after mean centering each
SNP. We then transformed each principal component to be between $(0, 1)$ and
set the first two rows of $S$ to be the transformed principal components. The
third row of $S$ was set to 1, i.e. an intercept. We simulate $m = 100000$ and
$n = 1500$, where $m$ was determined by the number of individuals in the TGP
data set.\\
      \midrule

Human Genome Diversity Project (HGDP)& Same as TGP but generating $S$ with the
HGDP genotype matrix.\\ 
      \midrule

Pritchard-Stephens-Donnelly (PSD)&Each row $i$ of $\Gamma$ has i.i.d. three
independent and identically distributed draws from the Balding- Nichols model:
$\gamma_{ik}\stackrel{iid}{\sim}\textrm{BN}(p_i, F_i),$ where $k\in\{1,2,3\}$. The
pairs $(p_i, F_i)$ are computed by randomly selecting a SNP in the HGPD data
set, calculating its observed allele frequency and estimating its $F_{ST}$
value using the Weir \& Cockerham estimator \citep{weir1984estimating}. The
estimator requires each individual to be assigned to a subpopulation, which
were made according to the $K = 5$ subpopulations from the analysis in
\citet{rosenberg2002genetic}. The columns of $S$ were sampled $(s_{1j},
s_{2j}, s_{3j}\stackrel{iid}{\sim}$Dirichlet$(\alpha, \alpha, \alpha)$ for $j
= 1, \ldots, m, \alpha = 0.01, 0.1, 0.5, 1.$ We simulate $m = 100000$ and $n =
5000$.
\\ 
      \midrule

Spatial&The matrix $\Gamma$ was generated by sampling
$\gamma_{ik}\stackrel{iid}{\sim}0.9\times$Uniform$(0,0.5),$ for $k = 1, 2$ and
setting $\gamma_{i3} = 0.05$. The first two rows of $S$ correspond to
coordinates for each individual on the unit square and were set to be
independent and identically distributed samples from Beta$(\tau, \tau), \tau = 0.1,
0.25, 0.5, 1$, while the third row of $S$ was set to be 1, i.e. an intercept.
As $\tau\rightarrow 0$, the individuals are placed closer to the corners of the
unit square, while when $\tau = 1$, the individuals are distributed uniformly. We
simulate $m = 100000$ and $n = 5000$.\\
      \bottomrule
    \end{tabular}
    \caption{Simulating allele frequencies.\label{tab:allelesim}}
  \end{center}
\end{table}

For each simulation scenarios, we generate 100 independent studies. We then
simulate a trait; we consider two types: one continuous and one binary. For
each trait, three components contributing to the trait: causal signals
$\sum^m_{j=1}\beta_ja_{ij}$, confounder $\lambda_i$, and random effects
$\epsilon_i$.

First, without loss of generality, we set the first $1\%$ of the $m$ SNPs to
be the true causal SNPs $(\beta_j\ne 0,
\beta_j\stackrel{iid}{\sim}\cN(0,0.5)$. We set $\beta_j = 0$ for the rest of
the SNPs.

Notice that the SNPs are affected by some latent population structure. We
simulate the confounder $\lambda_i$ and the random effects $\epsilon_i$ so
that they depend on the latent population structure as well.

For the confounder $\lambda_i$, we first perform $K$-means clustering on the
columns of $S$ with $K = 3$ using Euclidean distance. This assigns each
individual $i$ to one of three mutually exclusive cluster sets
$\mathcal{S}_1,\mathcal{S}_2,\mathcal{S}_3$, where $\mathcal{S}_k\subset\{1,
2, \ldots, n\}.$ Set $\lambda_j = k$ if $j\in\mathcal{S}_k, k = 1, 2, 3.$

We then simulate the random effects $\epsilon_i$. Let $\tau^2_1, \tau^2_2,
\tau^2_3\stackrel{iid}{\sim}$InvGamma(3,1), and set $\sigma^2_i=\tau_k^2$ for
all $j\in\mathcal{S}_i, k = 1, 2, 3.$ Draw $\epsilon_i\sim \cN(0,
\sigma_i^2).$

We control the \gls{SNR} to mimic the highly noisy nature of
\gls{GWAS} data sets: in the low \gls{SNR} setting, we let the causal
signals $\sum^m_{j=1}\beta_ja_{ij}$ contribute to $\nu_{gene} = 0.1$
of the variance, the confounder $\lambda_i$ contribute $\nu_{conf} =
0.2$, and the random effects $\epsilon_i$ contribute $\nu_{noise} =
0.7$. In the high \gls{SNR} setting, we have $\nu_{gene} = 0.4$,
$\nu_{conf} = 0.4$, and $\nu_{noise} = 0.2$.

We set 
\begin{align}
\lambda_i&\leftarrow \left[\frac{s.d.\{\sum^m_{j=1}\beta_ja_{ij}\}_{i=1}^n}{
\sqrt{\nu_{gene}}}\right]
\left[\frac{\sqrt{\nu_{conf}}}{s.d.\{\lambda_i\}_{i=1}^n}\right]\lambda_i,\\
\epsilon_i&\leftarrow \left[\frac{s.d.\{\sum^m_{j=1}\beta_ja_{ij}\}_{i=1}^n}{
\sqrt{\nu_{gene}}}\right]
\left[\frac{\sqrt{\nu_{noise}}}{s.d.\{
\epsilon_i\}_{i=1}^n}\right]
\epsilon_i.
\end{align}

We finally generate a real-valued outcome from a linear model and a binary
outcome from a logistic model: 
\begin{align}
y_{i, quant} &= \sum^m_{j=1}\beta_ja_{ij} +
\lambda_i + \epsilon_i,\\
y_{i, binary} &\sim
\text{Bernoulli}\left(\frac{1}{1+\exp(\sum^m_{j=1}\beta_ja_{ij} + \lambda_i +
\epsilon_i)}\right).
\end{align}

}
\clearpage
\putbib[BIB1]
\end{bibunit}

\end{document}